\colorlet{citeblue}{blue!50!black}
\colorlet{linkred}{red!50!black}
\newcommand{\Permutations}{\mathcal{S}}
\newcommand{\Pb}{\mathcal{P}} %
\newcommand{\dkm}{d_{\text{KR}}} %
\newcommand{\muhat}{\hat{\mu}}
\newcommand{\convmode}[1]{\overset{#1}{\longrightarrow}}
\newcommand{\Pone}{\mathcal{P}_1} %
\newcommand{\risk}{\mathcal{R}} %
\newcommand{\evalop}{\mathcal{E}}
\newcommand{\weightop}{\mathcal{W}}
\newcommand{\weightedevalop}{\mathcal{D}}
\newcommand{\normop}{\mathcal{N}}
\newcommand{\inputset}[1]{\mathcal{#1}}
\newcommand{\Hpre}[1]{H^\text{pre}_{#1}}
\newcommand{\N}{\mathbb{N}} %
\newcommand{\R}{\mathbb{R}} %
\newcommand{\Rnn}{\mathbb{R}_{\geq 0}} %
\newcommand{\Rp}{\mathbb{R}_{>0}} %
\newcommand{\Rx}{\mathbb{R} \cup \{ \infty \}} %
\newcommand{\Np}{\mathbb{N}_+} %
\newcommand{\argmin}{\mathrm{argmin}}
\newcommand{\defm}[1]{\emph{#1}}
\theoremstyle{definition}
\newtheorem{defn}{Definition}[section]
\newtheorem{proposition}[defn]{Proposition}
\newtheorem{lemma}[defn]{Lemma}
\newtheorem{remark}[defn]{Remark}
\newtheorem{theorem}[defn]{Theorem}
\newtheorem{assumption}[defn]{Assumption}
\newtheorem*{theorem*}{Theorem}
\newtheorem*{definition*}{Definition}
\title{On kernel-based statistical learning in the mean field limit}
\author{   Christian Fiedler$^1$ \quad Michael Herty$^2$ \quad Sebastian Trimpe$^3$ \vspace{1em}\\ 
  $^{1,3}$Institute for Data Science in Mechanical Engineering (DSME)\\RWTH Aachen University, Aachen, Germany \\
  \texttt{\{fiedler,trimpe\}@dsme.rwth-aachen.de} \vspace{0.5em} \\
  $^2$ Institute for Geometry and Practical Mathematics (IGPM)\\
  RWTH Aachen University, Aachen, Germany \\
 \texttt{herty@igpm.rwth-aachen.de} 
}
\begin{document}
\maketitle

\begin{abstract}
In many applications of machine learning, a large number of variables are considered. Motivated by machine learning of interacting particle systems, we consider the situation when the number of input variables goes to infinity. First, we continue the recent investigation of the mean field limit of kernels and their reproducing kernel Hilbert spaces, completing the existing theory. Next, we provide results relevant for approximation with such kernels in the mean field limit, including a representer theorem. Finally, we use these kernels in the context of statistical learning in the mean field limit, focusing on Support Vector Machines. In particular, we show mean field convergence of empirical and infinite-sample solutions as well as the convergence of the corresponding risks.
On the one hand, our results establish rigorous mean field limits in the context of kernel methods, providing new theoretical tools and insights for large-scale problems. On the other hand, our setting corresponds to a new form of limit of learning problems, which seems to have not been investigated yet in the statistical learning theory literature.
\end{abstract}

\section{Introduction} \label{sec.intro}
Models with many variables play an important role in many fields of mathematical and physical sciences. In this context, going to the limit of infinitely many variables is an important analysis and modeling approach.
Interacting particle systems are a classic example; these are usually modeled as dynamical systems describing the temporal evolution of many interacting objects. In physics, such systems were first investigated in the context of gas dynamics, cf.  \cite{cercignani2000rarefied}. 
Since even small volumes of gases typically contain an enormous number of molecules, a microscopic modeling approach quickly becomes infeasible and one considers the evolution of densities instead \cite{cercignani94mathematical}.
In the past decades, interacting particle systems arising from many different domains have been considered, for example, animal movement (inter alia swarms of birds, schools of fish, colonies of microorganisms) \cite{ballerini2008interaction,katz2011inferring}, social and political dynamics \cite{toscani2006kinetic,castellano2009statistical}, crowd modeling and control (pedestrian movement, gathering at large events like football games or concerts) \cite{dyer2009leadership,cristiani2014multiscale,albi2016invisible}, swarms of robots \cite{peters2014leader,oh2015survey,choi2019collisionless} or vehicular traffic (in particular, traffic jams) \cite{tosin2019kinetic}. There is now a vast literature on such applications, and we refer to the surveys \cite{naldi2010mathematical,vicsek2012collective,gong2022crowd} as starting points.
A prototypical example of such a system is given by
$\dot{x}_i = \frac1M \sum_{j=1}^M \phi(x_i,x_j)(x_j-x_i)$, for $i=1,\ldots,M$,
where $M\in\Np$ particles or agents are modelled by their state $x_i\in\R^d$, $i=1,\ldots,M$, evolving according to some interaction rule $\phi: \R^d\times\R^d\rightarrow\R$.
Typical questions then concern the long-term behavior of such systems, in particular, emergent phenomena like consensus or alignment \cite{carrillo2010particle}.
While first-principles modeling has been very successful for interacting particle systems in physical domains, 
using this approach to model the interaction rules in complex domains like social and opinion dynamics, pedestrian and animal movement or vehicular traffic, can be problematic. Therefore, learning interaction rules from data has been recently intensively investigated, for example, in the pioneering works \cite{bongini2017inferring,lu2019nonparametric}. The data consists typically of (sampled) trajectories of the particle states, potentially with measurement noise, and the goal is to learn a good approximation of the interaction rule $\phi$.

Our work is motivated by a related problem. 
Frequently, the state of such a complex multiagent system can be easily measured or estimated, e.g.,
by video recordings or image snapshots for bird swarms or schools of fish, and microscopy recordings for microorganism colonies;
aerial imaging for human crowds (e.g., via quadcopters); and polling and social media analysis for opinion dynamics.
However, some interesting features of the whole system might be more difficult to measure. For example,
how a swarm of birds or a school of fish will react to an external stimulus (like an approaching predator), given the current state of the population. Such a reaction could be a change of density or spread of the population, or a change in mean velocity.
Another example is given by features of a society in opinion dynamics (average happiness, aggression potential, susceptibility to adversarial interventions), given the current "opinion state".
Measuring such features can be difficult, for example, due to a required intervention. 
Formally, such a feature is a functional $F_M: (\R^d)^M \rightarrow \R$ of the current state of the system, and since the state is often easy to measure, it would be useful to have an explicit mapping from state to feature of interest. However, since first principles modeling is unlikely to be successful in the domains considered here, it is promising to learn such a mapping from data.
We can formalize this as a standard supervised learning task: The data set consists of $D^{[M]}_N=((\vec x_1, y_1),\ldots,(\vec x_N,y_N))$, where $\vec x_n\in(\R^d)^M$ are snapshot measurements of the particle states (corresponding to the input of the functional)
and $y_n\in\R$ is the value of the functional of interest, potentially with measurement noise, at snapshot state $\vec x_n$. Let us assume an additive noise model, i.e.,  $y_n=F_M(\vec x_n) + \epsilon_n$ for $n=1,\ldots,N$, where $\epsilon_1,\ldots,\epsilon_N\in\R$ are noise variables. 
This is now a regression problem that could be solved, for example, using a Support Vector Machine (SVM) \cite{SC08}. %
Note that for this we need a kernel $k_M:(\R^d)^M\times(\R^d)^M\rightarrow\R$ on $(\R^d)^M$.

Similarly to classical physical examples like gas dynamics, the case of a large number of particles is also relevant in modern complex interacting particle systems. 
Since this poses computational and modeling challenges, it can be advantageous to go also here to a kinetic level and model the evolution of the particle distribution instead of every individual particle. 
It is well-established how to derive a kinetic partial differential equation from ordinary differential equations systems on the particle level, for example, using the Boltzmann equation or via a mean field limit, cf.  \cite{carrillo2010particle} for an overview in the context of multi-agent systems. 
Formally, instead of trajectories of particle states of the form $[0,T] \ni t \mapsto \vec x(t)\in(\R^d)^M$, we then have trajectories of probability measures $[0,T] \ni t \mapsto \mu(t)\in\Pb(\R^d)$.
This immediately raises the question of whether the learning setup outlined above also allows a corresponding kinetic limit.
More precisely, let $K\subseteq\R^d$ be compact and assume that all particles remain confined to this compactum, i.e., $x_i(t)\in K$ for all $i=1,\ldots,M$ and all $t\in[0,T]$ under the microscopic dynamics.\footnote{This means the dynamics on the level of individual particles.}
If the underlying dynamics have a mean field limit, then it is reasonable to assume that the finite-input functionals $F_M: K^M\rightarrow \R$ converge also in mean field to some $F:\Pb(K)\rightarrow\R$ for $M\rightarrow\infty$, see Section \ref{sec.mfl_kernels} for a precise definition of this notion.
In turn, we can now formulate a corresponding learning problem on the mean field level: A data set is then given by $D_N=((\mu_1,y_1),\ldots,(\mu_N,y_N))$, where $\mu_n \in \Pb(K)$ are snapshots of the particle state distribution over time and $y_n\in\R$ are again potentially noisy measurements of the functional. Assuming an additive noise model, this corresponds to $y_n=F(\mu_n)+\epsilon_n$, $n=1,\ldots,N$.
If we want to use an SVM on the kinetic level, we need a kernel $k:\Pb(K)\times\Pb(K)\rightarrow\R$ on probability distributions.
There are several options available for this, see e.g. \cite{christmann2010universal}.
However, assuming that all ingredients of the learning problem
arise as a mean field limit, this naturally leads to the question of whether a mean field limit of kernels exists, and what this means for the relation of the learning problems on the finite-input and kinetic level.
In \cite{fiedleretal_mfl_kernels}, this reasoning has motivated the introduction and investigation of the mean field limit of kernels. 
In the present work, we extend the theory of these kernels and investigate them in the context of statistical learning theory. 
In particular, since in practice one would use the mean field kernels on microscopic data with large, but finite $M$, we need convergence results of the various objects appearing in statistical learning with kernels. Exactly such results are provided in Section \ref{sec.svm}.

Finally, we would like to stress that the technical developments here are independent of the motivation outlined above, in that they apply to mean field limits of functions and kernels that do not necessarily arise form the dynamics of interacting particle systems.

\paragraph{Contributions} Our contributions cover three closely related aspects. 1) We extend and complete the theory of mean field limit kernels and their RKHSs (Section \ref{sec.mfl_kernels}). In Theorem \ref{thm.mfl_rkhs}, we precisely describe the relationship between the RKHS of the finite-input kernels and the RKHS of the mean field kernel, completing the results from \cite{fiedleretal_mfl_kernels}. 
In particular, this allows us to interpret the latter RKHS as the mean field limit of the former RKHSs. 
Furthermore, in Lemma \ref{lem.gamma_liminf} and \ref{lem.gamma_limsup}, we provide inequalities for the corresponding RKHS norms, which are necessary for $\Gamma$-convergence arguments.
2) We provide results relevant for approximation with mean field limit kernels (Section \ref{sec.approx_mfl}). 
With Proposition \ref{prop.simple_approx}, we give a first result on the approximation power of mean field limit kernels, and in Theorem \ref{thm.mfl_rt} we can also provide a representer theorem for these kernels.
For its proof, we use a $\Gamma$-convergence argument, which is to the best of our knowledge the first time this technique has been used in the context of kernel methods.
3) We investigate the mean field limit of kernels in the context of statistical learning theory (Section \ref{sec.svm}). We first establish an appropriate mean field limit setup for statistical learning problems, based on a slightly stronger mean field limit existence result than available so far, cf. Proposition \ref{prop.mfl_func_extended}.
To the best of our knowledge, this is a new form of a limit for learning problems.
In this setup, we then provide existence, uniqueness, and representer theorems for empirical and (using an apparently new notion of mean field convergence of probability distributions) infinite-sample solutions of SVMs, cf. Proposition \ref{prop.conv_empirical} and \ref{prop.conv_inf_sample_svm}. Finally, under a uniformity assumption, we can also establish convergence of the minimal risks in Proposition \ref{prop.conv_minimal_risk}.

Our developments are relevant from two different perspectives: on the one hand, they constitute a theoretical proof-of-concept that the mean field limit can be ``pulled through'' the (kernel-based) statistical learning theory setup. In particular, this demonstrates that rigorous theoretical results can be transferred through the mean field limit, similar to works in the context of control of interacting particle systems, see e.g. \cite{herty2017performance}. 
On the other hand, our setup appears to be a new variant of a large-number-of-variables limit in the context of machine learning, complementing established settings like infinite-width neural networks \cite{arora2019exact}.

Due to space constraints, all proofs and some additional technical results have been placed in the supplementary material.
\section{Kernels and their RKHSs in the mean field limit} \label{sec.mfl_kernels}
\paragraph{Setup and preliminaries}
Let $(X,d_X)$ be a compact metric space and denote by $\Pb(X)$ the set of Borel probability measures on $X$.
We endow $\Pb(X)$ with the topology of weak convergence of probability measures.
Recall that for $\mu_n,\mu\in\Pb(X)$, we say that $\mu_n \rightarrow \mu$ weakly if for all bounded and continuous $f:X\rightarrow\R$ (since $X$ is compact, this is equivalent to $f$ continuous) we have $\lim_{n\rightarrow\infty} \int_X \phi(x)\mathrm{d}\mu_n(x) \rightarrow \int_X \phi(x)\mathrm{d}\mu(x)$.
The topology of weak convergence can be metrized by the Kantorowich-Rubinstein metric $\dkm$, defined by
\begin{equation*}
	\dkm(\mu_1,\mu_2)= \sup\left\{ \int_X \phi(x) \mathrm{d}(\mu_1-\mu_2)(x) \mid \phi: X \rightarrow \R \text{ is 1-Lipschitz } \right\}.
\end{equation*}
Note that since $X$ is compact and hence separable, the Kantorowich-Rubinstein metric is equal to the 1-Wasserstein metric here.
Furthermore, $\Pb(X)$ is compact in this topology.
For $M\in\Np$ and $\vec x\in X^M$, denote the $i$-th component of $\vec x$ by $x_i$, and define the \defm{empirical measure} for $\vec x$ by
$\muhat[\vec x] = \frac{1}{M}\sum_{i=1}^M \delta_{x_i}$,
where $\delta_x$ denotes the Dirac measure centered at $x\in X$. 
The  empirical measures are dense in $\Pb(X)$ w.r.t. the Kantorowich-Rubinstein metric.
Additionally, define $\dkm^2: \Pb(X)^2 \times \Pb(X)^2 \rightarrow\Rnn$ by 
$\dkm^2((\mu_1,\mu_1^\prime),(\mu_2,\mu_2^\prime))=\dkm(\mu_1,\mu_2)+\dkm(\mu_1^\prime,\mu_2^\prime)$,
and note that $(\Pb(X)^2,\dkm^2)$ is a compact metric space. 
Moreover, denote the set of permutations on $\{1,\ldots, M\}$ by $\Permutations_M$, and for a tuple $\vec x \in X^M$ and permutation $\sigma\in\Permutations_M$ define $\sigma \vec x=(x_{\sigma(1)},\ldots,x_{\sigma(M)})$.
Finally, we recall some well-known definitions and results from the theory of reproducing kernel Hilbert spaces, following \cite[Chapter~4]{SC08}. 
For an arbitrary set $\inputset{X}\not=\emptyset$ and a Hilbert space $(H,\langle\cdot,\cdot\rangle_H)$ of functions on $\inputset{X}$, we say that a map $k:\inputset{X}\times\inputset{X}\rightarrow\R$ is a \defm{reproducing kernel} for $H$ if 
1) $k(\cdot,x)\in H$ for all $x\in\inputset{X}$; 
2) for all $x\in\inputset{X}$ and $f\in H$ we have $f(x)=\langle f, k(\cdot,x)\rangle_H$. 
Note that if a reproducing kernel exists, it is unique.
If such a Hilbert space has a reproducing kernel, we call $H$ a reproducing kernel Hilbert space (RKHS) and $k$ its (reproducing) kernel.
It is well-known that a reproducing kernel is symmetric and positive semidefinite, and that every symmetric and positive semidefinite function has a unique RKHS for which it is the reproducing kernel. For brevity, if $k$ is symmetric and positive semidefinite, or equivalently, if it is the reproducing kernel of an RKHS, we call $k$ simply a kernel, and denote by $(H_k,\langle\cdot,\cdot\rangle_k)$ its unique associated RKHS. Define also
$ \Hpre{k}  = \mathrm{span}\{ k(\cdot,x) \mid x\in \inputset{X}\}$,
then for $f=\sum_{n=1}^N \alpha_n k(\cdot, x_n) \in \Hpre{k}$ and $g=\sum_{m=1}^M \beta_m k(\cdot,y_m)\in \Hpre{k}$ we have $\langle f, g \rangle_k = \sum_{n=1}^N \sum_{m=1}^M \alpha_n \beta_m k(y_m,x_n)$, and $\Hpre{k}$ is dense in $H_k$.
\paragraph{The mean field limit of functions and kernels}
Given $f_M:X^M\rightarrow\R$, $M\in\Np$, and $f:\Pb(X)\rightarrow\R$, we say that $f_M$ \defm{converges in mean field to} $f$ and that $f$ is the (or a) \defm{mean field limit} of $f_M$, if $\lim_{M\rightarrow\infty} \sup_{\vec x\in X^{M}} |f_{M}(\vec x) - f(\muhat[\vec x])| = 0$.
In this case, we write $f_M \convmode{\Pone} f$.
Let now $(Y,d_Y)$ be another metric space and $f_M:X^M\times Y\rightarrow\R$, $M\in\Np$, and $f:\Pb(X)\times Y\rightarrow\R$, then we say that $f_M$ \defm{converges in mean field to} $f$ and that $f$ is the (or a) \defm{mean field limit} of $f_M$, if for all compact $K\subseteq Y$ we have
\begin{equation}
	\lim_{M\rightarrow\infty} \sup_{\vec x\in X^{M}, y\in K} |f_{M}(\vec x,y) - f(\muhat[\vec x],y)| = 0.
\end{equation}
and also write $f_M \convmode{\Pone} f$.
The following existence results for mean field limits is slightly more general than what is available in the literature, and it is essentially a direct generalization of \cite[Theorem~2.1]{cardaliaguet2010}, in the form of \cite[Lemma~1.2]{carmona2018probabilistic}.
\begin{proposition} \label{prop.mfl_func_extended}
	Let $(X,d_X)$ be a compact metric space and $(Z,d_Z)$ a metric space that has a countable basis $(U_n)_n$ such that $\bar U_n$ is compact for all $n\in\N$.
	Let $f_M: X^M \times Z \rightarrow \R$, $M\in\Np$, be a sequence of functions fulfilling the following conditions:
	1) \emph{(Symmetry in $\vec x$)\footnote{As is well-known, cf. \cite[Remark~1.1.3]{carmona2018probabilistic}, this condition is actually implied by the next condition. However, as usual in the kinetic theory literature, we kept this condition for emphasis.}} For all $M\in\Np$, $\vec{x}\in X^M$, $z\in Z$ and permutations $\sigma\in\Permutations_M$, we have $f_M(\sigma\vec{x},z)
			=f_M(\vec{x},z)$;
	2) \emph{(Uniform boundedness)} There exists $B_f\in\Rnn$ and a function $b: Z\rightarrow \Rnn$ such that $\forall M\in\Np,\vec{x} \in X^M,z \in z: |f_M(\vec{x},z)|\leq B_f + b(z)$;
	3) \emph{(Uniform Lipschitz continuity)} There exists some $L_f\in\Rp$ such that for all $M\in\Np$, $\vec{x}_1,\vec{x}_2\in X^M$, $z_1,z_2\in Z$ we have
    $|f_M(\vec{x}_1,z_1)-f_M(\vec{x}_2,z_2)|
			\leq L_f \left( \dkm(\muhat[\vec{x}_1],\muhat[\vec{x}_2]) + d_Z(z_1,z_2)\right)$.
   
	Then there exists a subsequence $(f_{M_\ell})_\ell$ and a continuous function $f:\Pb(X)\times Z \rightarrow\R$ such that $f_{M_\ell} \convmode{\Pone} f$ for $\ell\rightarrow\infty$.
	Furthermore, $f$ is $L_f$-Lipschitz continuous and there exists $B_F\in\Rnn$ such that for all $\mu\in\Pb(X)$, $z\in Z$ we have
	$|f(\mu,z)|\leq B_F + b(z)$.
\end{proposition}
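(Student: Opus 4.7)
The strategy is to pass each $f_M$ through its symmetry to a function on $\Pb(X)\times Z$ via McShane extension, then extract a uniformly convergent subsequence by an Arzel\`a--Ascoli / diagonal argument along the basis $(U_n)_n$.

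First, condition (1) shows $f_M(\vec{x},z)$ depends on $\vec{x}$ only through $\muhat[\vec{x}]$, so $f_M$ descends to a function $F_M:\Pb_M(X)\times Z \to \R$ on $\Pb_M(X):=\{\muhat[\vec{x}]:\vec{x}\in X^M\}\subseteq \Pb(X)$, and condition (3) makes this $F_M$ an $L_f$-Lipschitz map with respect to $\dkm+d_Z$. I invoke McShane's extension theorem to obtain a globally $L_f$-Lipschitz function $F_M^*:\Pb(X)\times Z\to\R$ with $F_M^*(\muhat[\vec{x}],z)=f_M(\vec{x},z)$. Since $(\Pb(X),\dkm)$ is compact with finite diameter $D$ and each $\bar U_n$ is compact, a reference-point estimate that combines Lipschitz continuity with condition~(2) (pick any $z_0\in Z$ and bound by $B_f+b(z_0)+L_f(D+d_Z(z,z_0))$) gives bounds on $(F_M^*)_M$ on every $\Pb(X)\times\bar U_n$ that are uniform in $M$. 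On each compact set $\Pb(X)\times\bar U_n$, Arzel\`a--Ascoli then produces a uniformly convergent subsequence, and a standard diagonal extraction yields a single subsequence $(F_{M_\ell}^*)_\ell$ converging uniformly on every $\Pb(X)\times\bar U_n$ to a continuous limit $f:\Pb(X)\times Z\to\R$.

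For mean field convergence on arbitrary compact $K\subseteq Z$: since $(U_n)$ is an open cover of $Z$, compactness of $K$ gives a finite subcover and hence $K\subseteq\bigcup_{i=1}^k \bar U_{n_i}$, so uniform convergence transfers to $\Pb(X)\times K$. Restricting to empirical inputs,
\begin{equation*}
\sup_{\vec{x}\in X^{M_\ell},\,z\in K}|f_{M_\ell}(\vec{x},z)-f(\muhat[\vec{x}],z)| \;\le\; \sup_{\Pb(X)\times K}|F_{M_\ell}^*-f| \;\longrightarrow\; 0,
\end{equation*}
which is precisely $f_{M_\ell}\convmode{\Pone} f$. The $L_f$-Lipschitz property of $f$ is inherited from pointwise limits of uniformly Lipschitz functions, and continuity follows. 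The sharp bound $|f(\mu,z)|\le B_f+b(z)$, with $B_F:=B_f$, is then recovered by a density argument: for any $(\mu,z)$ I pick $\vec{x}_\ell\in X^{M_\ell}$ with $\muhat[\vec{x}_\ell]\to\mu$ (possible since empirical $M_\ell$-measures become dense in $\Pb(X)$ as $\ell\to\infty$), combine condition~(2) with mean field convergence and continuity of $f$, and send $\ell\to\infty$.

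The main technical obstacle is coordinating two different compactness regimes: $(\Pb(X),\dkm)$ is directly compact and interacts cleanly with McShane extension, but $Z$ is only $\sigma$-compact through the relatively compact basis $(U_n)_n$, which forces a Cantor-style diagonal extraction. Recovering the sharp bound $B_F=B_f$ is also more subtle than it looks, since the McShane-extended $F_M^*$ only obeys the weaker pointwise bound $B_f+b(z)+L_f D$ off the set of empirical measures; the sharp bound for $f$ must be obtained at the end, leveraging that $M_\ell\to\infty$ makes empirical measures dense.
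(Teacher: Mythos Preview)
Your proof is correct and follows essentially the same approach as the paper: McShane extension of each $f_M$ to $\Pb(X)\times Z$, followed by Arzel\`a--Ascoli plus a diagonal extraction along the countable basis. Your final density argument in fact yields the sharper constant $B_F=B_f$, whereas the paper settles for $B_F=B_f+L_f\,\mathrm{diam}\,\Pb(X)$ taken directly from the extension.
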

We now turn to the mean field limit of kernels as introduced in \cite{fiedleretal_mfl_kernels}: Given $k_M: X^M \times X^M \rightarrow \R$ and $k:\Pb(X)\times\Pb(X)\rightarrow\R$, we say that $k_M$ \defm{converges in mean field to} $k$ and that $k$ is the (or a) \defm{mean field limit} of $k_M$, if
\begin{equation} \label{eq.conv_kernels}
	\lim_{M\rightarrow \infty} \sup_{\vec{x},\vec{x}^\prime\in X^M} 
	|k_M(\vec{x},\vec{x}^\prime)-k(\muhat[\vec{x}],\muhat[\vec{x}^\prime])| = 0.
\end{equation}
In this case we write $k_M \convmode{\Pone} k$.

For convenience, we recall %
\cite[Theorem~2.1]{fiedleretal_mfl_kernels}, which ensures the existence of a mean field limit of a sequence of kernels.
\begin{proposition} \label{prop.mfl_kernels}
Let $k_M: X^M \times X^M \rightarrow \R$ be a sequence of kernels fulfilling the following conditions.
1) \emph{(Symmetry in $\vec x$)} For all $M\in\N_+$, $\vec{x},\vec{x}^\prime\in X^M$ and permutations $\sigma\in \Permutations_M$ we have
    $k_M(\sigma\vec{x},\vec{x}^\prime)
			=k_M(\vec{x},\vec{x}^\prime)$;
2) \emph{(Uniform boundedness)} There exists $C_k\in\Rnn$ such that
    $\forall M\in\N_+,\vec{x},\vec{x}^\prime \in X^M: |k_M(\vec{x},\vec{x}^\prime)|\leq C_k$;
3) \emph{(Uniform Lipschitz continuity)} There exists some $L_k\in\Rp$ such that for all $M\in\N_+$, $\vec{x}_1,\vec{x}_1^\prime,\vec{x}_2,\vec{x}_2'\in X^M$ we have
    $|k_M(\vec{x}_1,\vec{x}_1^\prime)-k_M(\vec{x}_2,\vec{x}_2^\prime)|
			\leq L_k \dkm^2\left[ (\muhat[\vec{x}_1],\muhat[\vec{x}_1^\prime]),  (\muhat[\vec{x}_2],\muhat[\vec{x}_2^\prime])\right]$.

	Then there exists a subsequence $\left(k_{M_\ell}\right)_\ell$ and a continuous kernel $k:\Pb(X)\times\Pb(X)\rightarrow\R$ such that $k_{M_\ell} \convmode{\Pone} k$, and $k$ is also bounded by $C_k$.
\end{proposition}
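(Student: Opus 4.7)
The plan is to extend each $k_M$ to a uniformly bounded, uniformly Lipschitz function on the compact metric space $(\Pb(X)^2, \dkm^2)$ and then apply Arzelà–Ascoli. Because of kernel symmetry $k_M(\vec x, \vec x') = k_M(\vec x', \vec x)$ combined with condition 1, the value $k_M(\vec x, \vec x')$ depends only on the pair of empirical measures $(\muhat[\vec x], \muhat[\vec x'])$, so the recipe $\tilde k_M(\muhat[\vec x], \muhat[\vec x']) := k_M(\vec x, \vec x')$ well-defines a function on the subset $A_M := \{(\muhat[\vec x], \muhat[\vec x']) : \vec x, \vec x' \in X^M\} \subseteq \Pb(X)^2$. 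By condition 3 this function is $L_k$-Lipschitz on $(A_M, \dkm^2)$, and by condition 2 it is bounded by $C_k$.

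Next, I would use a McShane–Whitney construction to extend $\tilde k_M$ to an $L_k$-Lipschitz function on all of $\Pb(X)^2$, then truncate to $[-C_k, C_k]$, which preserves the Lipschitz constant and agrees with the original on $A_M$. The resulting sequence $(\tilde k_M)_{M \in \Np}$ on the compact space $(\Pb(X)^2, \dkm^2)$ is uniformly bounded and equicontinuous, so Arzelà–Ascoli yields a subsequence $(\tilde k_{M_\ell})_\ell$ converging uniformly to a continuous limit $k : \Pb(X)^2 \to \R$, which is automatically $L_k$-Lipschitz and bounded by $C_k$. Since $k_{M_\ell}(\vec x, \vec x') = \tilde k_{M_\ell}(\muhat[\vec x], \muhat[\vec x'])$ on $A_{M_\ell}$, uniform convergence on $\Pb(X)^2$ directly gives the mean field convergence $k_{M_\ell} \convmode{\Pone} k$.

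It remains to show that $k$ is a kernel. Symmetry transfers pointwise through the uniform limit. For positive semidefiniteness, given $\mu_1, \ldots, \mu_n \in \Pb(X)$ and $c_1, \ldots, c_n \in \R$, I would use density of empirical measures in $\Pb(X)$ to choose, for each large $M$, tuples $\vec x_i^{(M)} \in X^M$ with $\muhat[\vec x_i^{(M)}] \to \mu_i$ in $\dkm$ as $M \to \infty$. Since each $k_{M_\ell}$ is positive semidefinite,
\[ \sum_{i,j} c_i c_j\, k_{M_\ell}(\vec x_i^{(M_\ell)}, \vec x_j^{(M_\ell)}) = \sum_{i,j} c_i c_j\, \tilde k_{M_\ell}(\muhat[\vec x_i^{(M_\ell)}], \muhat[\vec x_j^{(M_\ell)}]) \geq 0, \]
and a short $3\varepsilon$-style estimate combining uniform convergence $\tilde k_{M_\ell} \to k$ with joint continuity of $k$ passes this inequality to the limit and yields $\sum_{i,j} c_i c_j k(\mu_i, \mu_j) \geq 0$.

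The main obstacle I expect is bookkeeping in this final limit passage, since both the approximating measures $\muhat[\vec x_i^{(M_\ell)}]$ and the approximating kernels $\tilde k_{M_\ell}$ depend on $\ell$; but uniform convergence on the compactum $\Pb(X)^2$ decouples the two sources of error cleanly, making this a technical but routine estimate. Everything else reduces to standard compactness and Lipschitz-extension machinery.
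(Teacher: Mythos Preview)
Your proposal is correct and follows the same McShane-extension plus Arzel\`a--Ascoli template that the paper itself uses. Note, however, that the paper does not actually prove Proposition~\ref{prop.mfl_kernels}: it is recalled verbatim from \cite[Theorem~2.1]{fiedleretal_mfl_kernels}. What the paper \emph{does} prove in the supplementary material is the closely related Proposition~\ref{prop.mfl_func_extended}, and the argument there is essentially the one you describe: define a McShane extension $F_M(\mu,z)=\inf_{\vec x} f_M(\vec x,z)+L_f\dkm(\muhat[\vec x],\mu)$, verify it extends $f_M$ and is uniformly Lipschitz and bounded, then apply Arzel\`a--Ascoli on compacta. Your variant---first factor $k_M$ through empirical measures (which is legitimate since condition~3 alone already forces $k_M(\vec x,\vec x')$ to depend only on $(\muhat[\vec x],\muhat[\vec x'])$), then extend from $A_M$, then truncate to $[-C_k,C_k]$---is a clean packaging of the same idea and in fact delivers the sharp bound $C_k$ directly rather than via a density argument afterwards.

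One small caveat: the phrase ``symmetry transfers pointwise through the uniform limit'' is slightly imprecise, because the McShane extension $\tilde k_M$ need not be symmetric off $A_M$, so the uniform limit $k$ is not automatically symmetric. You should either symmetrize $\tilde k_M$ before taking the limit, or---more in keeping with the rest of your argument---recover symmetry of $k$ by the same density-plus-continuity reasoning you already use for positive semidefiniteness (approximate $(\mu,\mu')$ by pairs in $A_{M_\ell}$, where $\tilde k_{M_\ell}$ \emph{is} symmetric, and pass to the limit). With that adjustment the proof is complete.
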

Let $k_M: X^M \times X^M \rightarrow \R$ be a given sequence of kernels fulfilling the conditions of Proposition \ref{prop.mfl_kernels}.
Then there exists a subsequence $\left(k_{M_\ell}\right)_\ell$ converging in mean field to a kernel $k:\Pb(X)\times\Pb(X)\rightarrow \R$.
\emph{From now on, we only consider this subsequence} and denote it again by $(k_M)_M$, i.e., $k_M \convmode{\Pone} k$.
Unless noted otherwise, every time we need a further subsequence, we will make this explicit.\footnote{It is customary in the kinetic theory literature to switch to such a subsequence. However, for some results that are about to follow, it is important that no further switch to a subsequence happens, hence we need to be more explicit in these cases.}

\paragraph{The RKHS of the mean field limit kernel}
Denote by $H_M:=H_{k_M}$ the (unique) RKHS corresponding to kernel $k_M$ and denote by $H_k$ the unique RKHS of $k$.
For basic properties of these objects as well as classes of suitable kernels we refer to \cite{fiedleretal_mfl_kernels}.

We clarify the relation between $H_M$%
and $H_k$
in the next result.
\begin{theorem} \label{thm.mfl_rkhs}
1) For every $f\in H_k$, there exists a sequence $f_M \in H_M$, $M\in\Np$, such that $f_M \convmode{\Pone} f$.
2) Let $f_M\in H_M$ be sequence such that there exists $B\in\Rnn$ with $\|f_M\|_M \leq B$ for all $M\in\Np$.
	Then there exists a subsequence $(f_{M_\ell})_\ell$ and $f\in H_k$ with $f_{M_\ell}\convmode{\Pone} f$ and $\|f\|_k\leq B$.
\end{theorem}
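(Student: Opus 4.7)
The plan for part 1 is by density. For $g = \sum_{n=1}^{N} \alpha_n k(\cdot, \mu_n) \in \Hpre{k}$, I would use that $M$-atomic empirical measures are dense in $\Pb(X)$ to pick $\vec x_n^{(M)} \in X^M$ with $\dkm(\muhat[\vec x_n^{(M)}], \mu_n) \to 0$, and set $g_M := \sum_n \alpha_n k_M(\cdot, \vec x_n^{(M)}) \in H_M$. Splitting $|g_M(\vec y) - g(\muhat[\vec y])| \leq \sum_n |\alpha_n|\,|k_M(\vec y, \vec x_n^{(M)}) - k(\muhat[\vec y], \muhat[\vec x_n^{(M)}])| + \sum_n |\alpha_n|\,|k(\muhat[\vec y], \muhat[\vec x_n^{(M)}]) - k(\muhat[\vec y], \mu_n)|$, the first sum vanishes uniformly in $\vec y$ by $k_M \convmode{\Pone} k$, and the second by (uniform) continuity of $k$ on the compact space $\Pb(X)^2$. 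For general $f \in H_k$, pick $f^{(j)} \in \Hpre{k}$ with $\|f - f^{(j)}\|_k \to 0$; since $|f(\mu) - f^{(j)}(\mu)| \leq \|f-f^{(j)}\|_k \sqrt{C_k}$, this gives uniform convergence on $\Pb(X)$, and a standard diagonal argument over the preceding construction yields the sought $f_M$.

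For part 2, I combine an Arzel\`a--Ascoli extraction with a Riesz representation argument. Kernel symmetry plus the assumed first-argument symmetry of $k_M$ implies $k_M(\cdot, \sigma\vec x) = k_M(\cdot, \vec x)$ for all $\sigma \in \Permutations_M$, so every element of $H_M$ is permutation invariant; hence $\tilde f_M(\muhat[\vec x]) := f_M(\vec x)$ is well defined on $E_M := \{\muhat[\vec x] : \vec x \in X^M\}$. The reproducing property combined with the uniform bound $C_k$ and uniform Lipschitz constant $L_k$ of $k_M$ yields, via Cauchy--Schwarz, $|\tilde f_M(\mu)| \leq B\sqrt{C_k}$ and $|\tilde f_M(\mu_1) - \tilde f_M(\mu_2)| \leq B\sqrt{2 L_k\, \dkm(\mu_1,\mu_2)}$ on $E_M$. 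By a McShane--Whitney type extension (valid since $\sqrt{\cdot}$ is subadditive), I would extend each $\tilde f_M$ to $\hat f_M \in C(\Pb(X))$ with the same quantitative bounds; Arzel\`a--Ascoli on the compact space $\Pb(X)$ then provides a subsequence $\hat f_{M_\ell} \to f$ uniformly, and since $f_M(\vec x) = \hat f_M(\muhat[\vec x])$ on $E_M$, this already gives $f_{M_\ell} \convmode{\Pone} f \in C(\Pb(X))$.

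The remaining step, which I expect to be the main obstacle, is to identify this continuous $f$ as an element of $H_k$ with $\|f\|_k \leq B$. The plan is to define a candidate linear functional $\Lambda$ on $\Hpre{k}$ by $\Lambda\bigl(\sum_n \alpha_n k(\cdot,\mu_n)\bigr) := \sum_n \alpha_n f(\mu_n)$, and recover the representer via Riesz. For each such $g$, approximate as in part 1 by $g_{M_\ell} \in H_{M_\ell}$ with $g_{M_\ell} \convmode{\Pone} g$ and $\|g_{M_\ell}\|_{M_\ell}^2 = \sum_{n,m} \alpha_n\alpha_m k_{M_\ell}(\vec x_n^{(M_\ell)}, \vec x_m^{(M_\ell)}) \to \|g\|_k^2$ (using $k_{M_\ell} \convmode{\Pone} k$ and continuity of $k$). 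Then $\langle f_{M_\ell}, g_{M_\ell}\rangle_{M_\ell} = \sum_n \alpha_n f_{M_\ell}(\vec x_n^{(M_\ell)}) \to \sum_n \alpha_n f(\mu_n) = \Lambda(g)$ by the uniform convergence of $\hat f_{M_\ell}$ together with continuity of $f$, whereas Cauchy--Schwarz gives $|\langle f_{M_\ell}, g_{M_\ell}\rangle_{M_\ell}| \leq B\|g_{M_\ell}\|_{M_\ell}$, so in the limit $|\Lambda(g)| \leq B\|g\|_k$. This bound simultaneously settles well-definedness of $\Lambda$ (apply to the difference of two finite representations of a common $g$) and provides a bounded extension to all of $H_k$ by density; Riesz representation then yields a unique $\tilde f \in H_k$ with $\langle \tilde f, \cdot\rangle_k = \Lambda$ and $\|\tilde f\|_k \leq B$. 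Finally, taking $g = k(\cdot, \mu)$ gives $\tilde f(\mu) = \Lambda(k(\cdot,\mu)) = f(\mu)$ pointwise, so $f = \tilde f \in H_k$, completing the proof.
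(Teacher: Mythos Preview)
Your proposal is correct. Part~1 and the Arzel\`a--Ascoli extraction in part~2 match the paper's route (the paper simply cites the extraction from prior work rather than sketching it as you do). The substantive difference lies in how you certify $f\in H_k$ with $\|f\|_k\le B$.

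The paper invokes a variational characterization of RKHS membership due to Atteia: $f\in H_k$ iff
\[
\normop(f,k)=\sup_{(\vec\mu,\vec\alpha)}\frac{\sum_n\alpha_n f(\mu_n)}{\sqrt{\sum_{i,j}\alpha_i\alpha_j k(\mu_j,\mu_i)}}<\infty,
\]
in which case $\|f\|_k=\normop(f,k)$, and then shows $\normop(f,k)\le B$ by a somewhat delicate case analysis (handling separately the cases $\weightop=0$, $\evalop\le 0$, $\evalop>0$, with auxiliary parameters $\alpha>1$, $\beta\in(0,1)$). You instead build the bounded functional $\Lambda$ on $\Hpre{k}$ directly, bound it via $|\langle f_{M_\ell},g_{M_\ell}\rangle_{M_\ell}|\le B\|g_{M_\ell}\|_{M_\ell}\to B\|g\|_k$, and finish with Riesz. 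These are two sides of the same coin---the paper's characterization theorem is itself proved by exactly this Riesz argument---but your version is more streamlined for the task at hand: it avoids the case splits entirely and makes the role of Cauchy--Schwarz transparent. The paper's route has the minor advantage that, once the characterization lemma is in place, it is reusable (and indeed the paper reuses it in the proof of the $\Gamma$-$\liminf$ inequality).
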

In other words, on the one hand, every RKHS function from $H_k$%
arises as a mean field limit of RKHS functions from $H_M$.%
On the other hand, every uniformly norm-bounded sequence of RKHS functions $(f_M)_M$ has a mean field limit in $H_k$.

Note that the preceding result is considerably stronger than the corresponding results in \cite{fiedleretal_mfl_kernels}: In contrast to \cite[Theorem~4.4]{fiedleretal_mfl_kernels} we do not need to go to another subsequence in the first item, and we ensure that the mean field limit $f$ is contained in $H_k$ (and norm-bounded by the same uniform bound), which was missing from Corollary 4.3 in the same reference.

The relation between the kernels $k_M$ and their RKHSs $H_M$, and the mean field limit kernel $k$ and its RKHS $H_k$ is illustrated as a commutative diagram in Figure \ref{fig.commutative_diagram}. In order to arrive at the mean field RKHS $H_k$, on the one hand, we consider the mean field limit $k$ of the $k_M$, and then form the corresponding RKHS $H_k$. This is essentially the content of Proposition \ref{prop.mfl_kernels}. 
On the other hand, we can first go from the kernel $k_M$ to the associated unique RKHS $H_M$ (for each $M\in\Np$). Theorem \ref{thm.mfl_rkhs} then says that $H_k$ can be interpreted as a mean field limit of the RKHSs $H_M$, since every function in $H_k$ arises as a mean field limit of a sequence of functions from the $H_M$, and every uniformly norm-bounded sequence of such functions has a mean field limit that is in $H_k$.
\begin{figure}\center

 \begin{tikzcd}
	{k_M} &&&&& k \\
	{H_M} &&&&& {H_k}
	\arrow["{\text{MFL of } k_M}", "{M\rightarrow\infty}"', from=1-1, to=1-6]
	\arrow[from=1-1, to=2-1]
	\arrow[from=1-6, to=2-6]
	\arrow["{\text{MFL of } f_M \in H_M}", "{M\rightarrow\infty}"', from=2-1, to=2-6]
\end{tikzcd}
	\caption{
    The kernel $k$ arises as the mean field limit (MFL) of the kernels $k_M$ (Proposition \ref{prop.mfl_kernels}).
    Every uniformly norm-bounded sequence $f_M\in H_M$, $M\in\Np$, has an MFL in $H_k$, and every function $f\in H_k$ arises as such an MFL (Theorem \ref{thm.mfl_rkhs}).
    Based on \cite[Figure~1]{fiedleretal_mfl_kernels}.}
	\label{fig.commutative_diagram}
\end{figure}

Next, we state two technical results that will play an important role in the following developments, and which might be of independent interest.
They describe $\liminf$ and $\limsup$ inequalities required for $\Gamma$-convergence arguments used later on.
\begin{lemma} \label{lem.gamma_liminf}
	Let $f_M \in H_M$, $M\in\Np$, and $f \in H_k$ such that 
    $f_M\convmode{\Pone} f$, then
	\begin{equation}
		\| f \|_k \leq \liminf_{M\rightarrow\infty} \| f_{M} \|_{M}.
	\end{equation}
\end{lemma}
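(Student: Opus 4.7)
The plan is a standard lower-semicontinuity argument along a liminf-realizing subsequence, combined with the compactness statement from Theorem \ref{thm.mfl_rkhs}(2) and uniqueness of mean field limits of continuous functions on $\Pb(X)$.

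First, set $C \defsym \liminf_{M\rightarrow\infty} \|f_M\|_M$. If $C = \infty$ there is nothing to prove, so assume $C < \infty$. Choose a subsequence $(M_j)_j$ such that $\lim_{j\to\infty} \|f_{M_j}\|_{M_j} = C$. Then, for any $\varepsilon>0$, the tail of this subsequence is uniformly norm-bounded by $C+\varepsilon$, so Theorem \ref{thm.mfl_rkhs}(2) applies: there exists a further subsequence, which I will again denote by $(M_j)_j$ for convenience, and a function $g\in H_k$ with
\begin{equation*}
    f_{M_j}\convmode{\Pone} g \quad\text{and}\quad \|g\|_k \leq C+\varepsilon.
\end{equation*}

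Next I would argue that $g=f$. Since the original sequence satisfies $f_M\convmode{\Pone} f$, every subsequence does so as well, and in particular $f_{M_j}\convmode{\Pone} f$. Thus for every tuple $\vec{x}\in X^{M_j}$ we have
\begin{equation*}
    |f(\muhat[\vec{x}])-g(\muhat[\vec{x}])|
    \leq |f(\muhat[\vec{x}])-f_{M_j}(\vec{x})| + |f_{M_j}(\vec{x})-g(\muhat[\vec{x}])|
    \xrightarrow{j\rightarrow\infty} 0,
\end{equation*}
uniformly in $\vec{x}$. Hence $f$ and $g$ agree on all empirical measures. Since empirical measures are dense in $\Pb(X)$ with respect to $\dkm$, and both $f$ and $g$ are continuous on $\Pb(X)$ (as elements of $H_k$, using continuity of $k$ and the standard RKHS bound $|h(\mu)-h(\nu)|\leq \|h\|_k \sqrt{k(\mu,\mu)-2k(\mu,\nu)+k(\nu,\nu)}$), we obtain $f=g$. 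Therefore $\|f\|_k = \|g\|_k \leq C+\varepsilon$, and letting $\varepsilon\downarrow 0$ yields $\|f\|_k\leq C$, as claimed.

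The only non-routine ingredient is Theorem \ref{thm.mfl_rkhs}(2), which packages the weak-type compactness needed here; the remainder is a uniqueness-of-limit argument exploiting density of empirical measures in $(\Pb(X),\dkm)$ together with continuity of the limits. The step that requires the most care is the identification $f=g$, because Theorem \ref{thm.mfl_rkhs}(2) only produces the limit along a sub-subsequence, so one must invoke both the fact that any subsequence of $(f_M)_M$ still mean-field-converges to $f$ and the continuity/density argument to conclude that no new limit can appear.
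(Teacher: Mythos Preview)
Your argument is correct and there is no logical circularity: the paper's proof of Theorem~\ref{thm.mfl_rkhs}(2) relies only on \cite[Corollary~4.3]{fiedleretal_mfl_kernels} and the variational characterization in Theorem~\ref{thm.charact_rkhs_funcs}, not on Lemma~\ref{lem.gamma_liminf}, so you may freely invoke it here. Your route, however, is genuinely different from the paper's. The paper argues by contradiction directly from Theorem~\ref{thm.charact_rkhs_funcs}: assuming $\|f\|_k>C$, it picks a near-maximizer $(\vec\mu,\vec\alpha)$ of the quotient $\weightedevalop(\vec\mu,\vec\alpha,f,k)$ witnessing $\|f\|_k$, discretizes to the $M$-level, and shows this forces $\|f_{M_\ell}\|_{M_\ell}$ to exceed $C$ along a subsequence. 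Your approach instead packages all of that work into a single appeal to the compactness-plus-norm-bound statement of Theorem~\ref{thm.mfl_rkhs}(2), followed by a uniqueness-of-limits identification $g=f$. This is considerably shorter and conceptually cleaner; the paper's version is more self-contained and makes the role of the variational norm characterization explicit, which is useful since essentially the same mechanism drives the proof of Theorem~\ref{thm.mfl_rkhs}(2) itself.

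Two small points worth tightening. First, Theorem~\ref{thm.mfl_rkhs}(2) is stated for the full (already re-indexed) sequence, while you apply it along a subsequence $(M_j)_j$; this is harmless because its proof only uses $k_{M}\convmode{\Pone}k$, which persists along subsequences, but it is worth saying so. Second, your sentence ``$f$ and $g$ agree on all empirical measures'' is not quite what you have shown: you have $\sup_{\vec x\in X^{M_j}}|f(\muhat[\vec x])-g(\muhat[\vec x])|\to 0$, which concerns only empirical measures of orders $M_j$. The clean fix is to take, for arbitrary $\mu\in\Pb(X)$, approximants $\vec x^{(j)}\in X^{M_j}$ with $\muhat[\vec x^{(j)}]\to\mu$ and bound $|f(\mu)-g(\mu)|$ by four terms (two continuity terms, two mean-field-convergence terms), exactly as you already outlined in spirit.
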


\begin{lemma} \label{lem.gamma_limsup}
	Let $f \in H_k$. Then there exist $f_{M} \in H_{M}$, $M\in\Np$, such that $\lim_{M\rightarrow\infty} \sup_{\vec x \in X^{M}} | f_{M}(\vec x) - f(\muhat[\vec x])| = 0$, and
	\begin{equation}
		\limsup_{M\rightarrow\infty} \| f_{M} \|_{M} \leq \| f\|_k.
	\end{equation}
\end{lemma}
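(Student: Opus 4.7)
The plan is to follow the standard $\Gamma$-limsup recovery-sequence strategy: first construct an explicit recovery sequence for functions in the dense pre-Hilbert space $H^{\text{pre}}_k = \mathrm{span}\{k(\cdot,\mu) \mid \mu\in\Pb(X)\}$, then pass to a general $f\in H_k$ via a diagonal argument. The two ingredients that make this work are the density of empirical measures in $(\Pb(X), \dkm)$ (which lets us approximate the ``anchor points'' $\mu_n$ of a pre-Hilbert representation by $\muhat[\vec y]$'s) and the uniform convergence $k_M \convmode{\Pone} k$ together with uniform continuity of $k$ on the compact space $(\Pb(X)^2, \dkm^2)$.

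Concretely, for the first step I take $f = \sum_{n=1}^N \alpha_n k(\cdot,\mu_n) \in H^{\text{pre}}_k$ and, using density of empirical measures, choose for each $n$ a sequence $\vec y_n^{(M)} \in X^M$ with $\dkm(\muhat[\vec y_n^{(M)}], \mu_n) \to 0$. I then set $f_M = \sum_{n=1}^N \alpha_n k_M(\cdot,\vec y_n^{(M)}) \in H^{\text{pre}}_{k_M} \subseteq H_M$. For the MFL convergence $f_M \convmode{\Pone} f$ I split
\begin{equation*}
|k_M(\vec x, \vec y_n^{(M)}) - k(\muhat[\vec x],\mu_n)| \leq |k_M(\vec x, \vec y_n^{(M)}) - k(\muhat[\vec x],\muhat[\vec y_n^{(M)}])| + |k(\muhat[\vec x],\muhat[\vec y_n^{(M)}]) - k(\muhat[\vec x],\mu_n)|,
\end{equation*}
whose first term vanishes uniformly by \eqref{eq.conv_kernels} and whose second vanishes uniformly by uniform continuity of $k$. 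The norm convergence $\|f_M\|_M \to \|f\|_k$ reduces, via the reproducing property, to showing $k_M(\vec y_m^{(M)}, \vec y_n^{(M)}) \to k(\mu_m,\mu_n)$ for each pair $(m,n)$, which is handled by the same two-term estimate.

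For the second step, I pick $g_\ell \in H^{\text{pre}}_k$ with $\|g_\ell - f\|_k \leq 1/\ell$ and apply Step 1 to each $g_\ell$ to obtain $g_{\ell,M} \in H_M$ with $\sup_{\vec x} |g_{\ell,M}(\vec x) - g_\ell(\muhat[\vec x])| \to 0$ and $\|g_{\ell,M}\|_M \to \|g_\ell\|_k$ as $M\to\infty$. I then select $M_1 < M_2 < \ldots$ so that for $M \geq M_\ell$ both quantities are within $1/\ell$ of their targets, set $\ell(M) = \max\{\ell : M_\ell \leq M\}$ (so $\ell(M)\to\infty$), and define $f_M := g_{\ell(M),M} \in H_M$. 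The key telescoping estimate uses the bound $|h(\mu)| \leq \|h\|_k \sqrt{k(\mu,\mu)} \leq \|h\|_k \sqrt{C_k}$ (from the reproducing property and uniform boundedness in Proposition \ref{prop.mfl_kernels}) to convert $\|g_{\ell(M)} - f\|_k \to 0$ into uniform convergence of $g_{\ell(M)}(\muhat[\vec x])$ to $f(\muhat[\vec x])$, yielding $f_M \convmode{\Pone} f$. Finally, $\|f_M\|_M \leq \|g_{\ell(M)}\|_k + 1/\ell(M)$ and $\|g_\ell\|_k \to \|f\|_k$ give $\limsup_M \|f_M\|_M \leq \|f\|_k$.

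The only real obstacle is the diagonal bookkeeping in Step 2: one needs simultaneously the uniform sup-norm MFL estimate and the norm bound, which is why the pointwise bound $|h(\mu)| \leq \|h\|_k \sqrt{C_k}$ (mediating between $H_k$-convergence of the $g_\ell$ and sup-norm convergence on the compact $\Pb(X)$) is essential. Step 1 itself is essentially a routine computation once one realizes that only \eqref{eq.conv_kernels} and uniform continuity of the continuous $k$ on the compact $\Pb(X)^2$ are needed.
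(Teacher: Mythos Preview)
Your proposal is correct and follows essentially the same route as the paper: approximate $f$ by pre-Hilbert functions $g_\ell\in H_k^{\text{pre}}$, lift each $g_\ell$ to $H_M$ by replacing the anchor measures $\mu_n$ with empirical approximants and $k$ with $k_M$ (using $k_M\convmode{\Pone}k$ and the continuity of $k$ on the compact $\Pb(X)^2$), then run a diagonal argument with the bound $|h(\mu)|\leq\|h\|_k\sqrt{C_k}$. The only cosmetic difference is that the paper secures $\|f_n^{\text{pre}}\|_k\leq\|f\|_k$ via an orthogonal projection, whereas you use the equivalent (and slightly simpler) observation $\|g_\ell\|_k\to\|f\|_k$ from the reverse triangle inequality.
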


\section{Approximation with kernels in the mean field limit} \label{sec.approx_mfl}
Kernel-based machine learning methods use in general an RKHS as the hypothesis space, and learning often reduces to a search or optimization problem over this function space. For this reason, it is important to investigate the approximation properties of a given kernel and its associated RKHS as well as to ensure that the learning problem over an RKHS (which is in general an infinite-dimensional object) can be tackled with finite computations.

The next result asserts that, under a uniformity condition, the approximation power of the finite-input kernels $k_M$ is inherited by the mean field limit kernel.
\begin{proposition} \label{prop.simple_approx}
	For $M\in\Np$, let $\mathcal{F}_M$ be the set of symmetric functions that are continuous w.r.t. $(\vec x,\vec x')\mapsto \dkm(\muhat[\vec x], \muhat[\vec x'])$.
	Let $\mathcal{F}\subseteq C^0(\Pb(X),\R)$ such that for all $f\in\mathcal{F}$ and $\epsilon>0$ there exist $B \in\Rnn$ and sequences $f_M\in\mathcal{F}_M$, $\hat f_M \in H_M$, $M\in\Np$, such that
 	1) $f_M \convmode{\Pone} f$
	2) $\|f_M - \hat f_M \|_\infty \leq \epsilon$ for all $M\in\Np$
	3) $\|\hat f_M \|_M\leq B$ for all $M\in\Np$.
	Then for all $f\in \mathcal{F}$ and $\epsilon>0$, there exists $\hat f \in H_k$ with $\|f- \hat f\|_\infty\leq \epsilon$.
\end{proposition}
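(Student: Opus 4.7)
Given $f \in \mathcal{F}$ and $\epsilon > 0$, the plan is to apply the hypothesis to obtain the constant $B$ and the sequences $(f_M)_M$ and $(\hat f_M)_M$ satisfying conditions 1)--3), then extract the candidate approximant $\hat f$ as the mean field limit of a subsequence of $(\hat f_M)_M$, and finally verify $\|f - \hat f\|_\infty \leq \epsilon$ via a triangle-inequality estimate on empirical measures followed by a continuity/density argument. Since $\|\hat f_M\|_M \leq B$ for all $M$, part 2 of Theorem \ref{thm.mfl_rkhs} yields a subsequence $(\hat f_{M_\ell})_\ell$ and some $\hat f \in H_k$ with $\hat f_{M_\ell} \convmode{\Pone} \hat f$ and $\|\hat f\|_k \leq B$; this $\hat f$ will be the proposed approximant.

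The core estimate is the triangle inequality: for every $\vec x \in X^{M_\ell}$,
$|f(\muhat[\vec x]) - \hat f(\muhat[\vec x])| \leq |f(\muhat[\vec x]) - f_{M_\ell}(\vec x)| + |f_{M_\ell}(\vec x) - \hat f_{M_\ell}(\vec x)| + |\hat f_{M_\ell}(\vec x) - \hat f(\muhat[\vec x])|.$
Taking the supremum over $\vec x \in X^{M_\ell}$, the first and third terms tend to $0$ as $\ell \to \infty$ by $f_{M_\ell} \convmode{\Pone} f$ and $\hat f_{M_\ell} \convmode{\Pone} \hat f$, while the middle term is bounded by $\epsilon$ uniformly in $\ell$ by hypothesis 2). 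Hence there is a null sequence $\eta_\ell \to 0$ with $|f(\nu) - \hat f(\nu)| \leq \epsilon + \eta_\ell$ for every empirical measure $\nu$ of cardinality $M_\ell$.

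To upgrade this from empirical measures of size $M_\ell$ to all of $\Pb(X)$, I would invoke continuity of $f$ and $\hat f$: the former is given by assumption, the latter follows from continuity of the mean field kernel $k$ on the compact space $\Pb(X)^2$ (Proposition \ref{prop.mfl_kernels}) together with the standard estimate $|\hat f(\mu) - \hat f(\nu)|^2 \leq \|\hat f\|_k^2 \bigl(k(\mu,\mu) - 2k(\mu,\nu) + k(\nu,\nu)\bigr)$. For each fixed $\mu \in \Pb(X)$, a quantization argument produces empirical measures $\nu_\ell$ of size exactly $M_\ell$ with $\nu_\ell \to \mu$ weakly: partition $X$ into finitely many balls of small diameter, round the $\mu$-masses of the pieces to nearest multiples of $1/M_\ell$, and place the corresponding numbers of Dirac atoms. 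Passing to the limit $\ell \to \infty$ in $|f(\nu_\ell) - \hat f(\nu_\ell)| \leq \epsilon + \eta_\ell$ via continuity yields $|f(\mu) - \hat f(\mu)| \leq \epsilon$, and since $\mu \in \Pb(X)$ was arbitrary, $\|f - \hat f\|_\infty \leq \epsilon$.

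I expect the main obstacle to be the approximation step producing empirical measures of prescribed cardinality $M_\ell$, since the density of empirical measures in $\Pb(X)$ recalled in Section \ref{sec.mfl_kernels} does not fix their size. This is a routine but slightly delicate Kantorowich--Rubinstein estimate, made possible by the compactness of $X$ and the divergence $M_\ell \to \infty$. Once this density fact is in hand, the remainder of the argument is a clean diagram chase through the definition of mean field convergence and the extraction result from Theorem \ref{thm.mfl_rkhs}.
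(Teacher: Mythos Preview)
Your proposal is correct and follows essentially the same approach as the paper: extract $\hat f\in H_k$ from the uniformly bounded sequence $(\hat f_M)_M$ via Theorem~\ref{thm.mfl_rkhs}, bound $|f-\hat f|$ at empirical measures of size $M_\ell$ by a triangle inequality, and then pass to arbitrary $\mu\in\Pb(X)$ by continuity and density. The only cosmetic difference is that the paper applies the hypothesis with tolerance $\epsilon/5$ and uses a five-term triangle inequality (inserting the continuity terms for $f$ and $\hat f$ explicitly), whereas you apply it with $\epsilon$, obtain $|f(\nu)-\hat f(\nu)|\leq\epsilon+\eta_\ell$, and let $\eta_\ell\to 0$ in the limit; both routes land on the same bound.
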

Intuitively, the set $\mathcal{F}$ consists of all continuous functions on $\Pb(X)$ that arise as a mean field limit of functions which can be uniformly approximated by uniformly norm-bounded RKHS functions.
The result then states (to use a somewhat imprecise terminology) that the RKHS $H_k$ is dense in $\mathcal{F}$.
We can interpret this as an appropriate mean field variant of the universality property of kernels: a kernel on a compact metric space is called universal if its associated RKHS is dense w.r.t. the supremum norm in the space of continuous functions, and many common kernels are universal, cf. e.g. \cite[Section~4.6]{SC08}. In our setting, ideally universality of the finite-input kernels $k_M$ is inherited by the mean field limit kernel $k$. 
However, since the mean field limit can be interpreted as a form of smoothing limit, some uniformity requirements should be expected. Proposition \ref{prop.simple_approx} provides exactly such a condition.

\begin{remark} \label{remark.approx_space}
	In Proposition \ref{prop.simple_approx}, the set $\mathcal{F}$ is a subvectorspace of $C^0(\Pb(X),\R)$.
	Furthermore, if the $\Pone$-convergence in the definition of $\mathcal{F}$ is uniform, then $\mathcal{F}$ is closed.
\end{remark}

Since $k_M$ and $k$ are kernels, we have the usual representer theorem for their corresponding RKHSs, cf. e.g. \cite{scholkopf2001generalized}.
A natural question is then whether we have mean field convergence of the minimizers and their representation. This is clarified by the next result.
\begin{theorem} \label{thm.mfl_rt}
Let $N\in\Np$,  $\mu_1,\ldots,\mu_N\in\Pb(X)$ and for $n=1,\ldots,N$ let $\vec x^{[M]}_n \in X^M$, $M\in\Np$, such that $\muhat[\vec x^{[M]}_n] \convmode{\dkm} \mu_n$ for $M\rightarrow\infty$.
Let $L: \R^N \rightarrow \Rnn$ be continuous and strictly convex and $\lambda >0$. For each $M\in\Np$ consider the problem
\begin{equation} \label{eq.opt_discrete}
	\min_{f \in H_M} L(f(\vec x^{[M]}_1), \ldots, f(\vec x^{[M]}_N)) + \lambda \|f\|_M,
\end{equation}
as well as the problem
\begin{equation} \label{eq.opt_mfl}
	\min_{f \in H_k} L(f(\mu_1), \ldots, f(\mu_N)) + \lambda \|f\|_k.
\end{equation}
Then for each $M\in\Np$ problem \eqref{eq.opt_discrete} has a unique solution $f_M^\ast$, which is of the form $f_M^\ast = \sum_{n=1}^N \alpha^{[M]}_n k_M(\cdot, \vec x^{[M]}_n) \in H_M$,
with $\alpha^{[M]}_1,\ldots,\alpha^{[M]}_N\in\R$, 
and problem \eqref{eq.opt_mfl} has a unique solution $f^\ast$, which is of the form
$f^\ast = \sum_{n=1}^N \alpha_n k(\cdot, \mu_n) \in H_k$,
with $\alpha_1,\ldots,\alpha_N\in\R$.
Furthermore, there exists a subsequence $(f^\ast_{M_\ell})_\ell$ such that $f^\ast_{M_\ell} \convmode{\Pone} f^\ast$ and
\begin{equation}
	L(f^\ast_{M_\ell}(\vec x^{[M_\ell]}_1), \ldots, f^\ast_{M_\ell}(\vec x^{[M_\ell]}_N)) + \lambda \|f^\ast_{M_\ell}\|_{M_\ell} \rightarrow  L(f^\ast(\mu_1), \ldots, f^\ast(\mu_N)) + \lambda \|f^\ast\|_k.
\end{equation}
for $\ell\rightarrow\infty$.
\end{theorem}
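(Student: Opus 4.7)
My plan rests on two pillars: the classical representer-theorem argument for existence, uniqueness, and the kernel-expansion form of the individual minimizers; and a $\Gamma$-convergence style argument driven by Lemmas~\ref{lem.gamma_liminf} and \ref{lem.gamma_limsup} to relate the discrete and mean field problems.

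First, I would handle both problems \eqref{eq.opt_discrete} and \eqref{eq.opt_mfl} in parallel, since they share the same structure. Existence of a minimizer follows from continuity, convexity, and coercivity of the objective (the term $\lambda\|\cdot\|$ provides coercivity, so a minimizing sequence is norm-bounded, and weak compactness plus weak lower semicontinuity of the convex continuous functional delivers a minimizer). The kernel representation follows by orthogonally decomposing any candidate $f = f_\parallel + f_\perp$ with $f_\parallel$ in the finite-dimensional span of the kernel sections at the data points: by the reproducing property the loss depends only on $f_\parallel$, while $\|f\|^2 = \|f_\parallel\|^2 + \|f_\perp\|^2$ forces $f_\perp = 0$. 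For uniqueness, two minimizers $f_1, f_2$ would have their midpoint $f_3 = (f_1+f_2)/2$ also be a minimizer by convexity; strict convexity of $L$ then forces $f_1(\vec x_n) = f_2(\vec x_n)$ at every evaluation point and hence $\|f_1\|=\|f_2\|=\|f_3\|$, and the parallelogram identity in the Hilbert norm collapses them to a single element.

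For the convergence statement, I would first secure a uniform bound $\|f^\ast_M\|_M \leq L(0,\ldots,0)/\lambda$ by comparing against the trivial candidate $f\equiv 0$, and then invoke Theorem~\ref{thm.mfl_rkhs}(2) to extract a subsequence with $f^\ast_{M_\ell} \convmode{\Pone} \tilde f$ for some $\tilde f \in H_k$. The critical step is to pass $\Pone$-convergence through pointwise evaluation: since $\tilde f$ is continuous on $\Pb(X)$ (as an element of the RKHS of the continuous kernel $k$), plugging $\vec x = \vec x^{[M_\ell]}_n$ into the uniform bound and combining with $\muhat[\vec x^{[M_\ell]}_n] \convmode{\dkm} \mu_n$ yields $f^\ast_{M_\ell}(\vec x^{[M_\ell]}_n) \to \tilde f(\mu_n)$. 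Continuity of $L$ together with Lemma~\ref{lem.gamma_liminf} then gives the $\liminf$ bound $G(\tilde f) \leq \liminf_{\ell\to\infty} G_{M_\ell}(f^\ast_{M_\ell})$, where I write $G, G_M$ for the two objective functionals. For the matching upper bound, given any $g \in H_k$ Lemma~\ref{lem.gamma_limsup} supplies $g_M \in H_M$ with $g_M \convmode{\Pone} g$ and $\limsup_M \|g_M\|_M \leq \|g\|_k$; the same evaluation-convergence argument produces $\limsup_\ell G_{M_\ell}(g_{M_\ell}) \leq G(g)$, and optimality of $f^\ast_{M_\ell}$ for $G_{M_\ell}$ forces $G(\tilde f) \leq G(g)$. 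Thus $\tilde f$ minimizes $G$, and the already-proved uniqueness identifies $\tilde f = f^\ast$; specializing the $\limsup$ bound to $g = f^\ast$ and combining with the $\liminf$ bound yields convergence of the objective values.

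The main obstacle I anticipate is the simultaneous handling of the two distinct convergence modes in play: the $\Pone$-type uniform-in-$\vec x$ convergence of functions and the weak (Kantorovich--Rubinstein) convergence of the empirical measures $\muhat[\vec x^{[M]}_n]$ to $\mu_n$. Obtaining pointwise evaluation convergence $f^\ast_{M_\ell}(\vec x^{[M_\ell]}_n) \to f^\ast(\mu_n)$ hinges on continuity of the candidate limit $\tilde f$ on $\Pb(X)$, which ultimately rests on $\tilde f$ lying in $H_k$ with $k$ continuous; the strengthening in Theorem~\ref{thm.mfl_rkhs}(2) over the weaker inclusion in \cite{fiedleretal_mfl_kernels} is therefore essential here. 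Once evaluation convergence is secured, the remainder of the $\Gamma$-convergence bookkeeping runs routinely through the two norm-inequality lemmas.
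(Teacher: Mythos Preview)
Your proposal is correct and follows essentially the same route as the paper: a uniform bound $\|f_M^\ast\|_M\le L(0,\ldots,0)/\lambda$, compactness via Theorem~\ref{thm.mfl_rkhs}(2), evaluation convergence from $\Pone$-convergence plus continuity of the limit, and the $\liminf$/$\limsup$ inequalities from Lemmas~\ref{lem.gamma_liminf} and~\ref{lem.gamma_limsup}. The only cosmetic difference is that the paper packages the $\Gamma$-convergence bookkeeping into a separate auxiliary proposition (Proposition~\ref{prop.gamma_conv_mfl}) and cites the standard representer theorem for existence/uniqueness, whereas you unroll both arguments inline.
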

The main point of this result is the convergence of the minimizers, which we will establish using a $\Gamma$-convergence argument.
This approach seems to have been introduced by \cite{fornasier2014mean,bongini2017inferring,fornasier2019mean} originally in the context of multi-agent systems.

\begin{remark}
    An inspection of the proof reveals that in Theorem \ref{thm.mfl_rt} we can replace the term $\lambda\|\cdot\|_M$ and $\lambda\|\cdot\|_k$ by $\Omega(\|\cdot\|_M)$ and $\Omega(\|\cdot\|_k)$, where $\Omega:\Rnn\rightarrow\Rnn$ is a nonnegative, strictly increasing and continuous function. 
\end{remark}

\section{Support Vector Machines with mean field limit kernels} \label{sec.svm}
We now turn to the mean field limit of kernels in the context of statistical learning theory, focusing on SVMs. We first briefly recall the standard setup of statistical learning theory, and formulate an appropriate mean field limit thereof. We then investigate empirical and infinite-sample solutions of SVMs and their mean field limits, as well as the convergence of the corresponding risks.
\paragraph{Statistical learning theory setup} 
We now introduce the standard setup of statistical learning theory, following mostly \cite[Chapters~2~and~5]{SC08}.
Let $\inputset{X}\not=\emptyset$ (associated with some $\sigma$-algebra) and $\emptyset\not=Y\subseteq\R$ closed (associated with the corresponding Borel $\sigma$-algebra).
A \defm{loss function} is in this setting a measurable function $\ell: \inputset{X} \times Y \times \R \rightarrow \Rnn$.
Let $P$ be a probability distribution on $\inputset{X}\times Y$ and $f:\inputset{X}\rightarrow\R$ a measurable function, then the \defm{risk of $f$ w.r.t. $P$ and loss function $\ell$} is defined by
\begin{equation*}
    \risk_{\ell,P}(f) = \int_{\inputset{X}\times Y} \ell(x,y,f(x))\mathrm{d}P.
\end{equation*}
Note that this is always well-defined since $(x,y)\mapsto \ell(x,y,f(x))$ is a measurable and nonnegative function.
For a set $H\subseteq \R^\inputset{X}$ of measurable functions we also define the \defm{minimal risk over $H$} by
\begin{equation*}
    \risk_{\ell,P}^{H\ast} = \inf_{f \in H} \risk_{\ell,P}(f).
\end{equation*}
If $H$ is a normed vector space, we additionally define the \defm{regularized risk of $f\in H$} and the \defm{minimal regularized risk over $H$} by
\begin{equation*}
    \risk_{\ell,P,\lambda}(f) = \risk_{\ell,P}(f) + \lambda \|f\|_H^2, \qquad \risk_{\ell,P,\lambda}^{H\ast} = \inf_{f\in H}  \risk_{\ell,P,\lambda}(f),
\end{equation*}
where $\lambda\in\Rp$ is the \defm{regularization parameter}.
A \defm{data set of size $N\in\Np$} is a tuple $D_N=((x_1,y_1), \ldots, (x_N,y_N))\in(\inputset{X}\times Y)^N$ 
and for a function $f:\inputset{X}\rightarrow\R$ we define its \defm{empirical risk} by
\begin{equation*}
    \risk_{\ell,D_N}(f) = \frac1N \sum_{n=1}^N \ell(x_n, y_n, f(x_n)).
\end{equation*}
If $H$ is a normed vector space and $f\in H$, we define additionally the \defm{regularized empirical risk} and the \defm{minimal regularized empirical risk over $H$} by
\begin{equation*}
     \risk_{\ell,D_N,\lambda}(f) = \risk_{\ell,D_N}(f) + \lambda \|f\|_H^2, \qquad \risk_{\ell,D_N,\lambda}^{H\ast} = \inf_{f\in H}  \risk_{\ell,D_N,\lambda}(f),
\end{equation*}
where  $\lambda\in\Rp$ is again the regularization parameter.
Note that the notation for the empirical risks is consistent with the risk w.r.t. a probability distribution $P$, if we identify a data set $D_N$ by the corresponding empirical distribution $\frac1N \sum_{n=1}^N \delta_{(x_n,y_n)}$.

In the following, $H$ will be a RKHS and a minimizer (assuming existence and uniqueness) of $\risk_{\ell,P,\lambda}^{H\ast}$ will be called an \defm{infinite-sample support vector machine (SVM)}. Similarly, $\risk_{\ell,D_N,\lambda}^{H\ast}$ will be called the \defm{empirical solution of the SVM w.r.t. the data set $D_N$}. Note that this is the common terminology in statistical learning theory, cf. \cite{SC08}, and corresponds to (empirical) risk minization with Tikhonov regularization.

\paragraph{Statistical learning theory setup in the mean field limit}
Let now $\emptyset\not=Y\subseteq \R$ be compact and
$\ell_M: X^M \times Y \times \R \rightarrow \Rnn$, $M\in\N$, such that
1) $\ell_M(\sigma \vec x, y, t) = \ell_M(\vec x, y, t)$ for all $\vec x \in X^M$, $\sigma \in \Permutations_M$, $y\in Y$, $t\in\R$;
2) there exists $C_\ell\in\Rnn$ and a nondecreasing function $b:\Rnn\rightarrow\Rnn$ with $|\ell_M(\vec x,y,t)|\leq C_\ell + b(|t|)$ for all $M\in\N$ and $\vec x \in X^M, y\in Y, t\in \R$;
3) there exists $L_\ell\in\Rnn$ with
		\begin{equation*}
			|\ell_M(\vec x_1, y_1, t_1) - \ell_M(\vec x_2, y_2, t_2)| \leq L_\ell( \dkm(\muhat[\vec x_1], \muhat[\vec x_2]) + |y_1-y_2| + |t_1 - t_2|)
		\end{equation*}
		for all $\vec x_1,x_2 \in X^M$, $y_1,y_1'\in Y, t_1,t_2\in \R$. 
In particular, all $\ell_M$ are measurable (assuming the Borel $\sigma$-algebra on $X^M$) and hence are loss functions on $X^M\times Y$.
Proposition \ref{prop.mfl_func_extended} ensures the existence of a subsequence $(\ell_{M_m})_m$ and an $L_\ell$-Lipschitz continuous function $\ell: \Pb(X)\times Y \times \R \rightarrow \R$ with 
\begin{equation}
	\lim_{M\rightarrow\infty} \sup_{\substack{\vec x \in X^{M_m}\\ y \in Y, t\in K}} |\ell_{M_m}(\vec x, y, t) - \ell(\muhat[\vec x],y,t)| = 0
\end{equation}
for all compact $K\subseteq\R$,
and we write again $\ell_{M_m} \convmode{\Pone} \ell$.
\emph{For readability, from now on we switch to this subsequence.}
Furthermore, we also get from Proposition \ref{prop.mfl_func_extended} that there exists some $C_L\in\Rnn$ such that $|\ell(\mu,y,t)|\leq C_L + b(|t|)$ for all $\mu\in\Pb(X),y\in Y,t\in\R$.
\begin{remark} \label{remark.nemitskii_loss}
	Note that, for Proposition \ref{prop.mfl_func_extended} to apply, it is enough to assume in item 2) above the existence of a function $b:\R\rightarrow\Rnn$ with $|\ell_M(\vec x,y,t)|\leq C_\ell + b(|t|)$.
	However, we chose the slightly stronger condition that $b$ is nondecreasing, since then $\ell_M$ is a \defm{Nemitskii loss} according to \cite[Definition~2.16]{SC08}. Since the function with constant value $C_\ell$ is actually $P_M$-integrable, this means that $\ell_M$ is even a \defm{$P_M$-integrable Nemitskii loss} according to \cite{SC08}. A similar remark then applies to $\ell$.
\end{remark}
\begin{lemma} \label{lem.cvx_ell}
	The function $\ell$ is nonnegative. Furthermore, if all $\ell_M$ are convex loss functions \cite[Definition~2.12]{SC08}, i.e., if for all $M\in\Np$, $\vec x \in X^M, y\in Y, t_1,t_2\in\R$ and $\lambda \in (0,1)$ we have
	\begin{equation}
		\ell_M(\vec x, y, \lambda t_1 + (1-\lambda) t_2) \leq \lambda \ell_M(\vec x, y, t_1) + (1-\lambda)\ell_M(\vec x, y, t_2),
	\end{equation}
	then so is $\ell$.
\end{lemma}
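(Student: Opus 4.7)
The plan is to prove both claims by a direct approximation argument, using (i) the density of empirical measures in $\Pb(X)$ with respect to $\dkm$, (ii) the continuity of $\ell$ guaranteed by Proposition \ref{prop.mfl_func_extended}, and (iii) the uniform mean field convergence $\ell_M \convmode{\Pone} \ell$ on any compact subset of the $t$-variable.

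Fix $\mu \in \Pb(X)$, $y\in Y$, and $t\in\R$. Since the empirical measures are dense in $(\Pb(X),\dkm)$, pick for each $M\in\Np$ some $\vec x^{[M]}\in X^M$ with $\dkm(\muhat[\vec x^{[M]}],\mu)\to 0$ as $M\to\infty$. Then I would split
\begin{equation*}
|\ell_M(\vec x^{[M]},y,t) - \ell(\mu,y,t)| \leq |\ell_M(\vec x^{[M]},y,t) - \ell(\muhat[\vec x^{[M]}],y,t)| + |\ell(\muhat[\vec x^{[M]}],y,t) - \ell(\mu,y,t)|.
\end{equation*}
The first summand goes to zero by applying the mean field convergence with $K=\{t\}$, and the second goes to zero by the Lipschitz continuity of $\ell$ (from Proposition \ref{prop.mfl_func_extended}). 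Since $\ell_M(\vec x^{[M]},y,t)\geq 0$ for every $M$, passing to the limit yields $\ell(\mu,y,t)\geq 0$, proving nonnegativity.

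For convexity, fix $\mu,y,t_1,t_2$ and $\lambda\in(0,1)$, and choose the same sequence $\vec x^{[M]}$ as above. For each $M$ the assumed convexity of $\ell_M$ in the third argument gives
\begin{equation*}
\ell_M(\vec x^{[M]},y,\lambda t_1 + (1-\lambda) t_2) \leq \lambda \ell_M(\vec x^{[M]},y,t_1) + (1-\lambda)\ell_M(\vec x^{[M]},y,t_2).
\end{equation*}
Now I would apply the convergence argument from the previous paragraph to each of the three terms (the set $K=\{t_1,t_2,\lambda t_1+(1-\lambda)t_2\}\subseteq \R$ is finite, hence compact, so mean field convergence is uniform over this set) to obtain
\begin{equation*}
\ell(\mu,y,\lambda t_1 + (1-\lambda) t_2) \leq \lambda \ell(\mu,y,t_1) + (1-\lambda)\ell(\mu,y,t_2),
\end{equation*}
which is the desired convexity of $\ell$ in its third argument.

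There is no real obstacle here; both statements are passed through the mean field limit pointwise via the density of empirical measures and continuity of $\ell$. The only subtlety to be careful about is that nonnegativity and convexity are properties that only involve evaluation at fixed $(\mu,y,t)$, so I may freely choose an approximating sequence $\muhat[\vec x^{[M]}]\to\mu$ and do not need any stronger uniformity; the compactness of the relevant $t$-set (a finite set of reals) is what makes the mean field convergence directly applicable.
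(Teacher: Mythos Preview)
Your proposal is correct and follows essentially the same approach as the paper: approximate $\mu$ by empirical measures $\muhat[\vec x^{[M]}]$ and pass the pointwise inequalities (nonnegativity, convexity in $t$) through the mean field limit using continuity of $\ell$ together with $\ell_M\convmode{\Pone}\ell$. The only cosmetic differences are that the paper handles nonnegativity by pointing to the McShane-extension construction in the proof of Proposition~\ref{prop.mfl_func_extended} (where $F_M\ge 0$ is immediate from $\ell_M\ge 0$) rather than your direct limit argument, and it takes the compact $t$-set to be the interval $[\min\{t_1,t_2\},\max\{t_1,t_2\}]$ instead of your finite three-point set.
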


\paragraph{Empirical SVM solutions} \label{sec.empirical_svm}
Given data sets $D_N^{[M]}=\left((\vec x^{[M]}_1,y^{[M]}_1),\ldots,(\vec x^{[M]}_N,y^{[M]}_N) \right)$ for all $M\in\Np$ with $\vec x^{[M]}_n \in X^M$, $y^{[M]}_n \in Y$,
and $D_N=\left( (\mu_1,y_1),\ldots,(\mu_N,y_N)\right)$ with $\mu_n \in \Pb(X)$ and $y_n\in Y$, we write $D^{[M]}_N \convmode{\Pone} D_N$ if $\muhat[\vec x^{[M]}_n] \convmode{\dkm} \mu_n$ and $y_n^{[M]}\rightarrow y_n$ (where $M\rightarrow\infty$) for all $n=1,\ldots,N$.
We can interpret this as mean field convergence of the data sets.

Furthermore, consider the empirical risk of hypothesis $f_M \in H_M$ (and $f \in H_k$) on data set $D^{[M]}_N$ (and $D_N$)
\begin{equation*}
	\risk_{\ell_M,D^{[M]}_N}(f_M) = \frac 1 N \sum_{n=1}^N \ell_M(\vec x^{[M]}_n, y_n^{[M]}, f_M(\vec x_n^{[M]})),
	\qquad
	\risk_{\ell,D_N}(f) = \frac 1 N \sum_{n=1}^N \ell(\mu_n, y_n, f(\mu_n)),
\end{equation*}
and the corresponding regularized risk
\begin{align*}
	\risk_{\ell_M,D^{[M]}_N,\lambda}(f_M) & = \frac 1 N \sum_{n=1}^N \ell_M(\vec x^{[M]}_n, y_n^{[M]}, f_M(\vec x_n^{[M]})) + \lambda \|f_M\|^2_M \\
	\risk_{\ell,D_N,\lambda}(f)&  = \frac 1 N \sum_{n=1}^N \ell(\mu_n, y_n, f(\mu_n)) + \lambda \|f\|_k^2,
\end{align*}
where $\lambda\in\Rp$ is the regularization parameter.

\begin{proposition} \label{prop.conv_empirical}
	Let $\lambda>0$, assume that all $\ell_M$ are convex and let $D_N^{[M]}$, $D_N$ be finite data sets with $D_N^{[M]} \convmode{\Pone} D_N$.
	Then for all $M\in\Np$, $H_M \ni f_M \mapsto \risk_{\ell_M,D_N^{[M]},\lambda}(f_M)$ has a unique minimizer $f^\ast_{M,\lambda}\in H_M$ and $H_k \ni f \mapsto \risk_{\ell,D_N,\lambda}(f)$ has a unique minimizer $f_\lambda^\ast \in H_k$.
	Furthermore, for all $M\in\Np$ there exist $\alpha_n^{[M]}\in \R$, $n=1,\ldots,N$, such that
    $f^\ast_{M,\lambda} = \sum_{n=1}^N \alpha_n^{[M]} k_M(\cdot, \vec x_n^{[M]})$,
	and there exist $\alpha_1,\ldots,\alpha_N\in\R$ such that
    $f_\lambda^\ast = \sum_{n=1}^N \alpha_n k(\cdot, \mu_n)$.
	Finally, there exists a subsequence $(f^\ast_{M_m,\lambda})_m$ such that $f^\ast_{M_m,\lambda} \convmode{\Pone} f_\lambda^\ast$ and
    $\risk_{\ell_{M_m},D_N^{[M_m]},\lambda}(f^\ast_{M_m,\lambda})
		\rightarrow
		\risk_{\ell,D_N,\lambda}(f_\lambda^\ast)$
    for $m\rightarrow\infty$.
\end{proposition}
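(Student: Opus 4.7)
The plan has two distinct parts: well-posedness of each regularized problem together with the representer form (which follow from standard convex-analytic arguments), and the mean field convergence (which I tackle by a $\Gamma$-convergence argument analogous to the one used for Theorem \ref{thm.mfl_rt}, relying on Lemmas \ref{lem.gamma_liminf} and \ref{lem.gamma_limsup} together with Theorem \ref{thm.mfl_rkhs}(2)).

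For existence, uniqueness, and the representer form, observe that for each $M$ the map $H_M \ni f_M \mapsto \risk_{\ell_M,D_N^{[M]},\lambda}(f_M)$ is a sum of continuous convex terms (convexity follows from convexity of $\ell_M$ in its third argument composed with the continuous linear evaluations $f_M \mapsto f_M(\vec x_n^{[M]})$) and the strictly convex, coercive regularizer $\lambda\|\cdot\|_M^2$. Combined with continuity in the RKHS norm (from Lipschitz continuity of $\ell_M$ in its third argument and the reproducing property), this yields a unique minimizer $f^\ast_{M,\lambda}$. The standard orthogonal decomposition onto $\mathrm{span}\{k_M(\cdot,\vec x_n^{[M]}) : n=1,\ldots,N\}$, together with the strict increase of $\lambda\|\cdot\|_M^2$ under nonzero orthogonal perturbations, then forces the stated representer form. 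By Lemma \ref{lem.cvx_ell}, $\ell$ is also a nonnegative convex loss, and the identical argument on $H_k$ produces $f^\ast_\lambda$.

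Next I establish a uniform RKHS norm bound. Evaluating the regularized risks at $f=0$ and using the uniform bound on the losses gives $\lambda\|f^\ast_{M,\lambda}\|_M^2 \leq \risk_{\ell_M,D_N^{[M]},\lambda}(0) \leq C_\ell + b(0)$, so $\|f^\ast_{M,\lambda}\|_M \leq B := \sqrt{(C_\ell+b(0))/\lambda}$ uniformly in $M$. Combined with the reproducing property and the uniform kernel bound $C_k$ from Proposition \ref{prop.mfl_kernels}, this gives $\|f^\ast_{M,\lambda}\|_\infty \leq B\sqrt{C_k}$, so all relevant function values lie in the fixed compactum $K_0 := [-B\sqrt{C_k},B\sqrt{C_k}]$. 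Theorem \ref{thm.mfl_rkhs}(2) then furnishes a subsequence $(f^\ast_{M_m,\lambda})_m$ and $g \in H_k$ with $f^\ast_{M_m,\lambda}\convmode{\Pone} g$ and $\|g\|_k\leq B$.

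The $\Gamma$-convergence argument identifies $g=f^\ast_\lambda$ and yields risk convergence. Combining $f^\ast_{M_m,\lambda}\convmode{\Pone} g$, $D_N^{[M_m]}\convmode{\Pone} D_N$, the $\Pone$-convergence $\ell_{M_m}\convmode{\Pone}\ell$ uniformly on $X^{M_m}\times Y \times K_0$, and continuity of $\ell$, a triangle-inequality argument yields $\risk_{\ell_{M_m},D_N^{[M_m]}}(f^\ast_{M_m,\lambda}) \to \risk_{\ell,D_N}(g)$; adding Lemma \ref{lem.gamma_liminf} applied to $\|\cdot\|^2$ gives the liminf inequality $\risk_{\ell,D_N,\lambda}(g) \leq \liminf_m \risk_{\ell_{M_m},D_N^{[M_m]},\lambda}(f^\ast_{M_m,\lambda})$. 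For the recovery direction, given any competitor $h \in H_k$, Lemma \ref{lem.gamma_limsup} supplies $h_M \in H_M$ with $h_M \convmode{\Pone} h$ and $\limsup_M \|h_M\|_M \leq \|h\|_k$; the same triple limit passage (on the fixed compactum determined by $\sup_M\|h_M\|_M<\infty$) gives $\limsup_M \risk_{\ell_M,D_N^{[M]},\lambda}(h_M) \leq \risk_{\ell,D_N,\lambda}(h)$. Chaining these via optimality of $f^\ast_{M_m,\lambda}$ against $h_{M_m}$ yields $\risk_{\ell,D_N,\lambda}(g) \leq \risk_{\ell,D_N,\lambda}(h)$ for every $h \in H_k$; uniqueness then forces $g=f^\ast_\lambda$, and taking $h=f^\ast_\lambda$ collapses the chain into the claimed risk convergence. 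The main obstacle is precisely the joint limit passage in the empirical risks, which requires simultaneously controlling three convergences (of the losses $\ell_M$, the data $D_N^{[M]}$, and the minimizers $f^\ast_{M_m,\lambda}$) on a domain that is compact only in the third argument of the losses; the uniform sup-norm bound from the preceding paragraph is what pins function evaluations inside a fixed compactum and makes this joint passage valid.
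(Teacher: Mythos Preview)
Your proof is correct and follows essentially the same approach as the paper: existence/uniqueness/representer via standard RKHS arguments, a uniform norm bound from evaluating at $0$, and then a $\Gamma$-convergence argument combining Lemmas~\ref{lem.gamma_liminf} and~\ref{lem.gamma_limsup} with Theorem~\ref{thm.mfl_rkhs}(2). The only cosmetic difference is that the paper packages the final chaining step into the abstract Proposition~\ref{prop.gamma_conv_mfl}, whereas you unroll it inline; your uniform bound $C_\ell+b(0)$ is in fact slightly sharper than the paper's.
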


\paragraph{Convergence of distributions and infinite-sample SVMs in the mean field limit} \label{sec.slt_distr_mfl}
We now turn to the question of mean field limits of distributions and the associated learning problems and SVM solutions.
Let $(P^{[M]})_M$ be a sequence of distributions, where $P^{[M]}$ is a probability distribution on $X^M \times Y$, and let $P$ be a probability distribution on $\Pb(X)\times Y$.
We say that \defm{$P^{[M]}$ converges in mean field to $P$} and write $P^{[M]} \convmode{\Pone} P$, if for all continuous (w.r.t. the product topology on $\Pb(X)\times Y $) and bounded \footnote{Of course, since $Y$ is compact, all continuous $f$ are bounded in our present setting.} 
$f$ we have
\begin{equation}
	\int_{X^M \times Y} f(\muhat[\vec x],y) \mathrm{d}P^{[M]}(\vec x,y) \rightarrow \int_{\Pb(X)\times Y} f(\mu,y) \mathrm{d}P(\mu,y).
\end{equation}
This convergence notion of probability distributions (on different input spaces) appears to be not standard, but it is a natural concept in the present context. Essentially, it is weak (also called narrow) convergence of probability distributions adapated to our setting. 

Consider now data sets $D_N^{[M]}$, $D_N$, with $D_N^{[M]} \convmode{\Pone} D_N$, then we also have convergence in mean field of the datasets, interpreted as empirical distributions: %
let $f \in C^0(\Pb(X)\times Y,\R)$ be bounded, then
\begin{align*}
	\int_{X^M \times Y} f(\muhat[\vec x],y) \mathrm{d}D_N^{[M]}(\vec x,y) & =  \frac 1 N \sum_{n=1}^N f(\muhat[\vec x^{[M]}_n], y_n^{[M]}) \\
	\xrightarrow{M\rightarrow\infty} & \frac 1 N \sum_{n=1}^N f(\mu_n, y_n) 
	= \int_{\Pb(X) \times Y} f(\mu,y) \mathrm{d}D_N(\mu,y).
\end{align*}
This shows that the mean field convergence of probability distributions as defined here is a direct generalization of the natural notion of mean field convergence of data sets.

Finally, consider the risk of hypothesis $f_M\in H_M$ and $f\in H_k$ w.r.t. the distribution $P^{[M]}$ and $P$, respectively,
\begin{align*}
	\risk_{\ell_M, P^{[M]}}(f_M) & = \int_{X^M \times Y} \ell_M(\vec x, y, f_M(\vec x)) \mathrm{d}P^{[M]}(\vec x, y) \\
	\risk_{\ell, P}(f) & = \int_{\Pb(X) \times Y} \ell(\mu, y, f(\mu)) \mathrm{d}P(\mu, y),
\end{align*}
as well as the minimal risks
\begin{align*}
	\risk_{\ell_M, P^{[M]}}^{H_M \ast} = \inf_{f_M \in H_M} \risk_{\ell_M, P^{[M]}}(f_M)
    \qquad
	\risk_{\ell, P}^{H_k \ast} = \inf_{f \in H_k} \risk_{\ell, P}(f).
\end{align*}

Our first result ensures that mean field convergence of distributions $P^{[M]}$, loss functions $\ell_M$ and data sets $D_N^{[M]}$ ensures the convergence of the corresponding risks of the empirical SVM solutions.
\begin{lemma} \label{lem.conv_risk_empirical_svm}
	Consider the situation and notation of Proposition \ref{prop.conv_empirical} and assume that $P^{[M]} \convmode{\Pone} P$. We then have 
    $\risk_{\ell_{M_m}, P^{[M_m]}}(f^\ast_{M_m,\lambda}) \rightarrow \risk_{\ell, P}(f^\ast_\lambda)$ 
    for $m\rightarrow\infty$.
\end{lemma}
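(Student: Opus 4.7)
The plan is to write the difference $\risk_{\ell_{M_m}, P^{[M_m]}}(f^\ast_{M_m,\lambda}) - \risk_{\ell, P}(f^\ast_\lambda)$ as a telescoping sum of two terms, one handled by uniform convergence of integrands and one handled by the weak convergence $P^{[M_m]} \convmode{\Pone} P$. Concretely, I would insert the intermediate quantity $\int \ell(\muhat[\vec x], y, f^\ast_\lambda(\muhat[\vec x])) \, dP^{[M_m]}(\vec x, y)$, so that the difference splits as
\begin{align*}
&\int_{X^{M_m}\times Y} \!\bigl[\ell_{M_m}(\vec x, y, f^\ast_{M_m,\lambda}(\vec x)) - \ell(\muhat[\vec x], y, f^\ast_\lambda(\muhat[\vec x]))\bigr] \, dP^{[M_m]} \\
&\quad + \left( \int_{X^{M_m}\times Y} \ell(\muhat[\vec x], y, f^\ast_\lambda(\muhat[\vec x])) \, dP^{[M_m]} - \int_{\Pb(X)\times Y} \ell(\mu, y, f^\ast_\lambda(\mu)) \, dP \right).
\end{align*}

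The first preparatory step is to obtain a uniform sup-norm bound on the hypotheses. From the definition of $f^\ast_{M,\lambda}$, comparing against the zero function gives $\lambda \|f^\ast_{M,\lambda}\|_M^2 \leq \risk_{\ell_M, D_N^{[M]}}(0) \leq C_\ell + b(0)$, and the reproducing property together with the uniform kernel bound $C_k$ from Proposition \ref{prop.mfl_kernels} yields $\|f^\ast_{M,\lambda}\|_\infty \leq \sqrt{C_k(C_\ell + b(0))/\lambda} =: R$. By Lemma \ref{lem.gamma_liminf} (or directly the analogous estimate for $f^\ast_\lambda$) the same bound $R$ applies to $f^\ast_\lambda$. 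Thus all relevant third-argument values of the loss lie in the compact set $K = [-R,R]$.

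With this at hand, the first summand is controlled by uniform convergence: adding and subtracting $\ell(\muhat[\vec x], y, f^\ast_{M_m,\lambda}(\vec x))$ and using (i) the mean field convergence $\ell_{M_m} \convmode{\Pone} \ell$ uniformly over $K$, together with (ii) the $L_\ell$-Lipschitz continuity of $\ell$ in its third argument combined with $f^\ast_{M_m,\lambda} \convmode{\Pone} f^\ast_\lambda$ from Proposition \ref{prop.conv_empirical}, gives
\[
\sup_{\vec x \in X^{M_m},\, y \in Y} \bigl|\ell_{M_m}(\vec x, y, f^\ast_{M_m,\lambda}(\vec x)) - \ell(\muhat[\vec x], y, f^\ast_\lambda(\muhat[\vec x]))\bigr| \to 0,
\]
so the first term vanishes since $P^{[M_m]}$ is a probability measure. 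For the second summand, the function $g:\Pb(X)\times Y \to \R$ defined by $g(\mu,y) = \ell(\mu, y, f^\ast_\lambda(\mu))$ is continuous (both $\ell$ and $f^\ast_\lambda \in H_k$ are continuous, the latter because $k$ is continuous on a compact space) and bounded by $C_L + b(R)$ on the compact space $\Pb(X)\times Y$. The definition of $P^{[M_m]} \convmode{\Pone} P$ applied to $g$ then forces this term to zero.

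The main obstacle is really the first summand, because the integrand depends on $M_m$ in two entangled ways (through $\ell_{M_m}$ and through $f^\ast_{M_m,\lambda}$). The key insight to bypass this is the uniform sup-norm bound $R$, which confines the third-argument values to a single compact set independent of $m$ and thereby lets us invoke the compact-set form of $\ell_{M_m} \convmode{\Pone} \ell$ from Proposition \ref{prop.mfl_func_extended}; once that is in place, the standard weak-convergence step for the second summand is essentially automatic.
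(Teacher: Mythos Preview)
Your proposal is correct and follows essentially the same approach as the paper: obtain a uniform sup-norm bound on the minimizers to confine the third loss argument to a compact set, then telescope the risk difference into a uniform-convergence piece (handled via $\ell_{M_m}\convmode{\Pone}\ell$ and the Lipschitz continuity in the third argument together with $f^\ast_{M_m,\lambda}\convmode{\Pone}f^\ast_\lambda$) and a weak-convergence piece (handled by $P^{[M_m]}\convmode{\Pone}P$ applied to the continuous bounded test function $(\mu,y)\mapsto\ell(\mu,y,f^\ast_\lambda(\mu))$). The only cosmetic difference is that the paper inserts the intermediate term $\ell_{M_m}(\vec x,y,f^\ast_\lambda(\muhat[\vec x]))$ (using the uniform Lipschitz constant of $\ell_{M_m}$) whereas you insert $\ell(\muhat[\vec x],y,f^\ast_{M_m,\lambda}(\vec x))$ (using the Lipschitz constant of $\ell$); both orderings work equally well.
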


Next, we investigate the mean field convergence of infinite-sample SVM solutions and their associated risks.
Define for $\lambda\in\Rnn$ (and all $M\in\Np$) the regularized risk of $f_M\in H_M$ and $f\in H_k$, respectively, by
\begin{align*}
	\risk_{\ell_M,P^{[M]},\lambda}(f_M) = \risk_{\ell_M,P^{[M]}}(f_M) + \lambda \|f_M\|_M^2,
    \qquad
	\risk_{\ell,P,\lambda}(f) =  \risk_{\ell,P}(f) + \lambda \|f\|_k^2,
\end{align*}
and the corresponding minimal risks by
\begin{align*}
	\risk_{\ell_M,P^{[M]},\lambda}^{H_M\ast} & = \inf_{f_M \in H_M} \risk_{\ell_M,P^{[M]},\lambda}(f_M),
    \qquad
	\risk_{\ell,P,\lambda}^{H_k\ast} = \inf_{f\in H_k} \risk_{\ell,P,\lambda}(f).
\end{align*}
\begin{proposition}\hspace{-0.5em}\label{prop.conv_inf_sample_svm}\footnote{Note that Proposition \ref{prop.conv_empirical} is actually a corollary of this result. However, since the former result is independent of the notion of mean field convergence of probability distributions, we stated and proved it separately.}
	Let $\lambda>0$, assume that all $\ell_M$ are convex loss functions and let $P^{[M]}$ and $P$ be probability distributions on $X^M\times Y$ and $\Pb(X)\times Y$, respectively, with $P^{[M]} \convmode{\Pone} P$.
	Then for all $M\in\Np$, $H_M \ni f_M \mapsto \risk_{\ell_M,P^{[M]},\lambda}(f_M)$ has a unique minimizer $f^\ast_{M,\lambda}\in H_M$ and $H_k \ni f \mapsto \risk_{\ell,P,\lambda}(f)$ has a unique minimizer $f_\lambda^\ast \in H_k$.
	Furthermore, there exists a subsequence $(f^\ast_{M_m,\lambda})_m$ such that $f^\ast_{M_m,\lambda} \convmode{\Pone} f_\lambda^\ast$ and
    $\risk_{\ell_{M_m},P^{[M_m]},\lambda}(f^\ast_{M_m,\lambda})
		\rightarrow
		\risk_{\ell,P,\lambda}(f_\lambda^\ast)$
    for $m\rightarrow \infty$.
    In particular, $\risk^{H_{M_m}\ast}_{\ell_{M_m},P^{[M_m]},\lambda} \rightarrow \risk^{H_k\ast}_{\ell,P,\lambda}$.
\end{proposition}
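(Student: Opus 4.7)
The plan is to mirror the $\Gamma$-convergence structure used in Theorem \ref{thm.mfl_rt}, passing the risk through the mean field convergence of the distributions $P^{[M]}$.

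Existence and uniqueness on both sides follow from standard SVM theory: the losses $\ell_M$ and $\ell$ are convex, $P^{[M]}$- respectively $P$-integrable Nemitskii losses (Remark \ref{remark.nemitskii_loss} and Lemma \ref{lem.cvx_ell}) and the kernels $k_M$, $k$ are bounded by $C_k$. The regularized risks are therefore strictly convex, continuous, and coercive on the respective RKHS, so each has a unique minimizer (cf. \cite[Theorem~5.5]{SC08}). Testing against the zero function yields the a priori bound $\lambda \|f^\ast_{M,\lambda}\|_M^2 \leq \risk_{\ell_M,P^{[M]}}(0) \leq C_\ell + b(0)$, uniformly in $M$. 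Hence Theorem \ref{thm.mfl_rkhs}(2) extracts a subsequence with $f^\ast_{M_m,\lambda} \convmode{\Pone} \tilde f$ for some $\tilde f \in H_k$, and Lemma \ref{lem.gamma_liminf} gives $\|\tilde f\|_k^2 \leq \liminf_m \|f^\ast_{M_m,\lambda}\|_{M_m}^2$.

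To identify $\tilde f$ with $f^\ast_\lambda$, I sandwich the limiting risks. Since $\|f^\ast_{M_m,\lambda}\|_{M_m}$ is uniformly bounded and $|k_{M_m}|\leq C_k$, all values $f^\ast_{M_m,\lambda}(\vec x)$ lie in a fixed compact interval $K\subset\R$. I then decompose
\begin{align*}
\bigl|\ell_{M_m}(\vec x, y, f^\ast_{M_m,\lambda}(\vec x)) - \ell(\muhat[\vec x], y, \tilde f(\muhat[\vec x]))\bigr|
& \leq \sup_{\vec x', y, t\in K}\bigl|\ell_{M_m}(\vec x', y, t) - \ell(\muhat[\vec x'], y, t)\bigr| \\
& \quad + L_\ell \sup_{\vec x'}\bigl|f^\ast_{M_m,\lambda}(\vec x') - \tilde f(\muhat[\vec x'])\bigr|,
\end{align*}
so both terms vanish uniformly in $(\vec x, y)$, by $\ell_{M_m} \convmode{\Pone} \ell$ and $f^\ast_{M_m,\lambda} \convmode{\Pone} \tilde f$. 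The continuous bounded function $(\mu, y)\mapsto \ell(\mu, y, \tilde f(\mu))$ integrates correctly under $P^{[M_m]} \convmode{\Pone} P$, and combining this with the uniform estimate above yields $\risk_{\ell_{M_m},P^{[M_m]}}(f^\ast_{M_m,\lambda}) \to \risk_{\ell,P}(\tilde f)$. Together with Lemma \ref{lem.gamma_liminf}, this gives the $\liminf$ inequality $\risk_{\ell,P,\lambda}(\tilde f) \leq \liminf_m \risk_{\ell_{M_m},P^{[M_m]},\lambda}(f^\ast_{M_m,\lambda})$.

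For the matching $\limsup$, I apply Lemma \ref{lem.gamma_limsup} to $f^\ast_\lambda$ to obtain a recovery sequence $g_M \in H_M$ with $g_M \convmode{\Pone} f^\ast_\lambda$ and $\limsup_M \|g_M\|_M \leq \|f^\ast_\lambda\|_k$; the same decomposition yields $\risk_{\ell_{M_m},P^{[M_m]}}(g_{M_m}) \to \risk_{\ell,P}(f^\ast_\lambda)$, and optimality of $f^\ast_{M_m,\lambda}$ in $H_{M_m}$ gives
\begin{equation*}
\limsup_m \risk_{\ell_{M_m},P^{[M_m]},\lambda}(f^\ast_{M_m,\lambda}) \leq \limsup_m \risk_{\ell_{M_m},P^{[M_m]},\lambda}(g_{M_m}) \leq \risk_{\ell,P,\lambda}(f^\ast_\lambda).
\end{equation*}
Chaining with $\risk_{\ell,P,\lambda}(f^\ast_\lambda) \leq \risk_{\ell,P,\lambda}(\tilde f)$ from optimality of $f^\ast_\lambda$ forces the chain to collapse to equality; uniqueness then identifies $\tilde f = f^\ast_\lambda$ and gives the asserted convergence of the regularized risks. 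Convergence of the minimal risks is then immediate from $\risk^{H_{M_m}\ast}_{\ell_{M_m},P^{[M_m]},\lambda} = \risk_{\ell_{M_m},P^{[M_m]},\lambda}(f^\ast_{M_m,\lambda})$ and similarly on the limit side.

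The main obstacle will be the risk-integrand estimate: since we cannot directly compare functions on $X^M \times Y$ with functions on $\Pb(X) \times Y$, we must factor the error through (i) mean field convergence of the losses, uniform on the compact range of the relevant function values, (ii) mean field convergence of the RKHS minimizers (or of the recovery sequence), and only then (iii) weak convergence $P^{[M]} \convmode{\Pone} P$ against a single continuous bounded limit integrand. The uniform a priori norm bound on $f^\ast_{M,\lambda}$ is what keeps all function values in a fixed compact set so that the uniform loss convergence on compacta, which is all that \convmode{\Pone} delivers for $\ell_M$, can actually be applied.
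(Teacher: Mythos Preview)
Your proof is correct and follows essentially the same approach as the paper: existence/uniqueness from standard SVM theory, a uniform a priori norm bound on the minimizers from testing against zero, and then a $\Gamma$-convergence argument built on Lemmas \ref{lem.gamma_liminf} and \ref{lem.gamma_limsup} together with the three-term decomposition of the risk integrand (loss mean field convergence on a compact range, mean field convergence of the RKHS functions, and weak convergence $P^{[M]}\convmode{\Pone}P$ against the continuous bounded limit integrand). The only presentational difference is that the paper first proves $\Gamma$-convergence of the functionals $\risk_{\ell_M,P^{[M]},\lambda}$ for \emph{arbitrary} sequences $f_M\convmode{\Pone}f$ and then invokes an abstract $\Gamma$-convergence lemma, whereas you unfold that argument directly for the minimizer subsequence and the recovery sequence for $f^\ast_\lambda$; the content is the same. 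One cosmetic point: the paper cites \cite[Lemma~5.1,~Theorem~5.2]{SC08} rather than Theorem~5.5 for existence/uniqueness in the infinite-sample case, and your a priori bound $C_\ell+b(0)$ is in fact cleaner than the paper's (which carries over an irrelevant factor from the empirical proof).
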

Finally, we would like to show that $\risk_{\ell_M,P^{[M]}}^{H_M\ast} \rightarrow \risk_{\ell,P}^{H_k\ast}$ for $P^{[M]}\convmode{\Pone} P$.
Up to a subsequence, this is established under Assumption \ref{assumpt.approx_error_func}. 
Define the \defm{approximation error functions}, cf. \cite[Definition~5.14]{SC08}, by 
\begin{align*}
	A_2^{[M]}(\lambda) = \inf_{f \in H_M} \risk_{\ell_M,P^{[M]},\lambda}(f) - \risk_{\ell_M,P^{[M]}}^{H_M\ast} 
    \qquad
	A_2(\lambda) = \inf_{f \in H_k} \risk_{\ell,P,\lambda}(f) - \risk_{\ell,P}^{H_k\ast},
\end{align*}
where $M\in\Np$ and $\lambda\in\Rnn$.
Note that (for all $M\in\Np$) $A_2^{[M]},A_2:\Rnn\rightarrow\Rnn$ are increasing, concave and continuous, and $A_2^{[M]},A_2(0)=0$, cf. \cite[Lemma~5.15]{SC08}.
We need essentially equicontinuity of $(A_2^{[M]})_M$ in 0, which is formalized in the following assumption.
\begin{assumption} \label{assumpt.approx_error_func}
	For all $\epsilon>0$ there exists $\lambda_\epsilon>0$ such that for all $0<\lambda\leq \lambda_\epsilon$ and $M\in\Np$ we have $A_2^{[M]}(\lambda) \leq \epsilon$.
\end{assumption}
\begin{proposition} \label{prop.conv_minimal_risk}
Assume that all $\ell_M$ are convex loss functions, let $P^{[M]}$ and $P$ be probability distributions on $X^M\times Y$ and $\Pb(X)\times Y$, respectively, with $P^{[M]} \convmode{\Pone} P$. 
If Assumption \ref{assumpt.approx_error_func} holds, there exists a strictly increasing sequence $(M_m)_m$ with 
$\risk_{\ell_{M_m},P^{[M_m]}}^{H_{M_m}\ast} \rightarrow \risk_{\ell,P}^{H_k\ast}$
for $m\rightarrow\infty$. %
\end{proposition}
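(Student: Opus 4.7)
The approach is to bracket each unregularized minimal risk by its regularized counterpart via the approximation error functions, then invoke Proposition~\ref{prop.conv_inf_sample_svm} at a carefully chosen regularization parameter. By definition of $A_2^{[M]}$ and $A_2$,
\begin{equation*}
\risk^{H_M\ast}_{\ell_M,P^{[M]}} = \risk^{H_M\ast}_{\ell_M,P^{[M]},\lambda} - A_2^{[M]}(\lambda), \qquad \risk^{H_k\ast}_{\ell,P} = \risk^{H_k\ast}_{\ell,P,\lambda} - A_2(\lambda),
\end{equation*}
so the triangle inequality delivers, for every $\lambda>0$ and $M\in\Np$,
\begin{equation*}
\bigl|\risk^{H_M\ast}_{\ell_M,P^{[M]}} - \risk^{H_k\ast}_{\ell,P}\bigr| \leq \bigl|\risk^{H_M\ast}_{\ell_M,P^{[M]},\lambda} - \risk^{H_k\ast}_{\ell,P,\lambda}\bigr| + A_2^{[M]}(\lambda) + A_2(\lambda).
\end{equation*}
The task reduces to choosing $\lambda$ so that the last two terms are uniformly small in $M$, and then handling the first term via Proposition~\ref{prop.conv_inf_sample_svm}.

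\textbf{Bounding the approximation errors.} Given $\epsilon>0$, Assumption~\ref{assumpt.approx_error_func} immediately furnishes $\lambda_1>0$ such that $A_2^{[M]}(\lambda)\leq \epsilon$ for all $M\in\Np$ and $\lambda\in(0,\lambda_1]$. The analogous bound on $A_2$ follows from its continuity at $0$ (with $A_2(0)=0$), and can also be verified directly by picking $f\in H_k$ with $\risk_{\ell,P}(f)\leq \risk^{H_k\ast}_{\ell,P}+\epsilon/2$, which yields
\begin{equation*}
A_2(\lambda) \leq \risk_{\ell,P,\lambda}(f) - \risk^{H_k\ast}_{\ell,P} \leq \tfrac{\epsilon}{2} + \lambda\|f\|_k^2 \leq \epsilon
\end{equation*}
once $\lambda$ is small enough. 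This produces $\lambda_2>0$ with $A_2(\lambda)\leq\epsilon$ for $\lambda\in(0,\lambda_2]$; I set $\lambda_\epsilon:=\min(\lambda_1,\lambda_2)>0$.

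\textbf{Passing to the limit and diagonalizing.} Applying Proposition~\ref{prop.conv_inf_sample_svm} at $\lambda=\lambda_\epsilon$ yields a strictly increasing subsequence $(M^{(\epsilon)}_m)_m$ along which $\risk^{H_{M^{(\epsilon)}_m}\ast}_{\ell_{M^{(\epsilon)}_m},P^{[M^{(\epsilon)}_m]},\lambda_\epsilon} \to \risk^{H_k\ast}_{\ell,P,\lambda_\epsilon}$. Substituting into the bracket inequality gives $\limsup_m \bigl|\risk^{H_{M^{(\epsilon)}_m}\ast}_{\ell_{M^{(\epsilon)}_m},P^{[M^{(\epsilon)}_m]}} - \risk^{H_k\ast}_{\ell,P}\bigr| \leq 2\epsilon$. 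Repeating this with $\epsilon_j=1/j$ along nested subsequences and taking the diagonal extraction yields a single strictly increasing $(M_m)_m$ with $\risk^{H_{M_m}\ast}_{\ell_{M_m},P^{[M_m]}}\to\risk^{H_k\ast}_{\ell,P}$, as required.

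\textbf{Main obstacle.} The crucial ingredient is the \emph{uniform-in-$M$} control on $A_2^{[M]}(\lambda)$ provided by Assumption~\ref{assumpt.approx_error_func}: without this uniformity, one could only force $A_2^{[M]}(\lambda)\leq\epsilon$ at an $M$-dependent $\lambda$, which would prevent the simultaneous application of Proposition~\ref{prop.conv_inf_sample_svm} (requiring a fixed $\lambda$) and the bracket inequality. Granted the uniform bound, the remainder is a routine combination of regularized-to-unregularized bracketing with a standard diagonal subsequence argument.
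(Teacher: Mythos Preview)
Your proof is correct and follows essentially the same approach as the paper: both use the identity $\risk^{\ast} = \risk^{\ast}_{\lambda} - A_2(\lambda)$ and the triangle inequality, invoke Assumption~\ref{assumpt.approx_error_func} and continuity of $A_2$ at $0$ to make the approximation error terms small uniformly in $M$, then apply Proposition~\ref{prop.conv_inf_sample_svm} at a fixed $\lambda$ and extract a single subsequence by a diagonal/inductive construction. The only cosmetic differences are that the paper splits the error into thirds rather than getting a $2\epsilon$ bound, and it writes out the inductive subsequence construction explicitly rather than phrasing it as a diagonal extraction.
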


\section{Conclusion}
We investigated the mean field limit of kernels and their RKHSs, as well as the mean field limit of statistical learning problems solved with SVMs.
In particular, we managed to complete the basic theory of mean field kernels as started in \cite{fiedleretal_mfl_kernels}.
Additionally, we investigated their approximation capabilities by providing a first approximation result and a variant of the representer theorem for mean field kernels.
Finally, we introduced a corresponding mean field limit of statistical learning problems and provided convergence results for SVMs using mean field kernels.
In contrast to other settings involving a large number of variables, for example, infinite-width neural networks, here we considered the case of an increasing number of inputs.
This work opens many directions for future investigation.
For example, it would be interesting to remove or weaken Assumption \ref{assumpt.approx_error_func} for a result like Proposition \ref{prop.conv_minimal_risk}. 
Another relevant direction is to find approximation results that are stronger than Proposition \ref{prop.simple_approx}. 
Finally, it would be interesting to investigate whether statistical guarantees, like consistency or learning rates, for the finite-input learning problems can be transferred to the mean field level.

\begin{ack}
We would like to thank Noel Brindise, Pierre-Fran\c{c}ois Massiani and Alexander von Rohr for very detailed and helpful comments on the manuscript, and the anonymous reviewers for their detailed and helpful comments.
This work is funded in part under the Excellence Strategy of the Federal Government and the Länder (G:(DE-82)EXS-SF-SFDdM035), which the authors gratefully acknowledge.  The authors further thank the Deutsche Forschungsgemeinschaft (DFG, German Research Foundation) for the financial support through 320021702/GRK2326,  333849990/IRTG-2379, B04, B05, and B06 of 442047500/SFB1481, HE5386/18-1,19-2,22-1,23-1,25-1. 
\end{ack}

{
\small

\bibliographystyle{unsrtnat}
\bibliography{neurips_final} 

\begin{thebibliography}{33}
\providecommand{\natexlab}[1]{#1}
\providecommand{\url}[1]{\texttt{#1}}
\expandafter\ifx\csname urlstyle\endcsname\relax
  \providecommand{\doi}[1]{doi: #1}\else
  \providecommand{\doi}{doi: \begingroup \urlstyle{rm}\Url}\fi

\bibitem[Cercignani(2000)]{cercignani2000rarefied}
Carlo Cercignani.
\newblock \emph{Rarefied gas dynamics: from basic concepts to actual
  calculations}, volume~21 of \emph{Cambridge Texts in Applied Mathematics}.
\newblock Cambridge university press, 2000.

\bibitem[Cercignani et~al.(1994)Cercignani, Illner, and
  Pulvirenti]{cercignani94mathematical}
Carlo Cercignani, Reinhard Illner, and Mario Pulvirenti.
\newblock \emph{The mathematical theory of dilute gases}, volume 106 of
  \emph{Applied Mathematical Sciences}.
\newblock Springer Science \& Business Media, 1994.

\bibitem[Ballerini et~al.(2008)Ballerini, Cabibbo, Candelier, Cavagna, Cisbani,
  Giardina, Lecomte, Orlandi, Parisi, Procaccini,
  et~al.]{ballerini2008interaction}
Michele Ballerini, Nicola Cabibbo, Raphael Candelier, Andrea Cavagna, Evaristo
  Cisbani, Irene Giardina, Vivien Lecomte, Alberto Orlandi, Giorgio Parisi,
  Andrea Procaccini, et~al.
\newblock Interaction ruling animal collective behavior depends on topological
  rather than metric distance: Evidence from a field study.
\newblock \emph{Proceedings of the national academy of sciences}, 105\penalty0
  (4):\penalty0 1232--1237, 2008.

\bibitem[Katz et~al.(2011)Katz, Tunstr{\o}m, Ioannou, Huepe, and
  Couzin]{katz2011inferring}
Yael Katz, Kolbj{\o}rn Tunstr{\o}m, Christos~C Ioannou, Cristi{\'a}n Huepe, and
  Iain~D Couzin.
\newblock Inferring the structure and dynamics of interactions in schooling
  fish.
\newblock \emph{Proceedings of the National Academy of Sciences}, 108\penalty0
  (46):\penalty0 18720--18725, 2011.

\bibitem[Toscani(2006)]{toscani2006kinetic}
Giuseppe Toscani.
\newblock Kinetic models of opinion formation.
\newblock \emph{Communications in mathematical sciences}, 4\penalty0
  (3):\penalty0 481--496, 2006.

\bibitem[Castellano et~al.(2009)Castellano, Fortunato, and
  Loreto]{castellano2009statistical}
Claudio Castellano, Santo Fortunato, and Vittorio Loreto.
\newblock Statistical physics of social dynamics.
\newblock \emph{Reviews of modern physics}, 81\penalty0 (2):\penalty0 591,
  2009.

\bibitem[Dyer et~al.(2009)Dyer, Johansson, Helbing, Couzin, and
  Krause]{dyer2009leadership}
John~RG Dyer, Anders Johansson, Dirk Helbing, Iain~D Couzin, and Jens Krause.
\newblock Leadership, consensus decision making and collective behaviour in
  humans.
\newblock \emph{Philosophical Transactions of the Royal Society B: Biological
  Sciences}, 364\penalty0 (1518):\penalty0 781--789, 2009.

\bibitem[Cristiani et~al.(2014)Cristiani, Piccoli, and
  Tosin]{cristiani2014multiscale}
Emiliano Cristiani, Benedetto Piccoli, and Andrea Tosin.
\newblock \emph{Multiscale modeling of pedestrian dynamics}, volume~12 of
  \emph{Modeling, Simulation and Applications}.
\newblock Springer, 2014.

\bibitem[Albi et~al.(2016)Albi, Bongini, Cristiani, and
  Kalise]{albi2016invisible}
Giacomo Albi, Mattia Bongini, Emiliano Cristiani, and Dante Kalise.
\newblock Invisible control of self-organizing agents leaving unknown
  environments.
\newblock \emph{SIAM Journal on Applied Mathematics}, 76\penalty0 (4):\penalty0
  1683--1710, 2016.

\bibitem[Peters et~al.(2014)Peters, Middleton, and Mason]{peters2014leader}
Andr{\'e}s~A Peters, Richard~H Middleton, and Oliver Mason.
\newblock Leader tracking in homogeneous vehicle platoons with broadcast
  delays.
\newblock \emph{Automatica}, 50\penalty0 (1):\penalty0 64--74, 2014.

\bibitem[Oh et~al.(2015)Oh, Park, and Ahn]{oh2015survey}
Kwang-Kyo Oh, Myoung-Chul Park, and Hyo-Sung Ahn.
\newblock A survey of multi-agent formation control.
\newblock \emph{Automatica}, 53:\penalty0 424--440, 2015.

\bibitem[Choi et~al.(2019)Choi, Kalise, Peszek, and
  Peters]{choi2019collisionless}
Young-Pil Choi, Dante Kalise, Jan Peszek, and Andr{\'e}s~A Peters.
\newblock A collisionless singular {C}ucker--{S}male model with decentralized
  formation control.
\newblock \emph{SIAM Journal on Applied Dynamical Systems}, 18\penalty0
  (4):\penalty0 1954--1981, 2019.

\bibitem[Tosin and Zanella(2019)]{tosin2019kinetic}
Andrea Tosin and Mattia Zanella.
\newblock Kinetic-controlled hydrodynamics for traffic models with
  driver-assist vehicles.
\newblock \emph{Multiscale Modeling \& Simulation}, 17\penalty0 (2):\penalty0
  716--749, 2019.

\bibitem[Naldi et~al.(2010)Naldi, Pareschi, and Toscani]{naldi2010mathematical}
Giovanni Naldi, Lorenzo Pareschi, and Giuseppe Toscani.
\newblock \emph{Mathematical modeling of collective behavior in socio-economic
  and life sciences}.
\newblock Springer Science \& Business Media, 2010.

\bibitem[Vicsek and Zafeiris(2012)]{vicsek2012collective}
Tam{\'a}s Vicsek and Anna Zafeiris.
\newblock Collective motion.
\newblock \emph{Physics reports}, 517\penalty0 (3-4):\penalty0 71--140, 2012.

\bibitem[Gong et~al.(2022)Gong, Herty, Piccoli, and Visconti]{gong2022crowd}
Xiaoqian Gong, Michael Herty, Benedetto Piccoli, and Giuseppe Visconti.
\newblock Crowd dynamics: Modeling and control of multiagent systems.
\newblock \emph{Annual Review of Control, Robotics, and Autonomous Systems}, 6,
  2022.

\bibitem[Carrillo et~al.(2010)Carrillo, Fornasier, Toscani, and
  Vecil]{carrillo2010particle}
Jos{\'e}~A Carrillo, Massimo Fornasier, Giuseppe Toscani, and Francesco Vecil.
\newblock Particle, kinetic, and hydrodynamic models of swarming.
\newblock \emph{Mathematical modeling of collective behavior in socio-economic
  and life sciences}, pages 297--336, 2010.

\bibitem[Bongini et~al.(2017)Bongini, Fornasier, Hansen, and
  Maggioni]{bongini2017inferring}
Mattia Bongini, Massimo Fornasier, Markus Hansen, and Mauro Maggioni.
\newblock Inferring interaction rules from observations of evolutive systems i:
  The variational approach.
\newblock \emph{Mathematical Models and Methods in Applied Sciences},
  27\penalty0 (05):\penalty0 909--951, 2017.

\bibitem[Lu et~al.(2019)Lu, Zhong, Tang, and Maggioni]{lu2019nonparametric}
Fei Lu, Ming Zhong, Sui Tang, and Mauro Maggioni.
\newblock Nonparametric inference of interaction laws in systems of agents from
  trajectory data.
\newblock \emph{Proceedings of the National Academy of Sciences}, 116\penalty0
  (29):\penalty0 14424--14433, 2019.

\bibitem[Steinwart and Christmann(2008)]{SC08}
Ingo Steinwart and Andreas Christmann.
\newblock \emph{Support vector machines}.
\newblock Springer Science \& Business Media, 2008.

\bibitem[Christmann and Steinwart(2010)]{christmann2010universal}
Andreas Christmann and Ingo Steinwart.
\newblock Universal kernels on non-standard input spaces.
\newblock \emph{Advances in neural information processing systems}, 23, 2010.

\bibitem[Fiedler et~al.(2023)Fiedler, Herty, Rom, Segala, and
  Trimpe]{fiedleretal_mfl_kernels}
Christian Fiedler, Michael Herty, Michael Rom, Chiara Segala, and Sebastian
  Trimpe.
\newblock Reproducing kernel {H}|ilbert spaces in the mean field limit.
\newblock \emph{Kinetic and Related Models}, 16\penalty0 (6):\penalty0
  850--870, 2023.
\newblock ISSN 1937-5093.

\bibitem[Herty and Zanella(2017)]{herty2017performance}
Michael Herty and Mattia Zanella.
\newblock Performance bounds for the mean-field limit of constrained dynamics.
\newblock \emph{Discrete \& Continuous Dynamical Systems}, 37\penalty0
  (4):\penalty0 2023, 2017.

\bibitem[Arora et~al.(2019)Arora, Du, Hu, Li, Salakhutdinov, and
  Wang]{arora2019exact}
Sanjeev Arora, Simon~S Du, Wei Hu, Zhiyuan Li, Russ~R Salakhutdinov, and
  Ruosong Wang.
\newblock On exact computation with an infinitely wide neural net.
\newblock \emph{Advances in neural information processing systems}, 32, 2019.

\bibitem[Cardaliaguet(2010)]{cardaliaguet2010}
Pierre Cardaliaguet.
\newblock Notes on mean field games.
\newblock Technical report, 2010.

\bibitem[Carmona and Delarue(2018)]{carmona2018probabilistic}
Ren{\'e} Carmona and Fran{\c{c}}ois Delarue.
\newblock \emph{Probabilistic theory of mean field games with applications
  I-II}.
\newblock Springer, 2018.

\bibitem[Sch{\"o}lkopf et~al.(2001)Sch{\"o}lkopf, Herbrich, and
  Smola]{scholkopf2001generalized}
Bernhard Sch{\"o}lkopf, Ralf Herbrich, and Alex~J Smola.
\newblock A generalized representer theorem.
\newblock In \emph{International conference on computational learning theory},
  pages 416--426. Springer, 2001.

\bibitem[Fornasier and Solombrino(2014)]{fornasier2014mean}
Massimo Fornasier and Francesco Solombrino.
\newblock Mean-field optimal control.
\newblock \emph{ESAIM: Control, Optimisation and Calculus of Variations},
  20\penalty0 (4):\penalty0 1123--1152, 2014.

\bibitem[Fornasier et~al.(2019)Fornasier, Lisini, Orrieri, and
  Savar{\'e}]{fornasier2019mean}
Massimo Fornasier, Stefano Lisini, Carlo Orrieri, and Giuseppe Savar{\'e}.
\newblock Mean-field optimal control as gamma-limit of finite agent controls.
\newblock \emph{European Journal of Applied Mathematics}, 30\penalty0
  (6):\penalty0 1153--1186, 2019.

\bibitem[Lang(2012)]{lang2012real}
Serge Lang.
\newblock \emph{Real and functional analysis}, volume 142.
\newblock Springer Science \& Business Media, 2012.

\bibitem[Atteia(1992)]{atteia1992hilbertian}
Marc Atteia.
\newblock \emph{Hilbertian kernels and spline functions}.
\newblock Elsevier, 1992.

\bibitem[Dal~Maso(2012)]{dalMasoGammaConv}
Gianni Dal~Maso.
\newblock \emph{An introduction to $\Gamma$-convergence}, volume~8 of
  \emph{Progress in Nonlinear Differential Equations and Their Applications}.
\newblock Springer Science \& Business Media, 2012.

\bibitem[Bongini(2016)]{bongini2016sparse}
Mattia Bongini.
\newblock \emph{Sparse optimal control of multiagent systems}.
\newblock PhD thesis, Technische Universit{\"a}t M{\"u}nchen, 2016.

\end{thebibliography}

}

\clearpage

\appendix
\section*{Supplementary Material}
\section{Proofs}
In this section of the supplementary material, we provide detailed proofs for all results in the main text.
\subsection{Proofs for Section \ref{sec.mfl_kernels}}
We start with Proposition \ref{prop.mfl_func_extended}, whose proof is based on \cite[Lemma~1.2]{carmona2018probabilistic}.
\begin{proof}\emph{of Proposition \ref{prop.mfl_func_extended}}
	For $M\in\Np$ define the McShane extension $F_M:\Pb(X)\times Z \rightarrow \R$ by
	\begin{equation*}
		F_M(\mu,z) = \inf_{\vec x \in X^M} f_M(\vec x, z) + L_f \dkm(\muhat[\vec x], \mu).
	\end{equation*}
	Observe that $F_M$ is well-defined (i.e., $\R$-valued) since $f_M(\cdot,z)$ and $L_f\dkm(\muhat[\cdot],\mu)$ are bounded for every $z\in Z$ (since $f_M$ and $\dkm(\muhat[\cdot],\mu)$ are continuous and $\Pb(X)$ is compact, hence bounded).

	\textbf{Step 1} $F_M$ extends $f_M$, i.e., for all $M\in\Np$, $\vec x \in X^M$ and $z \in Z$ we have $F_M(\muhat[\vec x], z)=f_M(\vec x, z)$.
	To show this, let $\vec x \in X^M$ and $z \in Z$ be arbitrary and observe that by definition
	\begin{equation*}
		F_M(\muhat[\vec x],z) = \inf_{\vec x' \in X^M} f_M(\vec x', z) + L_f \dkm(\muhat[\vec x'], \muhat[\vec x])
		\leq f_M(\vec x, z) + L_f \dkm(\muhat[\vec x], \muhat[\vec x]) = f_M(\vec x,z).
	\end{equation*}
	If $F_M(\muhat[\vec x], z) < f_M(\vec x, z)$, then there exists some $\vec x'\in X^M$ such that
	\begin{equation*}
		f_M(\vec x', z) + L_f\dkm(\muhat[\vec x'], \muhat[\vec x]) < f_M(\vec x, z),
	\end{equation*}
	but this means that
	\begin{equation*}
		L_f\dkm(\muhat[\vec x'], \muhat[\vec x]) < f_M(\vec x, z) - f_M(\vec x', z) \leq | f_M(\vec x, z) - f_M(\vec x', z)|,
	\end{equation*}
	contradicting the $L_f$-Lipschitz continuity of $f_M$.

	\textbf{Step 2} All $F_M$ are $L_f$-continuous: Let $M\in\Np$, $\mu_i\in\Pb(X)$ and $z_i\in Z$, $i=1,2$, be arbitrary.
	Since $X^M$ is compact and $f_M(\cdot,z)$ and $L_f\dkm(\muhat[\cdot],\mu_i)$, $i=1,2$, are continuous, the infimum in the definition of $F_M$ is actually attained. Let $\vec x_2 \in X^M$ such that $F_M(\mu_2,z_2)=f_M(\vec x_2,z_2)+L_f\dkm(\muhat[\vec x_2],\mu_2)$, then we have
	\begin{align*}
		F_M(\mu_1,z_1) & \leq f_M(\vec x_2,z_1) + L_f\dkm(\muhat[\vec x_2],\mu_1) \\
		& = f_M(\vec x_2,z_1) + L_f \dkm(\muhat[\vec x_2], \mu_2) - L_f\dkm(\muhat[\vec x_2], \mu_2)  + L_f\dkm(\muhat[\vec x_2],\mu_1)\\
		& \leq f_M(\vec x_2, z_2) + L_f \dkm(\muhat[\vec x_2], \mu_2) + L_fd_Z(z_1,z_2) - L_f\dkm(\muhat[\vec x_2], \mu_2)  \\
            & \hspace{1cm} + L_f\dkm(\muhat[\vec x_2],\mu_1) \\
		& \leq F_M(\mu_2, z_2) + L_fd_Z(z_1,z_2)  - L_f\dkm(\muhat[\vec x_2], \mu_2) + L_f\dkm(\mu_1,\mu_2) \\
            & \hspace{1cm} + L_f\dkm(\muhat[\vec x_2], \mu_2) \\
		& = F_M(\mu_2, z_2) +  L_f(\dkm(\mu_1,\mu_2) + d_Z(z_1,z_2)),
	\end{align*}
	where we used the definition of $F_M$ in the first inequality,
	the Lipschitz continuity of $f_M$ (w.r.t. the second argument) for the second inequality,
	and then the fact that $\vec x_2$ attains the infimum in the definition of $F_M(\mu_2,z_2)$ and the triangle inequality for $\dkm$.
	Interchanging the roles of $\mu_1,z_1$ and $\mu_2,z_2$ then establishes the claim.

	\textbf{Step 3} There exists $B_F\in\Rnn$ such that for all $M\in\Np$, $\mu\in\Pb(X)$ and $z\in Z$ we have $|F_M(\mu,z)|\leq B_F + h(z)$:
	Let $D_{\Pb(X)}$ be the diameter of $\Pb(X)$ (which is finite since $\Pb(X)$ is compact), then for all $M\in\Np$ and $\vec x\in X^M$, $z\in Z$, $\mu \in \Pb(X)$ we have
	\begin{equation*}
		-(B_f + L_f D_{\Pb(X)} + b(z)) \leq f_M(\vec x, z) + L_f\dkm(\muhat[\vec x], \mu) \leq B_f + L_f D_{\Pb(X)} + b(z),
	\end{equation*}
	therefore $|F_M(\mu,z)| \leq B_f + L_f D_{\Pb(X)} + b(z)$, showing the claim with $B_F = B_f + L_f D_{\Pb(X)}$.

	\textbf{Step 4} Summarizing, $(F_M)_M$ is a sequence of $L_f$-Lipschitz continuous and hence equicontinuous functions such that for all $\mu\in\Pb(X)$ and $z\in Z$, the set $\{F_M(\mu,z) \mid M\in \Np\}$ is relatively compact (since it is a bounded subset of $\R$). We can now use a variant of the Arzela-Ascoli theorem, cf. \cite[Corollary~III.3.3]{lang2012real}.
	From the assumption on $Z$, we can find a sequence $(V_n)_n$ of open subsets of $Z$ such that all $\bar V_n$ are compact, $\bar V_n \subseteq V_{n+1}$ and we have $\bigcup_n V_n = Z$.
	Then $(F_M\lvert_{\bar V_n})_M$ is a sequence of functions that fulfills the conditions of the Arzela-Ascoli theorem (since $\Pb(X)\times K_n$ is compact), so there exists a subsequence $(F_{M^{(n)}_\ell}\lvert_{\bar V_n})_\ell$ that converges uniformly to a continuous function on $\Pb(X)\times \bar V_n$.
	Denote the diagonal subsequence of all these subsequences by $(F_{M_\ell})_\ell$, then there exists a continuous $f:\Pb(X)\times Z \rightarrow \R$ such that $(F_{M_\ell})_\ell$ converges uniformly on compact subsets to $f$. Since $\Pb(X)$ is compact, this means that for all compact $K\subseteq Z$
	\begin{equation*}
		\lim_\ell \sup_{\substack{\mu \in \Pb(X)\\ z \in K}} | F_{M_\ell}(\mu,z) - f(\mu,z)| = 0.
	\end{equation*}
	This also implies that for all $\mu\in\Pb(X)$ and $z \in Z$ we have $|f(\mu,z)|\leq B_F + b(z)$.

	Furthermore, $f$ is also $L_f$-Lipschitz continuous: Let $\mu_i\in\Pb(X)$, $z_i\in Z$, $i=1,2$, and $\epsilon>0$ be arbitrary.
	Let $K\subseteq Z$ be compact with $z_1,z_2 \in K$ and choose $\ell\in\Np$ such that
	\begin{equation*}
		\sup_{\substack{\mu \in \Pb(X)\\ z \in K}} | F_{M_\ell}(\mu,z) - f(\mu,z)| \leq \frac \epsilon 2.
	\end{equation*}
	We then have
	\begin{align*}
		|f(\mu_1,z_1)-f(\mu_2,z_2)| & \leq |f(\mu_1,z_1) - F_{M_\ell}(\mu_1,z_1)| + |F_{M_\ell}(\mu_1,z_1) - F_{M_\ell}(\mu_2,z_2)| \\
        & \hspace{1cm} + | F_{M_\ell}(\mu_2,z_2) - f(\mu_2,z_2) | \\
		& \leq L_f \left( \dkm(\mu_1,\mu_2) + d_Z(z_1,z_2)\right) + \epsilon,
	\end{align*}
	and since $\epsilon>0$ was arbitrary, the claim follows.

	\textbf{Step 5} For $\ell\in\Np$ and $\vec x \in X^{M_\ell}$, $z\in Z$ we have
	\begin{equation*}
		|f_{M_\ell}(\vec x, z) - f(\muhat[\vec x], z)| = |F_{M_\ell}(\muhat[\vec x],z) - f(\muhat[\vec x], z)|
	\end{equation*}
	since $F_{M_\ell}$ extends $f_{M_\ell}$, and hence
	\begin{equation*}
		\sup_{\substack{\vec x \in X^{M_\ell}\\ z \in K}} |f_{M_\ell}(\vec x, z) - f(\muhat[\vec x],z)| \rightarrow 0.
	\end{equation*}
\end{proof}
Next, we provide the proofs for the $\Gamma$-$\liminf$ and $\Gamma$-$\limsup$ results.
\begin{proof}\emph{of Lemma \ref{lem.gamma_liminf}}
	Assume the statement is not true, i.e., $\|f\|_k > \liminf_{M\rightarrow\infty} \| f_{M} \|_{M}$.
	This means that there exists a subsequence $M_\ell$ and $C\in\Rnn$ such that $\|f\|_k > \lim_{\ell} \| f_{M_\ell} \|_{M_\ell} = C$.
	Note that this implies that $\|f\|_k>0$.

	Let $\epsilon_1,\epsilon_2>0$ and $\alpha>1$, $\beta\in(0,1)$ be arbitrary.
	From Theorem \ref{thm.charact_rkhs_funcs}, there exists $(\vec \mu, \vec \alpha) \in \Pb(X)^N \times \R^N$ such that
	\begin{equation*}
		\weightedevalop(\vec \mu, \vec \alpha, f, k) + \epsilon_1 \geq \|f\|_k,
	\end{equation*}
	and w.l.o.g. we can assume that $\epsilon_1>0$ is small enough so that $\weightedevalop(\vec \mu, \vec \alpha, f, k)>0$.
	The latter implies that $\evalop(\vec \mu, \vec \alpha, f),\:\weightop(\vec \mu, \vec \alpha, k)>0$, so defining
	\begin{align*}
		\epsilon_\alpha & = \frac{\alpha-1}{\alpha} \evalop(\vec \mu, \vec \alpha, f) \\
		\epsilon_\beta & = (1/\beta -1)\weightop(\vec \mu, \vec \alpha, k)
	\end{align*}
	we get $\epsilon_\alpha,\epsilon_\beta>0$. 
	For each $n=1,\ldots,N$, choose $\vec x^{[M]}_n \in X^{M}$ such that $\vec x^{[M]}_n \convmode{\dkm} \mu_n$ for $M\rightarrow\infty$.
	Choose now $L_1\in \N$ such that for all $\ell\geq L_1$ we get
	\begin{align*}
		|\evalop(\vec X^{[M_\ell]}, \vec \alpha, f_{M_\ell}) - \evalop(\vec \mu, \vec \alpha, f)| & \leq \epsilon_\alpha \\
		|\weightop(\vec X^{[M_\ell]}, \vec \alpha, k_{M_\ell}) - \weightop(\vec \mu, \vec \alpha, k)| & \leq \epsilon_\beta.
	\end{align*}
	(cf. also the proof of Theorem \ref{thm.mfl_rkhs}) and $\weightop(\vec X^{[M_\ell]}, \vec \alpha, k^{[M_\ell]}) >0$.
	We then get
	\begin{align*}
		\evalop(\vec \mu, \vec \alpha, f) & \leq \alpha \evalop(\vec X^{[M_\ell]}, \vec \alpha, f_{M_\ell}) \\
		\weightop(\vec \mu, \vec \alpha, k) & \geq \beta \weightop(\vec X^{[M_\ell]}, \vec \alpha, k^{[M_\ell]}),
	\end{align*}
	so altogether
	\begin{equation*}
		\frac{\evalop(\vec \mu, \vec \alpha, f) }{\weightop(\vec \mu, \vec \alpha, k)}
		\leq
		\frac{\alpha \evalop(\vec X^{[M_\ell]}, \vec \alpha, f_{M_\ell})}{\beta \weightop(\vec X^{[M_\ell]}, \vec \alpha, k^{[M_\ell]})}.
	\end{equation*}
	Using Theorem \ref{thm.charact_rkhs_funcs} again leads to
	\begin{equation*}
		\frac{\alpha \evalop(\vec X^{[M_\ell]}, \vec \alpha, f_M)}{\beta \weightop(\vec X^{[M_\ell]}, \vec \alpha, k^{[M_\ell]})} 
		= 
		\weightedevalop(\vec X^{[M_\ell]}, \vec \alpha, f_{M_\ell}, k^{[M_\ell]}) 
		\leq \|f_{M_\ell}\|_{M_\ell}.
	\end{equation*}
	Finally, let $L_2$ such that for all $\ell\geq L_2$ we have $\|f_{M_\ell}\|_{M_\ell} \leq C+\epsilon_2$.
	For $\ell\geq L_1,L_2$ we then get
	\begin{align*}
		C &  < \| f\|_k \leq \weightedevalop(\vec \mu, \vec \alpha, f, k) + \epsilon_1 \\
		& = \frac{\evalop(\vec \mu, \vec \alpha, f) }{\weightop(\vec \mu, \vec \alpha, k)} + \epsilon_1 \\
		& \leq \frac{\alpha \evalop(\vec X^{[M_\ell]}, \vec \alpha, f_{M_\ell})}{\beta \weightop(\vec X^{[M_\ell]}, \vec \alpha, k^{[M_\ell]})} + \epsilon_1 \\
		& \leq \frac{\alpha}{\beta} \|f_{M_\ell}\|_{M_\ell} + \epsilon_1 \\
		& \leq  \frac{\alpha}{\beta} C +  \frac{\alpha}{\beta}\epsilon_2 + \epsilon_1.
	\end{align*}
	Since  $\epsilon_1,\epsilon_2>0$ and $\alpha>1$, $\beta\in(0,1)$ were arbitrary, this implies that
	\begin{equation*}
		C < \| f\|_k \leq C,
	\end{equation*}
	a contradiction.
\end{proof}
\begin{proof}\emph{of Lemma \ref{lem.gamma_limsup}}
	Let $f\in H_k$ be arbitrary and choose $(\epsilon_n)_n\subseteq \Rp$ with $\epsilon_n \searrow 0$.

\textbf{Step 1} For each $n\in\N$ choose
	\begin{equation*}
		f^\text{pre}_n = \sum_{\ell=1}^{L_n} \alpha^{(n)}_\ell k(\cdot,\mu^{(n)}_\ell) \in H_k^\text{pre},
	\end{equation*}
	where $\alpha^{(n)}_1,\ldots,\alpha^{(n)}_{L_n}\in\R$ and $\mu^{(n)}_1,\ldots,\mu^{(n)}_{L_n}\in\Pb(X)$,
	with
	\begin{equation*}
		\|f-f_n^\text{pre}\|_k \leq \frac{\epsilon_n}{3 \sqrt{C_k}}
	\end{equation*}
	and $\|f_n^\text{pre}\|_k \leq \|f\|_k$. To see that such a sequence of functions exists, choose some sequence $(\bar f_n)_n\in H_k^\text{pre}$ with $\bar f_n = \sum_{\ell=1}^{\bar L_n} \bar \alpha_\ell^{(n)} k(\cdot, \bar \mu_\ell^{(n)})$, where $\bar \alpha_\ell^{(n)}\in\R$, $\bar \mu_\ell^{(n)}\in \Pb(X)$, with $\bar f_n \convmode{\|\cdot\|_k} f$ (exists since $H_k^\text{pre}$ is dense in $H_k$).
	Define now for $n \in \N$
	\begin{equation*}
		\bar H_n = \textrm{span}\{ k(\cdot,\bar\mu^{(m)}_\ell) \mid m=1,\ldots,n, \: \ell=1,\ldots,\bar L_m \}
	\end{equation*}
	and $\hat f_n= P_{\bar H_n}f$, where $P_{\bar H_n}$ is the orthogonal projection onto $\bar H_n$.
	Then $\bar H_n \subseteq H_k^\text{pre}$, $\|\hat f_n \|_k = \|P_{\bar H_n} f\|_k \leq \|f\|_k$ and $\|f - \hat f_n \|_k \leq \| f - \bar f_n\|_k \rightarrow 0$ 
    (since $\hat{f}_n=P_{\bar H_n}f$ is the orthogonal projection of $f$ onto $\bar H_n$ and $\bar f_n \in \bar H_n$),
    hence $\hat f_n  \convmode{\|\cdot\|_k} f$.
	We can now choose $(f_n^\text{pre})_n$ as a subsequence of $(\hat f_n)_n$.

	Next, for all $n\in\N$ and $\ell=1,\ldots,L_n$ choose $\vec x_M^{(n,\ell)} \in X^M$ with $\muhat[\vec x_M^{(n,\ell)}] \convmode{\dkm} \mu_\ell^{(n)}$ for $M\rightarrow\infty$.
	Furthermore, for all $n\in\N$ choose $M_n\in\N$ such that for all $M\geq M_n$ and $\ell=1,\ldots,L_n$ we have
	\begin{align*}
		\dkm(\muhat[\vec x_M^{(n,\ell)}], \mu_\ell^{(n)} ) & \leq \min\left\{ 
			\frac{\epsilon_n}{3\left( 1 + L_k \sum_{\ell'=1}^{L_n} |\alpha_{\ell'}^{(n)} | \right)},
			\frac{\epsilon_n^2}{2\left( 1 + 2L_k \sum_{i,j=1}^{L_n}  |\alpha_i^{(n)}| |\alpha_j^{(n)}|\right) }  
		\right\}
	\end{align*}
	and
	\begin{align*}
		\sup_{\vec x,\vec x' \in X^M} |k_M(\vec x, \vec x') - k(\muhat[\vec x], \muhat[\vec x'])|
		\leq \min\left\{ 
			\frac{\epsilon_n}{3\left( 1 +  \sum_{\ell'=1}^{L_n} |\alpha_{\ell'}^{(n)} | \right)},
			\frac{\epsilon_n^2}{2\left( 1 +  \sum_{i,j=1}^{L_n}  |\alpha_i^{(n)}| |\alpha_j^{(n)}|\right) }  
		\right\}.
	\end{align*}
    W.l.o.g. we can assume that $(M_n)_n$ is strictly increasing.
	For $M\in\N$, let $n(M)$ be the largest integer such that $M_{n(M)}\leq M$ and define
	\begin{align*}
		\hat f_M^\text{pre} = \sum_{\ell=1}^{L_{n(M)}} \alpha_\ell^{(n(M))} k(\cdot, \muhat[\vec x_M^{(n(M),\ell)}]) \in H_k^\text{pre}\\
		f_M = \sum_{\ell=1}^{L_{n(M)}} \alpha_\ell^{(n(M))} k_M(\cdot, \vec x_M^{(n(M),\ell)}) \in H_M^\text{pre}.
	\end{align*}

\textbf{Step 2} We now show that $f_M \convmode{\Pone} f$. For this, let $\epsilon>0$ be arbitrary and $n_\epsilon\in\N$ such that $\epsilon_n\leq\epsilon$.
	Let now $M\geq M_{n_\epsilon}$ 
    (note that this implies that $n(M)\geq n_\epsilon$ and hence $\epsilon_{n(M)}\leq \epsilon_n$)
    and $\vec x \in X^M$, then we have
	\begin{align*}
		|f(\muhat[\vec x]) - f_M(\vec x)| 
        & \leq \underbrace{|f(\muhat[\vec x]) - f_{n(M)}(\muhat[\vec x])|}_{=I}
			+ \underbrace{|f_{n(M)}(\muhat[\vec x]) - \hat f_M^\text{pre}(\muhat[\vec x])|}_{=II}
			+ \underbrace{| \hat f_M^\text{pre}(\muhat[\vec x]) - f_M(\vec x)|}_{=III}
	\end{align*}
	We continue with
	\begin{align*}
		I & = |f(\muhat[\vec x]) - f_{n(M)}(\muhat[\vec x])| \\
		& = |\langle f - f_{n(M)}, k(\cdot, \muhat[\vec x])\rangle_k| \\
		& \leq \| f - f_{n(M)} \|_k \| k(\cdot, \muhat[\vec x]) \|_k \\
		& = \| f - f_{n(M)} \|_k \sqrt{ k( \muhat[\vec x], \muhat[\vec x])} \\
		& \leq  \frac{\epsilon_{n(M)}}{3 \sqrt{C_k}} \sqrt{C_k}
	\end{align*}
	where we first used the reproducing property of $k$,
	then Cauchy-Schwarz,
	again the reproducing property of $k$,
	and finally the choice $f_{n(M)}$ and the boundedness of $k$.

	Next, 
	\begin{align*}
		II & = |f_{n(M)}(\muhat[\vec x]) - \hat f_M^\text{pre}(\muhat[\vec x])| \\
		& = \left|  \sum_{\ell=1}^{L_{n(M)}} \alpha^{(n(M))}_\ell k(\cdot,\mu^{(n(M))}_\ell) - \sum_{\ell=1}^{L_{n(M)}} \alpha_\ell^{(n(M))} k(\cdot, \muhat[\vec x_M^{(n(M),\ell)}]) \right| \\
		& \leq \sum_{\ell=1}^{L_{n(M)}} \left|  \alpha^{(n(M))}_\ell \right| |k(\cdot,\mu^{(n(M))}_\ell) - k(\cdot, \muhat[\vec x_M^{(n(M),\ell)}])| \\
		& \leq  L_k \sum_{\ell=1}^{L_{n(M)}} \left|  \alpha^{(n(M))}_\ell \right| 	\dkm(\muhat[\vec x_M^{(n(M),\ell)}], \mu_\ell^{(n(M))} ) \\
		& \leq \frac{\epsilon_{n(M)}}{3},
	\end{align*}
	where we used the triangle inequality,
	the Lipschitz continuity of $k$,
	and then the choice of the sequence $(M_n)_n$.

	Finally,
	\begin{align*}
		III & = | \hat f_M^\text{pre}(\muhat[\vec x]) - f_M(\vec x)| \\
		& = \left| \sum_{\ell=1}^{L_{n(M)}} \alpha_\ell^{(n(M))} k(\cdot, \muhat[\vec x_M^{(n(M),\ell)}]) - \sum_{\ell=1}^{L_{n(M)}} \alpha_\ell^{(n(M))} k_M(\cdot, \vec x_M^{(n(M),\ell)}) \right| \\
		& \leq  \sum_{\ell=1}^{L_{n(M)}} \left|  \alpha^{(n(M))}_\ell \right| |  k(\cdot, \muhat[\vec x_M^{(n(M),\ell)}]) -  k_M(\cdot, \vec x_M^{(n(M),\ell)}) | \\
		& \leq  \frac{\epsilon_{n(M)}}{3},
	\end{align*}
	where the triangle inequality has been used in the first step and then again the choice of the sequence $(M_n)_n$.

	Altogether,
	\begin{align*}
		|f(\muhat[\vec x]) - f_M(\vec x)| & \leq I + II + III \\ 
		&  \leq  \frac{\epsilon_{n(M)}}{3} +  \frac{\epsilon_{n(M)}}{3} +  \frac{\epsilon_{n(M)}}{3} \\
		& \leq \epsilon,
	\end{align*}
	establishing $f_M \convmode{\Pone} f$.

\textbf{Step 3} We now show $\limsup_{M\rightarrow\infty} \| f_{M} \|_{M} \leq \| f\|_k$.
	Let $\epsilon>0$ be arbitrary and $n_\epsilon\in\N$ such that $\epsilon_n\leq\epsilon$ and let $M\geq M_{n_\epsilon}$.
	We have
	\begin{align*}
		\| f_M \|_M^2 & = \sum_{\ell,\ell'=1}^{L_{n(M)}} \alpha_\ell^{(n(M))}\alpha_{\ell'}^{(n(M))} k_M(\vec x_M^{(n(M), \ell')}, \vec x_M^{(n(M), \ell')}) \\
		& \leq \sum_{\ell,\ell'=1}^{L_{n(M)}}  \alpha_\ell^{(n(M))}\alpha_{\ell'}^{(n(M))} k(\mu_{\ell'}^{(n(M))}, \mu_\ell^{(n(M))}) \
		+ |R_1| + |R_2| \\
		& = \| f_{n(M)}^\text{pre}\|_k^2 + R_1 + R_2 \\
		& \leq \| f\|_k^2 +  R_1 + R_2.
	\end{align*}
    with remainder terms {\small
    \begin{align*} 
        R_1 & = \sum_{\ell,\ell'=1}^{L_{n(M)}} \alpha_\ell^{(n(M))}\alpha_{\ell'}^{(n(M))} k_M(\vec x_M^{(n(M), \ell')}, \vec x_M^{(n(M), \ell')}) - \sum_{\ell,\ell'=1}^{L_{n(M)}} \alpha_\ell^{(n(M))}\alpha_{\ell'}^{(n(M))} k(\muhat[\vec x_M^{(n(M), \ell')}], \muhat[\vec x_M^{(n(M), \ell')}]) \\
        R_2 & = \sum_{\ell,\ell'=1}^{L_{n(M)}} \alpha_\ell^{(n(M))}\alpha_{\ell'}^{(n(M))} k(\muhat[\vec x_M^{(n(M), \ell')}], \muhat[\vec x_M^{(n(M), \ell')}]) - \sum_{\ell,\ell'=1}^{L_{n(M)}}  \alpha_\ell^{(n(M))}\alpha_{\ell'}^{(n(M))} k(\mu_{\ell'}^{(n(M))}, \mu_\ell^{(n(M))}) 
    \end{align*} }
	We now bound these terms, so that {\small
	\begin{align*}
		R_1 & =  \left| \sum_{\ell,\ell'=1}^{L_{n(M)}} \alpha_\ell^{(n(M))}\alpha_{\ell'}^{(n(M))} k_M(\vec x_M^{(n(M), \ell')}, \vec x_M^{(n(M), \ell')}) - \sum_{\ell,\ell'=1}^{L_{n(M)}} \alpha_\ell^{(n(M))}\alpha_{\ell'}^{(n(M))} k(\muhat[\vec x_M^{(n(M), \ell')}], \muhat[\vec x_M^{(n(M), \ell')}])  \right| \\
		& \leq  \sum_{\ell,\ell'=1}^{L_{n(M)}} | \alpha_\ell^{(n(M))}| |\alpha_{\ell'}^{(n(M))}| | k_M(\vec x_M^{(n(M), \ell')}, \vec x_M^{(n(M), \ell')}) -  k(\muhat[\vec x_M^{(n(M), \ell')}], \muhat[\vec x_M^{(n(M), \ell')}]) | \\
		& \leq \frac{\epsilon_{n(M)}^2}{2},
	\end{align*} }
	and {\small
	\begin{align*} 
		R_2 & =  \left| \sum_{\ell,\ell'=1}^{L_{n(M)}} \alpha_\ell^{(n(M))}\alpha_{\ell'}^{(n(M))} k(\muhat[\vec x_M^{(n(M), \ell')}], \muhat[\vec x_M^{(n(M), \ell')}]) - \sum_{\ell,\ell'=1}^{L_{n(M)}}  \alpha_\ell^{(n(M))}\alpha_{\ell'}^{(n(M))} k(\mu_{\ell'}^{(n(M))}, \mu_\ell^{(n(M))})  \right| \\
		& \leq  \sum_{\ell,\ell'=1}^{L_{n(M)}} | \alpha_\ell^{(n(M))}| |\alpha_{\ell'}^{(n(M))}| |   k(\muhat[\vec x_M^{(n(M), \ell')}], \muhat[\vec x_M^{(n(M), \ell')}]) - k(\mu_{\ell'}^{(n(M))}, \mu_\ell^{(n(M))}) | \\
		& \leq L_k \sum_{\ell,\ell'=1}^{L_{n(M)}} | \alpha_\ell^{(n(M))}| |\alpha_{\ell'}^{(n(M))}| \left(	\dkm(\muhat[\vec x_M^{(n(M),\ell)}], \mu_\ell^{(n(M))} ) + 	\dkm(\muhat[\vec x_M^{(n(M),\ell')}], \mu_{\ell'}^{(n(M))} ) \right) \\
		& \leq \frac{\epsilon_{n(M)}^2}{2}.
	\end{align*} }
	Altogether,
	\begin{align*}
		\| f_M \|_M^2 & \leq \| f\|_k^2 +  |R_1| + |R_2| \\
		& \leq \| f\|_k^2 +  \frac{\epsilon_{n(M)}^2}{2} +  \frac{\epsilon_{n(M)}^2}{2} \\
		& \leq  \| f\|_k^2 + \epsilon^2,
	\end{align*}
	so $\|f_M\|_M \leq \|f\|_k+\epsilon$ for all $M\geq M_{n_\epsilon}$, and since $\epsilon>0$ was arbitrary, we finally get
	$\limsup_{M\rightarrow\infty} \| f_{M} \|_{M} \leq \| f\|_k$.
\end{proof}
Finally, we can now provide the proof for the central Theorem \ref{thm.mfl_rkhs}.
\begin{proof}\emph{of Theorem \ref{thm.mfl_rkhs}}
The first statement is part of Lemma \ref{lem.gamma_limsup}.
Let us turn to the second statement: The existence of the subsequence $(f_{M_\ell})_\ell$ and the continuous function $f:\Pb(X)\rightarrow\R$ with $f_{M_\ell}\convmode{\Pone}f$ was shown in \cite[Corollary~4.3]{fiedleretal_mfl_kernels}, so we only have to ensure that $f\in H_k$ with $\|f\|_k\leq B$. For this, we use the characterization of RKHS functions from Theorem \ref{thm.charact_rkhs_funcs}. In particular, we will utilize the notation introduced there.

\textbf{Step 1} Let $(\vec \mu,\vec \alpha)\in \Pb(X)^N\times\R^N$. We show that if $\weightop(\vec \mu, \vec \alpha, k)=0$, then $\evalop(\vec \mu, \vec \alpha, f)=0$.

Assume that $\weightop(\vec \mu, \vec \alpha, k)=0$.
If $B=0$, then $f_M \equiv 0$ and $f_{M_\ell}\convmode{\Pone}f$ implies that $f\equiv 0$, so the claim is clear in this case.
Assume now $B>0$, let $\epsilon>0$ be arbitary and for $n=1,\ldots,N$, choose sequences $\vec x^{[M]}_n \in X^M$ such that $\vec x^{[M]}_n \convmode{\dkm} \mu_n$ for $M\rightarrow\infty$.
For convenience, define $\vec X^{[M]}=\begin{pmatrix} \vec x^{[M]}_1 & \cdots & \vec x^{[M]}_N \end{pmatrix}$.
Choose now $\ell_\epsilon\in\N$ such that for all $M\geq M_{\ell_\epsilon}$ we get
$\weightop(\vec X^{[M]}, \vec \alpha, k_M)\leq \epsilon/B$. 
This is possible since $k_M\convmode{\Pone}k$ together with the continuity of $k_M$ and $k$ as well as $\vec x^{[M]}_n \convmode{\dkm} \mu_n$ for $M\rightarrow\infty$ and all $n=1,\ldots,N$ implies that $\weightop(\vec X^{[M]}, \vec \alpha, k_M)\rightarrow \weightop(\vec \mu, \vec \alpha, k)=0$.
Let now $\ell\geq \ell_\epsilon$ be arbitrary and observe that $f_M \in H_M$ implies $\normop(f_M,k_M)<\infty$ according to Theorem \ref{thm.charact_rkhs_funcs}, so in particular $\weightedevalop(\vec X^{[M_\ell]}, \vec \alpha, f_{M_\ell}, k_{M_\ell})<\infty$.

If $\weightop(\vec X^{[M_\ell]}, \vec \alpha, k_{M_\ell})=0$, then we get that $\evalop(\vec X^{[{M_\ell}]}, \vec \alpha, f_{M_\ell})=0\leq \epsilon$ since $\weightedevalop(\vec X^{[{M_\ell}]}, \vec \alpha, f_{M_\ell}, k_{M_\ell})<\infty$, which implies by definition that $\evalop(\vec X^{[{M_\ell}]}, \vec \alpha, f_{M_\ell})=0$.

If $\weightop(\vec X^{[{M_\ell}]}, \vec \alpha, k_{M_\ell})>0$, then we have
\begin{equation*}
\frac{\evalop(\vec X^{[{M_\ell}]}, \vec \alpha, f_{M_\ell})}{\weightop(\vec X^{[{M_\ell}]}, \vec \alpha, k_{M_\ell})}
=\weightedevalop(\vec X^{[{M_\ell}]}, \vec \alpha, f_{M_\ell}, k_{M_\ell}) 
\leq \normop(f_{M_\ell},k_{M_\ell})=\|f_{M_\ell}\|_{M_\ell} 
\leq B,
\end{equation*}
which implies
\begin{equation*}
\evalop(\vec X^{[{M_\ell}]}, \vec \alpha, f_{M_\ell}) \leq B\weightop(\vec X^{[{M_\ell}]}, \vec \alpha, k_{M_\ell}) \leq \epsilon.
\end{equation*}
Since $f_{M_\ell}\convmode{\Pone}f$ together with the continuity of $f_M$ and $f$ as well as $\vec x^{[M]}_n \convmode{\dkm} \mu_n$ implies that $\evalop(\vec X^{[{M_\ell}]}, \vec \alpha, f_{M_\ell})\rightarrow \evalop(\vec \mu, \vec \alpha, f)$, we get that $\evalop(\vec \mu, \vec \alpha, f)\leq \epsilon$, and since $\epsilon>0$ was arbitrary we arrive at $\evalop(\vec \mu, \vec \alpha, f)\leq 0$.

Assume now that $\evalop(\vec \mu, \vec \alpha, f)<0$. 
This implies that there exist $\delta>0$ and $\ell_\delta\in\N$ such that for all $\ell\geq \ell_\delta$ we have $\evalop(\vec X^{[M_\ell]}, \vec \alpha, f_{M_\ell}) \leq -\delta <0$, since $\evalop(\vec X^{[M_\ell]}, \vec \alpha, f_{M_\ell})\rightarrow \evalop(\vec \mu, \vec \alpha, f)$.
Let $\ell\geq\ell_\delta$, then we get that $\evalop(\vec X^{[M_\ell]}, -\vec \alpha, f_{M_\ell}) \geq \delta >0$ 
and we have $\weightop(\vec X^{[M_\ell]}, -\vec \alpha, k_{M_\ell})=\weightop(\vec X^{[M_\ell]}, \vec \alpha, k_{M_\ell})>0$.
We can then continue with
\begin{align*}
\frac{\delta}{\weightop(\vec X^{[M_\ell]}, \vec \alpha, k_{M_\ell})} & \leq \frac{\evalop(\vec X^{[M_\ell]}, -\vec \alpha, f_{M_\ell})}{\weightop(\vec X^{[M_\ell]}, -\vec \alpha, k_{M_\ell})} \\
& \leq \weightedevalop(\vec X^{[M_\ell]}, -\vec \alpha, f_{M_\ell}, k_{M_\ell}) \\
& \leq \normop(f_{M_\ell}, k_{M_\ell}) \\
& = \|f_{M_\ell}\|_{M_\ell} \leq B,
\end{align*}
which implies that $\weightop(\vec X^{[M_\ell]}, -\vec \alpha, k_{M_\ell})=\weightop(\vec X^{[M_\ell]}, \vec \alpha, k_{M_\ell}) \geq \delta / B$.
But since  $\weightop(\vec X^{[M_\ell]}, \vec \alpha, k_{M_\ell})\rightarrow \weightop(\vec \mu, \vec \alpha, k)$, this implies that $\weightop(\vec \mu, \vec \alpha, k)\geq \delta/B >0$, a contradiction.
Altogether, $\evalop(\vec \mu, \vec \alpha, f)=0$.

\textbf{Step 2} Let $(\vec \mu,\vec \alpha)\in \Pb(X)^N\times\R^N$. 
If $\weightop(\vec \mu, \vec \alpha, k) > 0$ and $\evalop(\vec \mu, \vec \alpha, f)>0$, then
\begin{equation*}
\frac{\evalop(\vec \mu, \vec \alpha, f)}{\weightop(\vec \mu, \vec \alpha, k)} \leq B.
\end{equation*}

To show this, let $\alpha>1$ and $\beta\in(0,1)$ be arbitrary. Define
\begin{align*}
\epsilon_\alpha & = \frac{\alpha-1}{\alpha} \evalop(\vec \mu, \vec \alpha, f) \\
\epsilon_\beta & = (1/\beta -1)\weightop(\vec \mu, \vec \alpha, k)
\end{align*}
and observe that $\epsilon_\alpha,\epsilon_\beta>0$. 
Furthermore, for all $n=1,\ldots,N$ choose a sequence $\vec x^{[M]}_n \in X^M$ such that $\vec x^{[M]}_n \convmode{\dkm} \mu_n$ for $M\rightarrow\infty$, 
and define $\vec X^{[M]}=\begin{pmatrix} \vec x^{[M]}_1 & \cdots & \vec x^{[M]}_N \end{pmatrix}$.
Choose $\ell_\epsilon \in\Np$ such that for all $\ell \geq \ell_\epsilon$ we have
\begin{align*}
|\evalop(\vec X^{[M_\ell]}, \vec \alpha, f_{M_\ell}) - \evalop(\vec \mu, \vec \alpha, f)| & \leq \epsilon_\alpha \\
|\weightop(\vec X^{[M_\ell]}, \vec \alpha, k_{M_\ell}) - \weightop(\vec \mu, \vec \alpha, k)| & \leq \epsilon_\beta
\end{align*}
and $\weightop(\vec X^{[M_\ell]}, \vec \alpha, k_{M_\ell})>0$.
Such an $\ell_\epsilon$ exists because $k_M \convmode{\Pone} k$ together with the continuity of $k_M$ and $k$ as well as the convergence of $\vec x^{[M]}_n$ to $\mu_n$ imply that $\weightop(\vec X^{[M_\ell]}, \vec \alpha, k_{M_\ell})\rightarrow \weightop(\vec \mu, \vec \alpha, k)$, and $f_{M_\ell}\convmode{\Pone}f$ together with the continuity of $f_M$ and $f$ imply that $\evalop(\vec X^{[M_\ell]}, \vec \alpha, f_{M_\ell})\rightarrow \evalop(\vec \mu, \vec \alpha, f)$.

Let now $\ell\geq \ell_\epsilon$ be arbitrary.
By definition of $\epsilon_\alpha$ we get $\alpha \epsilon_\alpha \leq (\alpha-1)\evalop(\vec \mu, \vec \alpha, f)$, which in turn leads to
\begin{align*}
\epsilon_\alpha & \leq \epsilon_\alpha - \alpha \epsilon_\alpha + (\alpha-1)\evalop(\vec \mu, \vec \alpha, f) \\
& = -(\alpha-1)\epsilon_\alpha + (\alpha-1)\evalop(\vec \mu, \vec \alpha, f) \\
& = (\alpha-1)(\evalop(\vec \mu, \vec \alpha, f) - \epsilon_\alpha) \\
& \leq (\alpha-1)\evalop(\vec X^{[M_\ell]}, \vec \alpha, f_{M_\ell}),
\end{align*}
where we used in the last inequality that $\alpha-1>0$ and by choice of $\ell_\epsilon$ we have $\evalop(\vec \mu, \vec \alpha, f)\leq \evalop(\vec X^{[M_\ell]}, \vec \alpha, f_{M_\ell}) + \epsilon_\alpha$.
We can then continue with
\begin{align*}
\evalop(\vec \mu, \vec \alpha, f) & \leq \evalop(\vec X^{[M_\ell]}, \vec \alpha, f_{M_\ell}) + \epsilon_\alpha \\
& \leq \evalop(\vec X^{[M_\ell]}, \vec \alpha, f_{M_\ell}) + (\alpha-1)\evalop(\vec X^{[M_\ell]}, \vec \alpha, f_{M_\ell}) \\
& = \alpha \evalop(\vec X^{[M_\ell]}, \vec \alpha, f_{M_\ell}).
\end{align*}
Next, by definition of $\epsilon_\beta$ and choice of $\ell_\epsilon$ we find that
\begin{align*}
\weightop(\vec X^{[M_\ell]}, \vec \alpha, k_{M_\ell}) & \leq \weightop(\vec \mu, \vec \alpha, k) + \epsilon_\beta \\
& = \weightop(\vec \mu, \vec \alpha, k) + (1/\beta -1)\weightop(\vec \mu, \vec \alpha, k) \\
&  = (1/\beta) \weightop(\vec \mu, \vec \alpha, k),
\end{align*}
hence
\begin{equation*}
\frac{1}{\weightop(\vec \mu, \vec \alpha, k)} \leq \frac{1}{\beta \weightop(\vec X^{[M_\ell]}, \vec \alpha, k_{M_\ell})}.
\end{equation*}
Combining these results, we get that for all $\ell\geq \ell_\epsilon$
\begin{equation*}
\frac{\evalop(\vec \mu, \vec \alpha, f)}{\weightop(\vec \mu, \vec \alpha, k)}
\leq \frac{\alpha}{\beta} \frac{\evalop(\vec X^{[M_\ell]}, \vec \alpha, f_{M_\ell})}{\weightop(\vec X^{[M_\ell]}, \vec \alpha, k_{M_\ell})}
\leq \frac{\alpha}{\beta} \normop(f_{M_\ell},k_{M_\ell})
= \frac{\alpha}{\beta} \|f_{M_\ell}\|_{M_\ell}
\leq \frac{\alpha}{\beta} B.
\end{equation*}
Since $\alpha>1$ and $\beta\in(0,1)$ were arbitrary, this shows that
\begin{equation*}
\frac{\evalop(\vec \mu, \vec \alpha, f)}{\weightop(\vec \mu, \vec \alpha, k)} \leq B.
\end{equation*}

\textbf{Step 3}
Let $(\vec \mu, \vec \alpha) \in \Pb(X)^N \times \R^N$ be arbitrary.
If $\weightop(\vec \mu, \vec \alpha, k)=0$, then we get from Step 1 that $\evalop(\vec \mu, \vec \alpha, f)=0\leq B$.
Assume now $\weightop(\vec \mu, \vec \alpha, k)>0$. If $\evalop(\vec \mu, \vec \alpha, f)=0$, then again $\evalop(\vec \mu, \vec \alpha, f)=0\leq B$. 
If  $\evalop(\vec \mu, \vec \alpha, f)>0$, then Step 2 ensures that
\begin{equation*}
\frac{\evalop(\vec \mu, \vec \alpha, f)}{\weightop(\vec \mu, \vec \alpha, k)} = \weightedevalop(\vec \mu, \vec \alpha, f, k) \leq B.
\end{equation*}
Finally, if $\evalop(\vec \mu, \vec \alpha, f)<0$, then again
\begin{equation*}
\frac{\evalop(\vec \mu, \vec \alpha, f)}{\weightop(\vec \mu, \vec \alpha, k)} = \weightedevalop(\vec \mu, \vec \alpha, f, k) < 0 \leq B.
\end{equation*}
Altogether, we get that $\weightedevalop(\vec \mu, \vec \alpha, f, k) \leq B$.
Since $(\vec \mu, \vec \alpha)$ was arbitrary, maximization leads to $\normop(f,k)\leq B <\infty$, hence $f\in H_k$ and $\|f\|_k=\normop(f,k)\leq B$.
\end{proof}

\subsection{Proofs for Section \ref{sec.approx_mfl}}
In this section we provide the proofs for the results relating to approximation with kernels in the mean field limit.
\begin{proof} \emph{of Proposition \ref{prop.simple_approx}}
	Let $f\in\mathcal{F}$ and $\epsilon>0$ be arbitrary.
	Let $B\in \Rnn$ and $f_M\in \mathcal{F}_M$, $\hat f_M\in H_M$, $M\in\Np$, such that $f_M \convmode{\Pone}f$, $\|f_M-\hat f_M\|\leq \frac \epsilon 5$ and $\|\hat f_M\|_M\leq B$ for all $M\in\Np$ (exist by definition of $\mathcal{F}$).
	Theorem \ref{thm.mfl_rkhs} ensures that there exists a subsequence $(f_{M_\ell})_\ell$ and $\hat f\in H_k$ with $\|\hat f\|_k\leq B$ such that $\hat f_{M_\ell} \convmode{\Pone} \hat f$ for $\ell\rightarrow\infty$.
	Choose now $L_1\in\Np$ such that for all $\ell\geq L_1$ we have
	\begin{align*}
		\sup_{\vec x \in X^{M_\ell}} | \hat f_{M_\ell}(\vec x) - \hat f(\muhat[\vec x])| & \leq \frac \epsilon 5 \\
		\sup_{\vec x \in X^{M_\ell}} | f_{M_\ell}(\vec x) -  f(\muhat[\vec x])| & \leq \frac \epsilon 5.
	\end{align*}
	Let now $\mu\in\Pb(X)$ be arbitrary and choose a sequence $\vec x_M \in X^M$ with $\muhat[\vec x_M] \convmode{\dkm} \mu$.
	Finally, let $L_2\in\Np$ such that for all $\ell\geq L_2$ we have
	\begin{align*}
		|f(\mu)-f(\muhat[\vec x_{M_\ell}])| & \leq \frac \epsilon 5 \\
		|\hat f(\mu)- \hat f(\muhat[\vec x_{M_\ell}])| & \leq \frac \epsilon 5 \\
	\end{align*}
	(such an $L_2$ exists due to the continuity of $f$ and $\hat f$). 

	We now have for $\ell\geq\max\{L_1,L_2\}$ that {\small
	\begin{align*}
		|f(\mu)-\hat f(\mu)| & \leq |f(\mu) - f(\muhat[\vec x_{M_\ell}]) | + |f(\muhat[\vec x_{M_\ell}]) - f_{M_\ell}(\vec x_{M_\ell})|
			+ |f_{M_\ell}(\vec x_{M_\ell}) - \hat f_{M_\ell}(\vec x_{M_\ell}) | \\
			& \hspace{1cm} + |\hat f_{M_\ell}(\vec x_{M_\ell}) - \hat f(\muhat[\vec x_{M_\ell}])|
			+ |\hat f(\muhat[\vec x_{M_\ell}]) - \hat f(\mu) | \\
			& \leq \frac \epsilon 5 + \frac \epsilon 5 + \frac \epsilon 5 + \frac \epsilon 5 + \frac \epsilon 5 = \epsilon.
	\end{align*} }
	Since $\mu$ was arbitrary, the result follows.
\end{proof}

\begin{proof} \emph{of Remark \ref{remark.approx_space}}
    We first show that $\mathcal{F}$ is a subvectorspace.
    Let $f,g\in\mathcal{F}$ and $\lambda\in\R$, $\epsilon>0$ be arbitrary.
	W.l.o.g. we can assume $\lambda \not = 0$. Choose sequences $f_M,g_M \in \mathcal{F}_M$, $\hat f_M, \hat g_M \in H_M$, $M\in\Np$, and constants $B_f,B_g\in\Rnn$ from the definition of $\mathcal{F}$ for $f$, $\frac{\epsilon}{2|\lambda|}$, and $g$, $\frac \epsilon 2$, respectively.
	Let $M\in\Np$, $\vec x \in X^M$ be arbitrary, then 
	\begin{equation*}
		|\lambda f_M(\vec x) + g(\vec x) - (\lambda f(\muhat[\vec x]) - g(\muhat[\vec x]))|
		\leq 
		|\lambda||f_M(\vec x) - f(\muhat[\vec x])| + |g_M(\vec x) - g(\muhat[\vec x])|
	\end{equation*}
	together with $f_M \convmode{\Pone} f$, $g_M \convmode{\Pone} g$ shows that $\lambda f_M + g_M \convmode{\Pone} \lambda f + g$.

	Next, we have for all $M\in\Np$ that
	\begin{align*}
		\| (\lambda f_M + g_M) - (\lambda \hat f_M + \hat g_M)\|_\infty & \leq |\lambda|\|f_M - \hat f_M \|_\infty + \| g_M - \hat g_M\|_\infty \leq |\lambda| \frac{\epsilon}{2|\lambda|} + \frac \epsilon 2 = \epsilon.		
	\end{align*}

	Finally,
	\begin{equation*}
		\|\lambda \hat f_M + \hat g_M\|_M \leq |\lambda|\|\hat f_M\|_M + \|\hat g_M\|_M \leq |\lambda|B_f + B_g,
	\end{equation*}
	establishing that $(\lambda \hat f_M + \hat g_M)_M$ is uniformly norm-bounded. Altogether, we have that $\lambda f + g \in \mathcal{F}$.
	
	We now turn to the second claim.
	Let $(f^{(n)})_n \subseteq \mathcal{F}$ such that $f^{(n)} \rightarrow f$ for some $f\in C^0(\Pb(X),\R)$ and for all $\bar\epsilon>0$ there exist 
	$f^{(n)}_M \in \mathcal{F}_M$, 
	$\hat f^{(n)}_M \in H_M$, 
	$(\rho_M)_M \subseteq \Rnn$ 
	and $B^{(n)}\in\Rnn$ 
	with
	$\rho_M \searrow 0$, $\| f^{(n)}_M - \hat f^{(n)}_M \|_\infty \leq \bar\epsilon$ 
	and $\|\hat f^{(n)}_M \|_M \leq B^{(n)}$ for all $n,M\in\Np$, 
	and
	\begin{equation*}
		\sup_{\vec x \in X^M} |f^{(n)}_M(\vec x) - f^{(n)}(\muhat[\vec x])| \leq \rho_M
	\end{equation*}
	for all $n,M\in\Np$. 
	We now show that $f\in\mathcal{F}$. For this, let $\epsilon>0$ be arbitrary and choose $f^{(n)}_M \in \mathcal{F}_M$, $\hat f^{(n)}_M \in H_M$, $(\rho_M)_M \subseteq \Rnn$ and $B^{(n)}\in\Rnn$ as above with $\bar\epsilon = \frac\epsilon 4$.
	Let $N\in\Np$ be such that $\|f^{(m)}-f^{(n)}\|_\infty \leq \frac \epsilon 4$ for all $m,n\geq N$ (such an $N$ exists since $(f^{(n)})_n$ converges in $C^0(\Pb(X),\R)$ and hence is a Cauchy sequence).
	Furthermore, let $M_\rho\in\Np$ be such that for all $M\geq M_\rho$ we have $\rho_M\leq \frac \epsilon 4$.
	Define now $f_M=f_M^{(M)}$ and $\hat f_M = \hat f_M^{(M)}$ for $M=1,\ldots,M_\rho-1$,
	and $f_M = f_M^{(M+N)}$, $\hat f_M = \hat f_M^{(N)}$ for $M\geq M_\rho$.

	\textbf{Step 1} Let $M\geq M_\rho$ and $\vec x \in X^M$ be arbitrary. We have
	\begin{align*}
		|f_M(\vec x) - f(\muhat[\vec x])| & = | f_M^{(N+M)}(\vec x) - f(\muhat[\vec x])| \\
		& \leq |f_M^{(N+M)}(\vec x) - f^{(N+M)}(\muhat[\vec x])| + |f^{(N+M)}(\muhat[\vec x]) - f(\muhat[\vec x])| \\
		& \leq \rho_M + \|f^{(N+M)} - f \|_\infty,
	\end{align*}
	and since the right hand side (which is independent of $\vec x$) converges to 0 for $M\rightarrow\infty$, we get $f_M \convmode{\Pone} f$.

	\textbf{Step 2} For $M=1,\ldots,M_\rho$ we get
	\begin{equation*}
		\|f_M - \hat f_M\|_\infty = \| f_M^{(M)} - \hat f_M^{(M)} \|_\infty \leq \bar \epsilon \leq \epsilon.
	\end{equation*}
	Let now $M\geq M_\rho$ and $\vec x \in X^M$ be arbitrary. We have
	\begin{align*}
		|f_M(\vec x) - \hat f_M(\vec x)| & = |f_M^{(M+N)}(\vec x) - \hat f_M^{(N)}(\vec x)| \\
		& \leq |f_M^{(M+N)}(\vec x) - f^{(N+M)}(\muhat[\vec x])| 
			+ | f^{(N+M)}(\muhat[\vec x]) - f^{(N)}(\muhat[\vec x])| \\
			& \hspace{1cm}
			+ |f^{(N)}(\muhat[\vec x]) - f_M^{(N)}(\vec x)| 
			+ | f_M^{(N)}(\vec x) - \hat f_M^{(N)}(\vec x)| \\
		& \leq \sup_{\vec x' \in X^M} |f_M^{(M+N)}(\vec x') - f^{(M+N)}(\muhat[\vec x'])|
			+ \| f^{(M+N)} - f^{(N)}\|_\infty \\
			& \hspace{1cm}
			+ \sup_{\vec x' \in X^M} |f^{(N)}(\muhat[\vec x']) - f_M^{(N)}(\vec x')| 
			+ \| f_M^{(N)} - \hat f_M^{(N)}\|_\infty \\
		& \leq \rho_M + \frac \epsilon 4 + \rho_M + \bar \epsilon \\
		& \leq 4 \frac \epsilon 4 = \epsilon,
	\end{align*}
	and since $\vec x \in X^M$ was arbitrary, we get $\|f_M - \hat f_M\|_\infty\leq \epsilon$.

	\textbf{Step 3} For $M=1,\ldots,M_\rho-1$ we get by construction that $\|\hat f_M\|_M = \|\hat f_M^{(M)}\|_M \leq B^{(M)}$,
	and for $M\geq M_\rho$ we find $\|\hat f_M\|_M = \| \hat f_M^{(N)} \|_M \leq B^{(N)}$.
	Altogether, we get for $M\in\Np$ that
	\begin{equation*}
		\| \hat f_M \|_M \leq \max\{ B^{(1)}, \ldots, B^{(M_\rho-1)}, B^{(N)}\}.
	\end{equation*}
	Combining the three steps establishes that $f\in\mathcal{F}$.
\end{proof}
Finally, here is the proof of the represnter theorem in the mean field limit.
\begin{proof} \emph{of Theorem \ref{thm.mfl_rt}}
	The existence and uniqueness of $f_M$ and $f$ follows from the well-known representer theorem (applied to all $k_M$ and $k$).

    We now turn to the convergence of the minimizers.
	For all $M\in\Np$ we have
	\begin{equation*}
		\lambda \|f_M^\ast\|_M \leq  L(f^\ast_M(\vec x^{[M]}_1), \ldots, f^\ast_M(\vec x^{[M]}_N)) + \lambda \|f\|_M \leq  L(0,\ldots,0),
	\end{equation*}
	i.e., $\|f^\ast_M\|_M \leq L(0,\ldots,0)/\lambda$.
	Define
	\begin{align*}
		\mathcal{L}_M &: H_{M} \rightarrow \Rnn, \: f \mapsto L(f(\vec x^{[M]}_1), \ldots, f(\vec x^{[M]}_N)) + \lambda \|f\|_{M} \\
		\mathcal{L} &: H_k \rightarrow \Rnn, \: f \mapsto  L(f(\mu_1), \ldots, f(\mu_N)) + \lambda \|f\|_k,
	\end{align*}
	and let $f_M \in H_M$ with $f_M \convmode{\Pone} f$ for some $f\in H_k$. The continuity of $f_M$, $f$ and $L$ as well as $\vec x^{[M]}_n \convmode{\dkm} \mu_n$ for $M\rightarrow\infty$ and all $n=1,\ldots,N$, imply then that $\lim_{M\rightarrow\infty} L(f_M(\vec x^{[M]}_1), \ldots, f_M(\vec x^{[M]}_N)) =  L(f(\mu_1), \ldots, f(\mu_N))$.
	Combining this with Lemma \ref{lem.gamma_liminf} leads to
	\begin{equation*}
		\mathcal{L}(f) \leq \liminf_{M\rightarrow\infty} \mathcal{L}_{M}(f).
	\end{equation*}
	Let now $f\in H_k$ be arbitrary and let $f_M\in H_{M}$ be the sequence from Lemma \ref{lem.gamma_limsup}.
	Using the same arguments as above we find that
	\begin{equation*}
		\limsup_{M\rightarrow\infty} \mathcal{L}_{M}(f_M) \leq \| f\|_k.
	\end{equation*}
	We have shown that $\mathcal{L}_M \convmode{\Gamma} \mathcal{L}$ and hence Proposition \ref{prop.gamma_conv_mfl} ensures that there exists a subsequence $(f^\ast_{M_\ell})_\ell$ such that $f^\ast_{M_\ell} \convmode{\Pone} f^\ast$ and $\mathcal{L}_{M_\ell}(f^\ast_{M_\ell}) \rightarrow \mathcal{L}(f^\ast)$.
\end{proof}

\subsection{Proofs for Section \ref{sec.svm}}
\begin{proof} \emph{of Lemma \ref{lem.cvx_ell}}
	That $\ell$ is nonnegative is clear from the proof of Proposition \ref{prop.mfl_func_extended}.
	Let now all $\ell_M$ be convex and let $\mu\in\Pb(X)$, $y\in Y, t_1,t_2\in\R$ and $\lambda\in(0,1)$ be arbitrary,
    and define $I=[\min\{t_1,t_2\}, \max\{t_1,t_2\}]$.
	Furthermore, let $\vec x_M \in X^M$ with $\vec x_M \convmode{\dkm} \mu$ for $M\rightarrow\infty$ and $\epsilon >0$ be arbitrary.
	Choose now $M$ so large that
	\begin{align*}
		|\ell(\mu, y, \lambda t_1 + (1-\lambda) t_2) - \ell(\muhat[\vec x_M],y, \lambda t_1 + (1-\lambda) t_2)| & \leq %
        \frac{\epsilon}{6}
		\sup_{\substack{\vec x \in X^M\\y'\in Y,t\in I}} |\ell_M(\vec x,y',t') - \ell(\muhat[\vec x],y',t')| \\
        &\leq %
        \frac{\epsilon}{6}.
	\end{align*}
    This is possible due to the continuity of $\ell$, as well as $\ell_M\convmode{\Pone}\ell$.
	We then have
	\begin{align*}
		\ell(\mu,y,\lambda t_1 + (1-\lambda)t_2) & \leq \ell(\muhat[\vec x],y,\lambda t_1 + (1-\lambda)t_2) + \frac \epsilon 6 \\
		& \leq \ell_M(\vec x_M,y,\lambda t_1 + (1-\lambda)t_2) + \frac \epsilon 3 \\
		& \leq \lambda \ell_M(\vec x_M, y, t_1) + (1-\lambda)\ell_M(\vec x_M, y, t_2) + \frac \epsilon 3 \\
		& \leq \lambda \ell(\muhat[\vec x_M],y, t_1) + (1-\lambda)\ell(\muhat[\vec x_M], y, t_2) + \frac{\epsilon}{3} + (\lambda + 1 - \lambda)\frac{\epsilon}{6} \\
		& \leq \lambda \ell(\mu,y,t_1) + (1-\lambda)\ell(\mu, y, t_2) + \epsilon,
	\end{align*}
	and since $\epsilon>0$ was arbitrary, this establishes
	\begin{equation*}
		\ell(\mu, y, \lambda t_1 + (1-\lambda) t_2) \leq \lambda \ell(\mu, y, t_1) + (1-\lambda)\ell(\mu, y, t_2),
	\end{equation*}
	i.e., convexity of $\ell$.
\end{proof}

\begin{proof} \emph{of Proposition \ref{prop.conv_empirical}}
	From Lemma \ref{lem.cvx_ell} we get that $\ell$ is nonnegative and convex.
	The existence, uniqueness and the representation formulas follow then from the standard representer theorem, cf. e.g., \cite[Theorem~5.5]{SC08}.

	Furthermore, for all $M\in\Np$ we have
	\begin{align*}
		\lambda \|f^\ast_{M,\lambda}\|_M^2 & \leq \frac 1 N \sum_{n=1}^N \ell_M(\vec x^{[M]}_n, y_n^{[M]}, f^\ast_{M,\lambda}(\vec x_n^{[M]})) + \lambda \|f^\ast_{M,\lambda}\|^2_M \\
		& \leq \risk_{\ell_M,D^{[M]}_N,\lambda}(0) \\
		& \leq NC_\ell,
	\end{align*}
	hence $\|f^\ast_{M,\lambda}\|_M \leq \sqrt{\frac{N C_\ell}{\lambda}}$. 

	Let $f\in H_k$ and $(f_M)_M$, $f_M\in H_M$, such that $f_M \convmode{\Pone} f$.
	From $D_N^{[M]} \convmode{\Pone} D_N$ and the continuity of $\ell_M$, $\ell$, together with $\ell_M\convmode{\Pone}\ell$ and the boundedness of $\{y_n^{[M]} \mid M\in\Np,n=1,\ldots,N\}\subseteq Y$ and $\{f_M(\vec x_n^{[M]}) \mid M\in\Np,N=1,\ldots,N\}$ we find that
	\begin{equation*}
		\lim_{M} \frac 1 N \sum_{n=1}^N \ell_M(\vec x^{[M]}_n, y_n^{[M]}, f_M(\vec x_n^{[M]})) = \frac 1 N \sum_{n=1}^N \ell(\mu_n, y_n, f(\mu_n)).
	\end{equation*}
	Combining this with Lemma \ref{lem.gamma_liminf} and Lemma \ref{lem.gamma_limsup} then establishes that $\risk_{\ell_M,D_N^{[M]},\lambda} \convmode{\Gamma} \risk_{\ell,D_N,\lambda}$ and the remaining claims follow from Proposition \ref{prop.gamma_conv_mfl} and the uniqueness of the minimizers.
\end{proof}

\begin{proof} \emph{of Lemma \ref{lem.conv_risk_empirical_svm}}
	Let $\epsilon>0$ be arbitrary.
	Recall from the proof of Proposition \ref{prop.conv_empirical} that for all $M\in\Np$ we have $\|f^\ast_{M,\lambda}\|_M \leq \sqrt{\frac{N C_\ell}{\lambda}}$, and  hence for all $\vec x \in X^M$ we have
	\begin{align*}
		|f^\ast_{M,\lambda}(\vec x)| & \leq \|f^\ast_{M,\lambda}\|_k \|k_M(\cdot, \vec x)\|_k \\ 
		& \leq \sqrt{\frac{N C_\ell}{\lambda}} \sqrt{C_k}.
	\end{align*}
	A similar argument applies to $f^\ast_\lambda\in H_k$, so we can find a compact set $K\subseteq \R$ with
	\begin{equation*}
		\{ f^\ast_{M,\lambda}(\vec x_n^{[M]}) \mid M\in \Np, n=1,\ldots,N \} \cup 	\{ f^\ast_{\lambda}(\mu _n) \mid n=1,\ldots,N \}  \subseteq K.
	\end{equation*}
	Choose now $m_\epsilon\in\Np$ such that for all $m\geq m_\epsilon$ we have
	\begin{align*}
		\sup_{\substack{\vec x \in X^{M_m}\\y\in Y}} |\ell_{M_m}(\vec x, y, f^\ast_{M_m,\lambda}(\vec x)) - \ell_{M_m}(\vec x, y, f^\ast_{\lambda}(\muhat[\vec x]))| & \leq \frac \epsilon 3 \\
		\sup_{\substack{\vec x \in X^{M_m}\\y \in Y,t\in K}} |\ell_{M_m}(\vec x, y, t) - \ell(\muhat[\vec x],y,t)| & \leq \frac \epsilon 3 \\
		\left| \int_{X^{M_m}\times Y} \ell(\muhat[\vec x], y, f^\ast_{\lambda}(\muhat[\vec x])) \mathrm{d}P^{[M_m]}(\vec x,y) 
			- \int_{\Pb(X)\times Y}  \ell(\mu, y, f^\ast_{\lambda}(\mu)) \mathrm{d}(\mu,y) 
		\right| & \leq \frac \epsilon 3.
	\end{align*}
	Such a $m_\epsilon$ exists since $f^\ast_{M_m,\lambda}\convmode{\Pone}f^\ast_\lambda$ and all $\ell_{M_m}$ are uniformly Lipschitz continuous (first inequality),
	$\ell_{M_m} \convmode{\Pone}\ell$ and $Y$ and $K$ are compact (second inequality),
	and $P^{[M]}\convmode{\Pone}P$ as well as that $(\mu,y) \mapsto \ell(\mu,y,f_\lambda^\ast(\mu))$ is continuous and bounded (third inequality).
	We now have
	\begin{align*}
		& \left|\risk_{\ell_{M_m}, P^{[M_m]}}(f^\ast_{M_m,\lambda}) - \risk_{\ell, P}(f^\ast_\lambda)\right| \\
		& \hspace{1cm} \leq \left| \int_{X^{M_m}\times Y} \ell_{M_m}(\vec x, y, f^\ast_{M_m,\lambda}(\vec x)) 
			- \ell_{M_m}(\vec x, y, f^\ast_{\lambda}(\muhat[\vec x])) \mathrm{d}P^{[M_m]}(\vec x,y)\right| \\
		& \hspace{2cm} + \left| \int_{X^{M_m}\times Y} \ell_{M_m}(\vec x, y, f^\ast_{\lambda}(\muhat[\vec x]))
			- \ell(\muhat[\vec x], y, f^\ast_{\lambda}(\muhat[\vec x])) \mathrm{d}P^{[M_m]}(\vec x,y) \right| \\
		& \hspace{2cm} + \left| \int_{X^{M_m}\times Y} \ell(\muhat[\vec x], y, f^\ast_{\lambda}(\muhat[\vec x])) \mathrm{d}P^{[M_m]}(\vec x,y) 
			- \int_{\Pb(X)\times Y}  \ell(\mu, y, f^\ast_{\lambda}(\mu)) \mathrm{d}(\mu,y) 
			\right| \\
		& \hspace{1cm} \leq\int_{X^{M_m}\times Y} | \ell_{M_m}(\vec x, y, f^\ast_{M_m,\lambda}(\vec x)) 
			- \ell_{M_m}(\vec x, y, f^\ast_{\lambda}(\muhat[\vec x]))| \mathrm{d}P^{[M_m]}(\vec x,y) \\
		&  \hspace{2cm} + \int_{X^{M_m}\times Y} |\ell_{M_m}(\vec x, y, f^\ast_{\lambda}(\muhat[\vec x]))
			- \ell(\muhat[\vec x], y, f^\ast_{\lambda}(\muhat[\vec x])) | \mathrm{d}P^{[M_m]}(\vec x,y) \\
		&  \hspace{2cm} + \frac{\epsilon}{3} \\
		&  \hspace{1cm} \leq \epsilon,
	\end{align*}
	and since $\epsilon>0$ was arbitrary, the claim follows.
\end{proof}

\begin{proof} \emph{of Proposition \ref{prop.conv_inf_sample_svm}}
	Observe that all $k_M$ are bounded measurable kernels, $\risk_{\ell_M,P^{[M]}}(f_M)<\infty$ for all $f\in H_M$, $\ell_M$ is a convex, $P^{[M]}$-integrable Nemitskii loss (cf. Remark \ref{remark.nemitskii_loss}) and hence \cite[Lemma~5.1,~Theorem~5.2]{SC08} guarantee the existence and uniqueness of $f^\ast_{M,\lambda}$. A completely analogous argument shows the existence and uniqueness of $f^\ast_\lambda$.

	We now show that $\risk_{\ell_M,P^{[M]},\lambda}\convmode{\Gamma}\risk_{\ell,P,\lambda}$.
	For the $\Gamma$-$\liminf$-inequality, let $f_M\in H_M$, $f\in H_k$ be arbitrary with $f_M\convmode{\Pone} f$, and let $\epsilon>0$.
	Choose $M_\epsilon\in\Np$ so large that for all $M\geq M_\epsilon$
	\begin{equation*}
		\left| \int \ell(\muhat[\vec x], y, f(\muhat[\vec x]) \mathrm{d}P^{[M]}(\vec x,y)) 
		- \int \ell(\mu, y, f(\mu))\mathrm{d}P(\mu,y)\right| \leq \frac \epsilon 2
	\end{equation*}
	(this is possible since $(\mu,y)\mapsto \ell(\mu,y,f(\mu))$ is bounded and continuous and $P^{[M]} \convmode{\Pone} P$)
	and
	\begin{equation*}
		|\ell_M(\vec x,y,f_M(\vec x)) - \ell(\muhat[\vec x],y,f(\muhat[\vec x]))| \leq \frac \epsilon 2
	\end{equation*}
	for all $\vec x\in X^M$, $y\in Y$ (this is possible due to the same argument used in the proof of Lemma \ref{lem.conv_risk_empirical_svm}). 
	For $M\geq M_\epsilon$ we then find
	\begin{align*}
		\risk_{\ell,P,\lambda}(f) & = \int \ell(\mu, y, f(\mu))\mathrm{d}P(\mu,y) + \lambda \|f\|_k^2 \\
		& \leq \int \ell_M(\vec x, y, f_M(\vec x))\mathrm{d}P^{[M]}(\vec x, y) \\
		& \hspace{1cm} + \left| \int \ell(\muhat[\vec x], y, f(\muhat[\vec x]) \mathrm{d}P^{[M]}(\vec x,y)) 
			- \int \ell(\mu, y, f(\mu))\mathrm{d}P(\mu,y)\right| \\
		& \hspace{1cm} + \left|\int \ell_M(\vec x,y,f_M(\vec x)) - \ell(\muhat[\vec x],y,f(\muhat[\vec x])) \mathrm{d}P^{[M]}(\vec x,y)\right|
			+ \lambda \|f\|_k^2 \\
		& \leq \int \ell_M(\vec x, y, f_M(\vec x))\mathrm{d}P^{[M]}(\vec x, y) + \lambda \liminf_M \|f_M\|_M^2 + \epsilon,
	\end{align*}
	where we used Lemma \ref{lem.gamma_liminf} in the last inequality.

	For the $\Gamma$-$\limsup$-inequality, let $f\in H_k$ be arbitrary and let $(f_M)_M$ be the recovery sequence from Lemma \ref{lem.gamma_limsup}. The desired inequality then follows by repeating the arguments from above.

	Finally, using exactly the same argument as in the proof of Proposition \ref{prop.conv_empirical} shows that $\|f^\ast_{M,\lambda}\|_M\leq \sqrt{\frac{N C_\ell}{\lambda}}$, so we can apply Proposition \ref{prop.gamma_conv_mfl} and the result follows.
\end{proof}

\begin{proof} \emph{of Proposition \ref{prop.conv_minimal_risk}}
	Let $(\epsilon_n)_n\subseteq \Rp$ with $\epsilon_m\searrow 0$. We construct a strictly increasing sequence $(M_n)_n$ such that
    \begin{equation*}
        \left| \risk^{H_{M_n}\ast}_{\ell_{M_n},P^{[M_n]}} - \risk^{H_k\ast}_{\ell,P} \right| 
        \leq \epsilon_n
    \end{equation*}
    for all $n\in\Np$.
    
    We start with $n=1$:
    Since $A_2(0)=0$ and $A_2$ is continuous in 0, cf. \cite[Lemma~5.15]{SC08}, there exists $\lambda_1'\in\Rp$ such that $A_2(\lambda)\leq \frac{\epsilon_1}{3}$ for all $0<\lambda\leq \lambda_1'$.
    From Assumption \ref{assumpt.approx_error_func} we get $\lambda_1''\in\Rp$ such that for all $M\in\Np$ we have $A_2^{[M]}(\lambda)\leq \frac{\epsilon_1}{3}$ for all $0<\lambda\leq \lambda_1''$. 
    Define now $\lambda_1=\min\{\lambda_1', \lambda_1''\}$, and observe that $\lambda_1>0$.
	Proposition \ref{prop.conv_inf_sample_svm} ensures the existence of a strictly increasing sequence $(M^{(1)}_m)_m\subseteq\Np$ with
 	\begin{equation*}
		\risk^{H_{M^{(1)}_m}\ast}_{\ell_{M^{(1)}_m},P^{[M^{(1)}_m]},\lambda_1}
        \rightarrow 
        \risk_{\ell,P,\lambda_1}^{H_k\ast}
	\end{equation*}
    for $m\rightarrow\infty$.
	Choose $m_1\in\Np$ such that for all $m\geq m_1$ we have
	\begin{equation*}
		\left| \risk^{H_{M^{(1)}_m}\ast}_{\ell_{M^{(1)}_m},P^{[M^{(1)}_m]},\lambda_1} - \risk_{\ell,P,\lambda_1}^{H_k\ast} \right| \leq \frac{\epsilon_1}{3}.
	\end{equation*}
	We now set $M_1=M^{(1)}_{m_1}$ and get that
	\begin{align*}
		\left| \risk^{H_{M_1}\ast}_{\ell_{M_1},P^{[M_1]}} - \risk_{\ell,P}^{H_k\ast} \right|  
		& \leq \left| \risk^{H_{M^{(1)}_{m_1}}\ast}_{\ell_{M^{(1)}_{m_1}},P^{[M^{(1)}_{m_1}]}}  - \risk^{H_{M^{(1)}_{m_1}}\ast}_{\ell_{M^{(1)}_{m_1}},P^{[M^{(1)}_{m_1}]},\lambda_1} \right|
			+ \left| \risk^{H_{M^{(1)}_{m_1}}\ast}_{\ell_{M^{(1)}_{m_1}},P^{[M^{(1)}_{m_1}]},\lambda_1} - \risk_{\ell,P,\lambda_1}^{H_k\ast} \right| \\
		& \hspace{1cm} + 	\left| \risk_{\ell,P,\lambda_1}^{H_k\ast}  - \risk_{\ell,P}^{H_k\ast} \right| \\
        & \leq A_2^{[M_m^{(1)}]}(\lambda_1) + \frac{\epsilon_1}{3} + A_2(\lambda_1) \\
		& \leq \epsilon_1.
	\end{align*}
 
	We can now repeat the argument from above inductively: Suppose we have constructed our subsequence up to $n\in\Np$, i.e., $M_1,\ldots,M_n$. 
    Choose $\lambda'\in\Rp$ such that $A_2(\lambda)\leq \frac{\epsilon_{n+1}}{3}$ for all $0<\lambda\leq \lambda'$ (exists due to continuity),
    and $\lambda''\in\Rp$ such that for all $M\in\Np$ we have $A_2^{[M]}(\lambda)\leq \frac{\epsilon_{n+1}}{3}$ for all $0<\lambda\leq \lambda''$ (using Assumption \ref{assumpt.approx_error_func}).
    Define now $\lambda_{n+1}=\min\{\lambda', \lambda''\}$, and observe that $\lambda_{n+1}>0$.
    Proposition \ref{prop.conv_inf_sample_svm} ensures the existence 
    of a strictly increasing sequence $\left(M^{(n+1)}_m\right)_m$ such that
        \begin{equation*}
        \risk^{H_{M^{(n+1)}_m}\ast}_{\ell_{M^{(n+1)}_m},P^{[M^{(n+1)}_m]},\lambda_{n+1}}
        \rightarrow
        \risk^{H_k\ast}_{\ell,P,\lambda_{n+1}}
    \end{equation*}
    for $m\rightarrow\infty$.
    Choose $m_{n+1}$ such that for all $m\geq m_{n+1}$ we have
    \begin{equation*}
        \left|  \risk^{H_{M^{(n+1)}_m}\ast}_{\ell_{M^{(n+1)}_m},P^{[M^{(n+1)}_m]},\lambda_{n+1}}
        - 
         \risk^{H_k\ast}_{\ell,P,\lambda_{n+1}} \right| \leq \frac{\epsilon_{n+1}}{3}.
    \end{equation*}
    Define now $M_{n+1}=\max\{ M_n + 1, M^{(n+1)}_{m_{n+1}}\}$, then we get
	\begin{align*}
		\left|  \risk^{H_{M_{n+1}}\ast}_{\ell_{M_{n+1}},P^{[M_{n+1}]}}
            - 
        \risk_{\ell,P}^{H_k\ast} \right|  
		& \leq \left|   \risk^{H_{M^{(n+1)}_{m_{n+1}}}\ast}_{\ell_{M^{(n+1)}_{m_{n+1}}},P^{[M^{(n+1)}_{m_{n+1}}]}}
                - \risk^{H_{M^{(n+1)}_{m_{n+1}}}\ast}_{\ell_{M^{(n+1)}_{m_{n+1}}},P^{[M^{(n+1)}_{m_{n+1}}]},\lambda_{n+1}} \right| \\
		      & \hspace{1cm} + \left| \risk^{H_{M^{(n+1)}_{m_{n+1}}}\ast}_{\ell_{M^{(n+1)}_{m_{n+1}}},P^{[M^{(n+1)}_{m_{n+1}}]},\lambda_{n+1}} 
                - \risk_{\ell,P,\lambda_{n+1}}^{H_k\ast} \right| \\
		      & \hspace{1cm} + \left|\risk_{\ell,P,\lambda_{n+1}}^{H_k\ast}  - \risk_{\ell,P}^{H_k\ast} \right| \\
        & \leq A_2^{M_{m_{n+1}}^{(n+1)}}(\lambda_{n+1}) + \frac{\epsilon_{n+1}}{3} + A_2(\lambda_{n+1}) \\
		& \leq \epsilon_{n+1}.
	\end{align*}
    The resulting sequence $(M_n)_n$ fulfills then
    \begin{equation*}
        \risk_{\ell_{M_n},P^{[M_n]}}^{H_{M_n}\ast} \rightarrow \risk_{\ell,P}^{H_k\ast}
    \end{equation*}
    for $n\rightarrow \infty$.
\end{proof}

\section{Additional technical results}
In this section we state and prove two technical results that play an important role in the proofs of the main results.
\subsection{A characterization of RKHS functions} \label{sec.charact_rkhs}
Here we recall the following characterization of RKHS functions from \cite[Section~I.4]{atteia1992hilbertian}.
Let $\inputset{X}\not=\emptyset$ be arbitrary. For $k: \inputset{X}\times\inputset{X}\rightarrow\R$ symmetric and positive semidefinite and some $f\in\R^\inputset{X}$ as well as $N\in\Np$, $\vec x \in \inputset{X}^N$, $\vec \alpha\in\R^N$ define
\begin{align*}
\evalop(\vec x, \vec \alpha, f) & = \sum_{n=1}^N \alpha_n f(x_n) \\
\weightop(\vec x, \vec \alpha, k) & = \sqrt{\sum_{i,j=1}^N \alpha_i \alpha_j k(x_j, x_i)},
\end{align*}
where we might omit some arguments if they are clear. Furthermore, define
\begin{equation*}
\weightedevalop(\vec x, \vec \alpha, f, k) =
\begin{cases}
\frac{\evalop(\vec x, \vec \alpha, f)}{\weightop(\vec x, \vec \alpha, k)} & \text{if } \evalop(\vec x, \vec \alpha, f) \not = 0, \weightop(\vec x, \vec \alpha, k) \not = 0 \\
0 & \text{if } \evalop(\vec x, \vec \alpha, f) = \weightop(\vec x, \vec \alpha, k) = 0 \\
\infty & \text{if } \evalop(\vec x, \vec \alpha, f) \not = 0, \weightop(\vec x, \vec \alpha, k) = 0
\end{cases}
\end{equation*}
and
\begin{equation*}
\normop(f,k) = \sup_{\substack{(\vec x, \vec \alpha) \in \inputset{X}^N \times \R^N\\N \in \Np}} \weightedevalop(\vec x, \vec \alpha, f, k).
\end{equation*}
We collect now some simple facts that will be used repeatedly.

Let $\vec{x}\in \inputset{X}^N$, $\vec{\alpha}\in\R^N$, $N\in\Np$, be arbitrary, and define
\begin{equation*}
    f = \sum_{n=1}^N \alpha_n k(\cdot,x_n) \in H_k^\text{pre}.
\end{equation*}
\begin{enumerate}
    \item By construction, $\weightop(\vec x, \vec \alpha, k)\in\Rnn$ (recall that $k$ is positive semidefinite).
    \item Since $f\in H_k^\text{pre}$, its RKHS norm has an explicit form and we find
    \begin{equation*}
        \|f\|_k = \sqrt{\sum_{i,j=1}^N \alpha_i \alpha_j k(x_j, x_i)} = \weightop(\vec x, \vec \alpha, k).
    \end{equation*}
    This also implies that $f\equiv 0$ if and only if $\weightop(\vec x, \vec \alpha, k)=0$.
    \item If $\weightop(\vec x, \vec \alpha, k)>0$, then
    \begin{align*}
        \weightedevalop(\vec x, \vec \alpha, f, k) 
            & = \frac{\evalop(\vec x, \vec \alpha, f)}{\weightop(\vec x, \vec \alpha, k)} \\
        & = \frac{\sum_{i=1}^N \alpha_i f(x_i)}{\sqrt{\sum_{i,j=1}^N \alpha_i \alpha_j k(x_j, x_i)}} \\
        & = \frac{\sum_{i,j=1}^N \alpha_i \alpha_j k(x_j, x_i)}{\sqrt{\sum_{i,j=1}^N \alpha_i \alpha_j k(x_j, x_i)}} \\
        & = \frac{\weightop(\vec x, \vec \alpha, k)^2}{\weightop(\vec x, \vec \alpha, k)}
        = \weightop(\vec x, \vec \alpha, k).
    \end{align*}
\end{enumerate}
We can now state the characterization result.
\begin{theorem} \label{thm.charact_rkhs_funcs}
Let $k: \inputset{X} \times \inputset{X} \rightarrow \R$ be a kernel and $f \in \R^\inputset{X}$.
Then $f \in H_k$ if and only if $\normop(f,k)<\infty$.
If $f\in H_k$, then $\|f\|_k=\normop(f,k)$.
\end{theorem}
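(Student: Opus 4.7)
The plan is to prove the two implications separately and then observe that the resulting inequalities pinch $\|f\|_k = \normop(f,k)$.

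First I would handle the forward direction: assume $f \in H_k$. Fix an arbitrary $(\vec x, \vec\alpha) \in \inputset{X}^N \times \R^N$ and set $g := \sum_{n=1}^N \alpha_n k(\cdot, x_n) \in \Hpre{k} \subseteq H_k$. The explicit inner product formula on $\Hpre{k}$ gives $\|g\|_k = \weightop(\vec x, \vec\alpha, k)$, and the reproducing property yields $\evalop(\vec x, \vec\alpha, f) = \sum_n \alpha_n \langle f, k(\cdot,x_n)\rangle_k = \langle f, g\rangle_k$. Cauchy--Schwarz then gives $|\evalop(\vec x,\vec\alpha,f)| \leq \|f\|_k \cdot \weightop(\vec x,\vec\alpha,k)$. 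When $\weightop > 0$ this yields $\weightedevalop \leq \|f\|_k$; when $\weightop = 0$, we have $g \equiv 0$ in $H_k$ and hence $\evalop = \langle f, g\rangle_k = 0$, so $\weightedevalop = 0$ by definition. Taking the supremum over all $(\vec x, \vec\alpha)$ gives $\normop(f,k) \leq \|f\|_k < \infty$.

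For the backward direction, suppose $\normop(f,k) < \infty$. I would define a linear functional $L : \Hpre{k} \to \R$ by $L(g) := \evalop(\vec x, \vec\alpha, f)$ whenever $g = \sum_{n=1}^N \alpha_n k(\cdot, x_n)$. The delicate step is well-definedness: if $g \equiv 0$, then $\|g\|_k = \weightop(\vec x, \vec\alpha, k) = 0$, so finiteness of $\normop(f,k)$ rules out the $\infty$-case in the definition of $\weightedevalop$, forcing $\evalop(\vec x,\vec\alpha,f) = 0$. Hence $L$ depends only on $g$ and not on the representation. Linearity is immediate, and the estimate $|L(g)| \leq \normop(f,k) \cdot \weightop = \normop(f,k) \cdot \|g\|_k$ shows $L$ is bounded with operator norm at most $\normop(f,k)$. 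Since $\Hpre{k}$ is dense in $H_k$, $L$ extends uniquely to a bounded linear functional on $H_k$ with the same norm, and the Riesz representation theorem produces $\tilde f \in H_k$ with $\|\tilde f\|_k \leq \normop(f,k)$ and $L(h) = \langle \tilde f, h\rangle_k$ for all $h \in H_k$. Testing against $h = k(\cdot, x)$ and using the reproducing property gives $f(x) = L(k(\cdot,x)) = \langle \tilde f, k(\cdot,x)\rangle_k = \tilde f(x)$ for every $x \in \inputset{X}$, so $f = \tilde f \in H_k$ with $\|f\|_k \leq \normop(f,k)$.

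Combining the two inequalities gives $\|f\|_k = \normop(f,k)$ whenever $f \in H_k$, which completes the proof. The main obstacle is the well-definedness of $L$: it is precisely this point that explains why $\weightedevalop$ is defined with the $\infty$-convention in the degenerate case $\weightop = 0,\ \evalop \neq 0$, since finiteness of $\normop(f,k)$ is exactly what forbids inconsistent assignments of $L$ across different formal representations of the same element of $\Hpre{k}$. Once this consistency is secured, the remainder is a textbook Cauchy--Schwarz plus Riesz representation argument.
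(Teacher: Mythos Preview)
Your proof is correct and follows the same overall architecture as the paper's: Cauchy--Schwarz for the forward bound $\normop(f,k)\le\|f\|_k$, then a well-defined bounded linear functional on $\Hpre{k}$ extended by density and represented via Riesz for the converse. The one genuine difference is how you obtain the reverse inequality $\|f\|_k\le\normop(f,k)$. The paper proves this directly in its Step~1 by picking $f_\epsilon\in\Hpre{k}$ close to $f$ and computing $\weightedevalop$ at the coefficients of $f_\epsilon$; you instead read it off from the Riesz step, since the representing element $\tilde f$ satisfies $\|\tilde f\|_k=\|L\|_{\mathrm{op}}\le\normop(f,k)$ and $\tilde f=f$. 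Your route is slightly more economical and avoids the approximation computation entirely; the paper's route has the minor advantage that the norm identity is established before one ever builds the functional. One small point you leave implicit (and the paper does too): the bound $|L(g)|\le\normop(f,k)\,\|g\|_k$ with absolute value requires noting that $\weightop(\vec x,-\vec\alpha,k)=\weightop(\vec x,\vec\alpha,k)$ while $\evalop$ flips sign, so the supremum defining $\normop$ already controls $|\evalop|/\weightop$.
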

For convenience, we provide a full self-contained proof of this result.
\begin{proof}
\textbf{Step 1} First, we show that for $f\in H_k$, we have $\|f\|_k = \normop(f,k)$.

\emph{$\normop(f,k)\leq\|f\|_k$:} Let $N\in\Np$ and $(\vec x, \vec \alpha)\in \inputset{X}^N \times \R^N$ be arbitrary.
Observe that
\begin{align*}
\evalop(\vec x, \vec \alpha, f) & = \sum_{n=1}^N \alpha_n f(x_n) \\
& = \sum_{n=1}^N \alpha_n \langle f, k(\cdot, x_n) \rangle_k \\
& = \langle f, \sum_{n=1}^N \alpha_n k(\cdot, x_n) \rangle_k \\
& \leq \|f\|_k \left\|\sum_{n=1}^N \alpha_n k(\cdot, x_n)\right\|_k \\
& = \|f\|_k \weightop(\vec x, \vec \alpha, k).
\end{align*}
If $\weightop(\vec x, \vec \alpha, k)= \|\sum_{n=1}^N \alpha_n k(\cdot, x_n)\|_k=0$, then
$\sum_{n=1}^N \alpha_n k(\cdot, x_n) = 0_{H_k}$, hence 
$\evalop(\vec x, \vec \alpha, f)=\langle f, 0_{H_k}\rangle_k = 0$ and by definition $\weightedevalop(\vec x, \vec \alpha, f, k)=0\leq \|f\|_k$.

If $\weightop(\vec x, \vec \alpha, k)>0$, we can rearrange to get
\begin{equation*}
\frac{\evalop(\vec x, \vec \alpha, f)}{\weightop(\vec x, \vec \alpha, k)} = \weightedevalop(\vec x, \vec \alpha, f, k) \leq \|f\|_k.
\end{equation*}
Since $(\vec x, \vec \alpha)$ was arbitrary, we find that $\normop(\vec x, \vec \alpha, f, k)\leq \|f\|_k$.

\emph{$\normop(f,k)\geq\|f\|_k$:} Let $\epsilon>0$ and choose $f_\epsilon = \sum_{n=1}^N \alpha_n k(\cdot, x_n) \in \Hpre{k}$ such that $\|f-f_\epsilon\|_k<\epsilon$.
If $\weightop(\vec x, \vec \alpha, k)=\|f_\epsilon\|_k=0$, then $f_\epsilon=0_{H_k}$ and hence $\evalop(\vec x, \vec \alpha, f)=\langle f, f_\epsilon\rangle_k=\langle f, 0_{H_k} \rangle_k = 0$. By definition, this then shows
\begin{equation*}
\weightedevalop(\vec x, \vec \alpha, f) = 0 = \|f_\epsilon\|_k \geq \|f\|_k - \epsilon.
\end{equation*}

Before we continue, note that for all $f_1,f_2 \in H_k$ we have
\begin{align*}
|\evalop(\vec x, \vec \alpha, f_1) - \evalop(\vec x, \vec \alpha, f_2)| 
& = \left| \sum_{n=1}^N \alpha_n (f_1(x_n) - f_2(x_n)) \right| \\
& = \left| \sum_{n=1}^N \alpha_n \langle f_1 - f_2, k(\cdot, x_n) \rangle_k \right| \\
& = \left| \langle f_1 - f_2, \sum_{n=1}^N \alpha_n k(\cdot, x_n) \rangle_k \right| \\
& \leq \| f_1 - f_2 \|_k \| f_\epsilon \|_k.
\end{align*}
Assume now that $\weightop(\vec x, \vec \alpha, k) >0$, then we get
\begin{align*}
\weightedevalop(\vec x, \vec \alpha, f, k) & = \frac{\evalop(\vec x, \vec \alpha, f)}{\weightop(\vec x, \vec \alpha, k)} \\
& \geq \frac{\evalop(\vec x, \vec \alpha, f_\epsilon)}{\weightop(\vec x, \vec \alpha, k)} 
	- \frac{\|f-f_\epsilon\|_k\|f_\epsilon\|_k}{\weightop(\vec x, \vec \alpha, k)} \\
& \geq  \frac{\evalop(\vec x, \vec \alpha, f_\epsilon)}{\weightop(\vec x, \vec \alpha, k)} 
	- \frac{\epsilon \|f_\epsilon\|_k}{\weightop(\vec x, \vec \alpha, k)} \\
& =  \weightop(\vec x, \vec \alpha, k) - \epsilon \\
& = \|f_\epsilon\|_k - \epsilon \\
& \geq \|f\|_k - 2\epsilon
\end{align*}
Altogether, by definition of $\normop(f,k)$, we get that
\begin{equation*}
\normop(f,k) \geq \weightedevalop(\vec x, \vec \alpha, f, k) \geq \|f\|_k - 2\epsilon.
\end{equation*}
Since $\epsilon>0$ was arbitrary, we find that $\normop(f,k) \geq \|f\|_k$.

\textbf{Step 2} Let $f\in\R^\inputset{X}$ be arbitrary. We show that if $\normop(f,k)<\infty$, then
\begin{align*}
\ell_f&: \Hpre{k} \rightarrow \R \\
& \sum_{n=1}^N \alpha_n k(\cdot, x_n) \mapsto \sum_{n=1}^N \alpha_n f(x_n)
\end{align*}
is a well-defined, linear and continuous (w.r.t. $\|\cdot\|_k$) map.

To establish the \emph{well-posedness}, let $(\vec x, \vec \alpha)\in\inputset{X}^N\times\R^N$ and $(\vec y, \vec \beta)\in\inputset{X}^M\times\R^M$ such that
\begin{equation*}
\sum_{n=1}^N \alpha_n k(\cdot, x_n) = \sum_{m=1}^M \beta_m k(\cdot, y_m) \in \Hpre{k}.
\end{equation*}
This implies that
\begin{equation*}
\sum_{n=1}^N \alpha_n k(\cdot, x_n) + \sum_{m=1}^M (-\beta_m) k(\cdot, y_m) = 0_{H_k}
\end{equation*}
and hence $\weightop((\vec x, \vec y), (\vec \alpha, -\vec \beta), k)=\|\sum_{n=1}^N \alpha_n k(\cdot, x_n) + \sum_{m=1}^M (-\beta_m) k(\cdot, y_m)\|_k = 0$.
Assume now that 
\begin{equation*}
\sum_{n=1}^N \alpha_n f(x_n) \not = \sum_{m=1}^m \beta_m f(x_m),
\end{equation*}
then we get that
\begin{equation*}
\sum_{n=1}^N \alpha_n f(x_n) + \sum_{m=1}^m (-\beta_m)f(x_m) = \evalop((\vec x, \vec y), (\vec \alpha, -\vec \beta),f) \not = 0
\end{equation*}
which by definition implies that $\weightedevalop((\vec x, \vec y), (\vec \alpha, -\vec \beta), f, k)=\infty$ and therefore $\normop(f,k)=\infty$, a contradiction.

The \emph{linearity} is then clear. Finally, to show the \emph{continuity}, let $\Hpre{k}\ni f_0 = \sum_{n=1}^N \alpha_n k(\cdot, x_n)$ be arbitrary and set $\vec x = \begin{pmatrix} x_1 & \cdots & x_N \end{pmatrix}$,
$\vec \alpha = \begin{pmatrix} \alpha_1 & \cdots & \alpha_N \end{pmatrix}$, then
\begin{align*}
|\ell_f(f_0)| & = \left| \sum_{n=1}^N \alpha_n f(x_n) \right| \\
& = |\evalop(\vec x, \vec \alpha, f)| \\
& \leq \normop(f,k) \weightop(\vec x, \vec \alpha, k) \\
& = \normop(f,k) \|f_0\|_k.
\end{align*}
Since $\normop(f,k)$ is finite and independent of $f_0$, and $\ell_f$ is a linear map, this shows the continuity of $\ell_f$.

\textbf{Step 3} Let $f\in\R^\inputset{X}$ such that $\normop(f,k)<\infty$. Since according to Step 2 $\ell_f$ is a linear and continuous map on $\Hpre{k}$ and the latter is dense in $H_k$, there exists a unique linear and continuous extension $\bar\ell_f:H_k\rightarrow\R$ of $\ell_f$.
Furthermore, from the Riesz Representation Theorem there exists a unique $\hat{f}\in H_k$ with $\bar\ell_f = \langle \cdot, \hat{f} \rangle_k$. 
For all $x\in \inputset{X}$ we then get
\begin{align*}
\hat{f}(x) & = \langle \hat f, k(\cdot, x)\rangle_k \\
& = \langle k(\cdot, x), \hat f\rangle_k \\
& = \bar\ell_f(k(\cdot,x)) \\
& = \ell_f(k(\cdot, x)) \\
& = f(x), 
\end{align*}
hence $f=\hat f \in H_k$.
\end{proof}

\subsection{A $\Gamma$-convergence argument}
We use repeatedly the concept of $\Gamma$-convergence, see for example \cite{dalMasoGammaConv}.
For convenience, in this section we summarize the well-known and standard main argument, roughly following \cite[Chapter~5]{bongini2016sparse}.
\begin{defn}
	Let $F_M: H_M \rightarrow \Rx$ and $F: H_k \rightarrow \Rx$. We say that $F_M$ $\Gamma$-converges to $F$ and write $F_M \convmode{\Gamma} F$, if
	\begin{enumerate}
		\item For all sequences $(f_M)_M$, $f_M\in H_M$, with $f_M \convmode{\Pone} f$ for some $f\in H_k$, we have
			\begin{equation*}
				F(f) \leq \liminf_M F_M(f_M).
			\end{equation*}
		\item For all $f\in H_k$ there exists a sequence $(f_M)_M$ with $f_M\in H_M$ such that $f_M \convmode{\Pone} f$ and
			\begin{equation*}
				F(f) \geq \limsup_M F_M(f_M).
			\end{equation*}
	\end{enumerate}
\end{defn}
The sequence in the second item is commonly called a \defm{recovery sequence} (for $f$).
\begin{proposition} \label{prop.gamma_conv_mfl}
	Let $F_M \convmode{\Gamma} F$ and $f^\ast_M \in \argmin_{f \in H_M} F_M(f)$ for all $M\in \N$ %
	(in particular, all the minima are attained).
	If there exists $B\in\Rnn$ such that $\|f^\ast_M\|_M\leq B$ for all $M\in\N$, then there exists a subsequence $(f^\ast_{M_\ell})_\ell$ and $f^\ast \in H_k$ such that $f^\ast_{M_\ell} \convmode{\Pone} f^\ast$. Furthermore, $F_{M_\ell}(f^\ast_{M_\ell}) \rightarrow F(f^\ast)$.
\end{proposition}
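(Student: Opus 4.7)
The plan is to follow the classical compactness-plus-$\Gamma$-convergence scheme, now combined with the mean field compactness result from Theorem \ref{thm.mfl_rkhs}, item 2, which plays the role that sequential (weak) compactness plays in the standard finite-dimensional $\Gamma$-convergence argument.

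First, I would use the uniform norm bound $\|f^\ast_M\|_M\leq B$ as input to Theorem \ref{thm.mfl_rkhs}, item 2. This produces a subsequence $(f^\ast_{M_\ell})_\ell$ and some $f^\ast\in H_k$ with $\|f^\ast\|_k\leq B$ such that $f^\ast_{M_\ell}\convmode{\Pone} f^\ast$. This is the only place where the norm boundedness assumption is used, and it delivers both the subsequence extraction and the membership $f^\ast\in H_k$ required by the statement.

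Next, I would pass to the value convergence. Applying the $\Gamma$-$\liminf$ condition to the converging subsequence gives
\begin{equation*}
    F(f^\ast) \leq \liminf_{\ell\to\infty} F_{M_\ell}(f^\ast_{M_\ell}).
\end{equation*}
To get the matching $\limsup$ bound, I would invoke the $\Gamma$-$\limsup$ condition applied to $f^\ast$ itself: there exists a recovery sequence $(g_M)_M$ with $g_M\in H_M$ and $g_M\convmode{\Pone} f^\ast$ such that $\limsup_{M\to\infty} F_M(g_M)\leq F(f^\ast)$. Restricting to the subsequence indices $M_\ell$ and using that $f^\ast_{M_\ell}$ is a minimizer of $F_{M_\ell}$, so $F_{M_\ell}(f^\ast_{M_\ell})\leq F_{M_\ell}(g_{M_\ell})$, I obtain
\begin{equation*}
    \limsup_{\ell\to\infty} F_{M_\ell}(f^\ast_{M_\ell})
    \leq \limsup_{\ell\to\infty} F_{M_\ell}(g_{M_\ell})
    \leq \limsup_{M\to\infty} F_M(g_M) \leq F(f^\ast).
\end{equation*}
Chaining these two inequalities gives $\limsup \leq F(f^\ast) \leq \liminf$, so the limit exists and $F_{M_\ell}(f^\ast_{M_\ell}) \to F(f^\ast)$, which is exactly the claim.

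There is no real obstacle beyond making sure that the objects line up correctly: the only nontrivial ingredient is Theorem \ref{thm.mfl_rkhs} (item 2), which is already proved, and it guarantees both that the limit sits inside $H_k$ and that one can pass to a subsequence along which the $\Gamma$-$\liminf$ and $\Gamma$-$\limsup$ conditions can be combined. One minor subtlety worth a sentence in the write-up is that the $\limsup$ along the subsequence is at most the $\limsup$ of the full sequence, which is what lets us use the recovery sequence defined on all $M$ against the extracted subsequence. As a byproduct of the argument, $f^\ast$ is actually a minimizer of $F$ over $H_k$ (take any competitor $g\in H_k$, use its recovery sequence, and repeat the chain of inequalities with $F(g)$ on the right), but since the statement only asks for value convergence, this does not need to be emphasized beyond what is used in the recovery-sequence step above.
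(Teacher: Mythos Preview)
Your proposal is correct and follows essentially the same standard $\Gamma$-convergence scheme as the paper: compactness via Theorem \ref{thm.mfl_rkhs} (item 2), then the $\liminf$ inequality on the extracted subsequence combined with a recovery sequence for $f^\ast$ and the minimizing property of $f^\ast_{M_\ell}$ to pin down the value convergence. The paper additionally spells out that $f^\ast$ is a minimizer of $F$ before turning to value convergence, which you correctly identify as a byproduct; otherwise the arguments coincide.
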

\begin{proof}
	From Theorem \ref{thm.mfl_rkhs} we get the existence of $(f^\ast_{M_\ell})_\ell$ and $f^\ast \in H_k$, and that $f^\ast_{M_\ell} \convmode{\Pone} f^\ast$.
	Let $f\in H_k$ be arbitrary and let $(f_M)_M$ be a recovery sequence for $f$. We then have
	\begin{align*}
		F(f) & \geq \limsup_M F_M(f_M) \\
		& \geq \limsup_{M_\ell} F_{M_\ell}(f_{M_\ell}) \\
		& \geq \liminf_{M_\ell} F_{M_\ell}(f_{M_\ell}) \\
		& \geq \liminf_{M_\ell} F_{M_\ell}(f^\ast_{M_\ell}) \\
		& \geq F(f^\ast),
	\end{align*}
	where we used the $\limsup$-inequality of $\Gamma$-convergence in the first step,
	standard properties of $\limsup$ and $\liminf$ in the second and third step,
	the fact that $f^\ast_{M_\ell}$ is a minimizer of $F_{M_\ell}$ in the fourth step,
	and finally the $\liminf$-inequality of $\Gamma$-convergence.
	Since $f\in H_k$ was arbitrary, this shows that $f^\ast$ is a minimizer of $F$.

	Furthermore, let $(f_M)_M$ be a recovery sequence for $f^\ast$, then 
	\begin{align*}
		F(f^\ast) & \geq \limsup_M F_M(f_M) \\
		& \geq \limsup_{\ell} F_{M_\ell}(f_{M_\ell}) \\
		& \geq \limsup_{\ell} F_{M_\ell}(f^\ast_{M_\ell}),
	\end{align*}
	where we used the $\limsup$-inequality in the first step,
	an elementary property of $\limsup$ in the second step,
	and finally that $f^\ast_{M_\ell}$ is a minimizer of $F_{M_\ell}$.
	Since $f^\ast_{M_\ell} \convmode{\Pone} f^\ast$, the $\liminf$-inequality of $\Gamma$-convergence implies that
	\begin{equation*}
		F(f^\ast) \leq \liminf_\ell F_{M_\ell}(f^\ast_{M_\ell}),
	\end{equation*}
	so we find that
	\begin{equation*}
		\limsup_{\ell} F_{M_\ell}(f^\ast_{M_\ell}) \leq F(f^\ast) \leq \liminf_\ell F_{M_\ell}(f^\ast_{M_\ell}),
	\end{equation*}
	establishing that  $F_{M_\ell}(f^\ast_{M_\ell}) \rightarrow F(f^\ast)$.
\end{proof}

\end{document}